%% file: main.tex
\documentclass{article}

\usepackage{iclr2027_conference,times}
\usepackage[T1]{fontenc}
\usepackage[utf8]{inputenc}
\usepackage{microtype}
\usepackage{needspace}
\usepackage{amsmath,amssymb,amsfonts,amsthm,mathtools,bm}
\usepackage{aliascnt}
\usepackage{booktabs,tabularx,array,float}
\usepackage{algorithm,algpseudocode}
\usepackage{graphicx}
\usepackage[caption=false,font=footnotesize]{subfig}
\usepackage{arydshln}
\usepackage{xcolor}
\usepackage[colorlinks=true,allcolors=blue]{hyperref}
\usepackage[capitalize,noabbrev,nameinlink]{cleveref}

\iclrfinalcopy

\theoremstyle{plain}
\newtheorem{theorem}{Theorem}[section]
\newaliascnt{lemma}{theorem}
\newtheorem{lemma}[lemma]{Lemma}
\aliascntresetthe{lemma}
\newaliascnt{proposition}{theorem}
\newtheorem{proposition}[proposition]{Proposition}
\aliascntresetthe{proposition}
\newaliascnt{corollary}{theorem}

\aliascntresetthe{corollary}
\theoremstyle{definition}
\newaliascnt{definition}{theorem}

\aliascntresetthe{definition}
\newaliascnt{assumption}{theorem}
\newtheorem{assumption}[assumption]{Assumption}
\aliascntresetthe{assumption}
\crefname{assumption}{Assumption}{Assumptions}
\Crefname{assumption}{Assumption}{Assumptions}
\theoremstyle{plain}
\newaliascnt{remark}{theorem}

\aliascntresetthe{remark}
\newtheorem*{remark*}{Remark}
\usepackage{thmtools,thm-restate}

\numberwithin{equation}{section}
\renewcommand{\theHequation}{\thesection.\arabic{equation}}

\newcommand{\R}{\mathbb{R}}
\newcommand{\E}{\mathbb{E}}
\newcommand{\Class}{\mathcal C}
\newcommand{\norm}[1]{\left\lVert #1\right\rVert}
\newcommand{\inner}[2]{\left\langle #1,#2\right\rangle}
\newcommand{\ip}[2]{\inner{#1}{#2}}
\newcommand{\trans}{\mathsf T}
\newcommand{\eps}{\varepsilon}
\DeclareRobustCommand{\SGDAAlt}{SGDA\textsubscript{Alt}}
\DeclareRobustCommand{\SGDASim}{SGDA\textsubscript{Sim}}
\newcolumntype{Y}{>{\raggedright\arraybackslash}X}
\DeclareMathOperator{\dist}{dist}
\DeclareMathOperator*{\argmax}{arg\,max}

\title{Stochastic Gradient Descent Ascent is\\
Suboptimal for Nonconvex-P{\L} Min-Max Games}
\author{Junsoo Ha \\
\texttt{junsoo.ha.contact@gmail.com}}
\hypersetup{
  pdftitle={Stochastic Gradient Descent Ascent is Suboptimal for Nonconvex-P{\L} Min-Max Games},
  pdfauthor={Junsoo Ha}
}
\begin{document}
% Restrict the table of contents below to the appendix.
\addtocontents{toc}{\protect\setcounter{tocdepth}{-1}}
\maketitle

\begin{abstract}
How far can stochastic gradient descent ascent (SGDA) go by tuning its timescale ratio and step sizes in nonconvex min-max games? We answer this question for nonconvex-P{\L} (NC-P{\L}) games by establishing the first tight complexity of two-timescale SGDA with a fixed timescale ratio and non-increasing step sizes. For $\ell$-smooth games with an inner $\mu$-P{\L} inequality, we prove a complexity lower bound $\Omega(\kappa^2\ell\eps^{-2}+\kappa^4\ell\sigma^2\eps^{-4})$, where $\kappa=\ell/\mu$ is the condition number, $\sigma^2$ is the gradient variance, and $\eps$ measures the outer gradient norm. This matches existing SGDA upper bounds and establishes a complexity separation from Smoothed-AGDA \citep{yang2022faster}. In addition, we show that SGDA can fail to find a stationary point when its timescale ratio is as small as $o(\kappa^2)$. Our negative results highlight the fundamental limitation of SGDA in NC-P{\L} games, and justify the development of alternative methods.
\end{abstract}

\input{sections/introduction}
\input{sections/related_work}
\input{sections/setup}
% Defer the comparison table to the top of page 4.
\suppressfloats[t]
\input{sections/comparison_table}
\input{sections/main_results}
\input{sections/high_ratio}
\input{sections/low_ratio}
\input{sections/conclusion}
% The editing preview copies pages through this marker from the full PDF.
\label{page:main-text-end}

% ICLR requires this statement but excludes it from the main-text limit.
\clearpage
\section*{AI Use Statement}
Generative AI tools were used to brainstorm and refine theoretical constructions, formulate and check proof strategies, survey relevant literature, draft and edit mathematical exposition, and prepare and check the \LaTeX{} source. The authors are responsible for the final content, including all AI-assisted text, claims, proofs, and artifacts.

\section*{Reproducibility Statement}
All assumptions and algorithms are stated in \cref{sec:setup}. Appendix~\ref{app:main-result} contains the proofs of the main theorem, the high- and low-ratio lower bounds, and the matching \SGDASim{} upper bound. Appendices~\ref{app:high-ratio} and~\ref{app:low-ratio} prove the construction lemmas. Together, these proofs give the construction formulas, parameter choices, oracle definitions, initialization checks, and arguments for \SGDASim{} and \SGDAAlt{}.

\bibliographystyle{iclr2027_conference}
\bibliography{references}

\clearpage
\appendix
\addtocontents{toc}{\protect\setcounter{tocdepth}{2}}
\begingroup
\renewcommand{\contentsname}{Appendix contents}
\tableofcontents
\endgroup
\clearpage
\input{appendices/main_result}

\input{appendices/preliminaries}
\input{appendices/high_ratio_proofs}
\input{appendices/low_ratio_proofs}
% Focused comparison supporting the output and initialization alignment in Section 4.
\input{appendices/smoothed_comparison}
\input{appendices/table_comparison}
% Broader historical comparison notes remain uncompiled: appendices/related_work.tex.

\end{document}

%% file: sections/introduction.tex
\section{Introduction}
\label{sec:introduction}

Nonconvex min-max games have become a cornerstone of modern machine learning, underpinning generative adversarial networks (GANs) \citep{goodfellow2014generative}, distributionally robust learning \citep{sinha2018certifying}, and language model alignment \citep{liu2026generative,xu2025robust}. Without additional structure, however, even finding a stationary point of a min-max game can be computationally intractable \citep{daskalakis2021complexity,bernasconi2026complexity}. For instance, games often exhibit limit cycles \citep{hsieh2021limits} and chaotic behavior \citep{sato2002chaos}. Such pathological examples motivate the study of \emph{structured} nonconvex min-max games \citep{lin2020gda,lin2025two,ha2026high}. In particular, nonconvex-P{\L} (NC-P{\L}) games assume a Polyak--\L{}ojasiewicz (P{\L}) inequality \citep{polyak1963gradient} on the inner maximization problem and have been actively studied as one of the weakest forms of structure \citep{nouiehed2019solving,yang2022faster,cho2023sgda,ha2026high,cai2026bias}.

Even in NC-P{\L} games, finding a Nash equilibrium \citep{nash1951noncooperative} can be still intractable; hence, we seek a stationary point of the outer function $\Phi(x):=\max_y f(x,y)$. Two-timescale SGDA is a direct analog of SGD for such min-max games: it uses a faster ascent step to track the outer gradient $\nabla\Phi(x)$ and performs a slower inexact descent on $\Phi$. For $\ell$-smooth nonconvex-$\mu$-P{\L} games with condition number $\kappa=\ell/\mu$ and gradient variance $\sigma^2$, SGDA admits a complexity upper bound $O(\kappa^2\ell\eps^{-2}+\kappa^4\ell\sigma^2\eps^{-4})$ for finding an $\eps$-stationary point \citep{yang2022faster,ha2026high}. Meanwhile, recent studies have developed more sophisticated techniques to attain sharper bounds \citep{yang2022faster,huang2025enhanced,cai2026bias}. This raises the following natural question:
\begin{center}
\itshape
Can SGDA obtain better complexity by tuning its timescale ratio and step sizes?
\end{center}

\Needspace{6\baselineskip}
\paragraph{Contributions.}
In this work, we answer this question negatively. Our contributions are twofold.
\begin{enumerate}
\renewcommand{\labelenumi}{\textup{(\roman{enumi})}}
\item \textbf{Suboptimality of SGDA.} We prove matching lower bounds for SGDA with a fixed timescale ratio and nonincreasing step sizes, and establish its suboptimality in NC-P{\L} games.
\item \textbf{Optimal timescale ratio.} Additionally, we prove that SGDA with a small timescale ratio $o(\kappa^2)$ can fail to find a stationary point, and that existing $\Theta(\kappa^2)$ timescale ratio is optimal.
\end{enumerate}

Sections~\ref{sec:related-work}--\ref{sec:main-results} cover related work, the setup, and main results; Sections~\ref{sec:high-ratio}--\ref{sec:low-ratio} prove the lower bounds; Section~\ref{sec:conclusion} discusses implications and future directions.

%% file: sections/related_work.tex
\section{Related work}
\label{sec:related-work}

\textbf{Nonconvex games}
are computationally intractable in general \citep{daskalakis2021complexity}, and even finding an approximate stationary point can be hard as well \citep{bernasconi2026complexity}. Such hardness results motivate structured games, including nonconvex-P{\L} (NC-P{\L}) \citep{nouiehed2019solving,yang2020global,yang2022faster,cho2023sgda} and nonconvex-concave (NC-C) games \citep{thekumparampil2019efficient,lin2020gda,lin2025two,ha2026high}, which assume a P{\L} inequality and concavity on inner maximization problems, respectively. Notably, these structures make finding stationary points computationally tractable and admit inexact-descent interpretations \citep{nouiehed2019solving,lin2020gda,lin2025two,ha2026high}. In particular, \citet{nouiehed2019solving} showed that taking multiple ascent steps per descent step suffices to find an approximate stationary point in NC-P{\L} games, and interpreted these updates as inexact descent on $\Phi(x)=\max_y f(x,y)$. \citet{lin2020gda,lin2025two} extended the same interpretation to two-timescale SGDA for nonconvex-strongly-concave (NC-SC) and NC-C games. On the other hand, weakly-convex-weakly-concave games \citep{liu2021first,grimmer2023landscape,hajizadeh2024linear} and nonmonotone variational inequalities \citep{mertikopoulos2019optimistic,diakonikolas2021efficient,lee2021fast,pethick2022escaping,bohm2023solving} have been studied as well, focusing on extragradient and optimistic gradient (OGDA), rather than SGDA.

\textbf{NC-P{\L} games.} \citet{yang2020global} gave the first convergence analysis of two-timescale alternating gradient descent ascent (GDA) for NC-P{\L} games. Subsequently, \citet{yang2022faster} extended the analysis to stochastic settings and introduced Smoothed-AGDA, attaining $O(\kappa\ell\eps^{-2}+\kappa^2\ell\sigma^2\eps^{-4})$ under the comparison conditions in Appendix~\ref{app:smoothing-alignment}. Meanwhile, \citet{huang2025enhanced} used momentum to attain an improved leading stochastic complexity of $\widetilde O((\kappa^3\ell\Delta\sigma+\kappa^4\sigma^3)\eps^{-3})$ under Lipschitz sample gradient assumptions; \citet{cai2026bias} obtained the same dependence on $\eps$ using Hessian-vector products to correct momentum bias. For finite-sum NC-P{\L} games, \citet{chen2022faster} and \citet{cho2023sgda} obtained better complexities through variance reduction and sampling without replacement, respectively. Finally, recent studies also established high-probability guarantees for Smoothed-AGDA \citep{laguel2024high} and SGDA \citep{ha2026high}.

\textbf{Two-timescale SGDA} exposes a timescale ratio hyperparameter $\rho$ that controls the size of ascent steps relative to descent steps.
\citet{fiez2021local} showed that a sufficiently large timescale ratio $\rho$ yields local linear convergence of simultaneous GDA to differential Stackelberg equilibria. For NC-SC games, \citet{li2022local} proved that $\rho=\Theta(\kappa)$ suffices for local convergence, whereas GDA with $\rho=o(\kappa)$ can fail to converge locally. On the other hand, existing global convergence guarantees for GDA and SGDA in NC-P{\L} games rely on a timescale ratio $\rho=\Theta(\kappa^2)$ \citep{fiez2021global,yang2022faster,ha2026high}. \citet{cho2023sgda} showed that an excessively large timescale ratio can worsen convergence rates of GDA in strongly-convex-strongly-concave (SC-SC) games, and raised the question whether $\rho=\Theta(\kappa^2)$ is necessary for global convergence in NC-P{\L} games.

\textbf{Lower bounds} for structured nonconvex min-max games have mainly addressed the minimax complexity of first-order methods. For instance, in NC-SC games, \citet{zhang2021complexity} proved a deterministic $\Omega(\sqrt{\kappa}\ell\Delta\eps^{-2})$ lower bound, and \citet{li2021complexity} proved a stochastic $\Omega(\sqrt{\kappa}\ell\Delta\eps^{-2}+\kappa^{1/3}\ell\Delta\sigma^2\eps^{-4})$ bound. More recently, \citet{pan2026lower} proved a sharper deterministic NC-P{\L} lower bound $\Omega(\kappa\ell\Delta\eps^{-2})$, establishing the minimax optimality of Smoothed-AGDA's leading complexity. In this work, we derive SGDA-specific lower bounds that match existing upper bounds \citep{yang2022faster,ha2026high}, and establish provable suboptimality of SGDA in NC-P{\L} games.

%% file: sections/setup.tex
\section{Problem setup}
\label{sec:setup}

We study the following min-max optimization problem:
\begin{equation}
 \min_{x\in\R^{d_x}}\max_{y\in\R^{d_y}} f(x,y),
 \label{eq:problem}
\end{equation}
where $f$ satisfies a smoothness assumption and $f(x,\cdot)$ satisfies a P{\L} inequality \citep{polyak1963gradient}.
\begin{assumption}[Smoothness]
\label{ass:smoothness}
The game $f:\R^{d_x}\times\R^{d_y}\to\R$ is differentiable and $\ell$-smooth:
\begin{align*}
 \norm{\nabla_x f(x_1,y_1)-\nabla_x f(x_2,y_2)}
 &\le\ell\bigl(\norm{x_1-x_2}+\norm{y_1-y_2}\bigr),
 &&\forall x_1,x_2\in\R^{d_x},\ y_1,y_2\in\R^{d_y},\\
 \norm{\nabla_y f(x_1,y_1)-\nabla_y f(x_2,y_2)}
 &\le\ell\bigl(\norm{x_1-x_2}+\norm{y_1-y_2}\bigr),
 &&\forall x_1,x_2\in\R^{d_x},\ y_1,y_2\in\R^{d_y}.
\end{align*}
\end{assumption}
We denote the outer objective of a game by $\Phi(x):=\max_{y\in\R^{d_y}}f(x,y)$, and let $\Phi^*:=\inf_{x\in\R^{d_x}}\Phi(x)$. We define the condition number $\kappa:=\ell/\mu\ge1$, where $\mu>0$ is the P{\L} parameter.
\begin{assumption}[NC-P{\L}]
\label{ass:nc-pl}
The outer objective satisfies $\Phi(x)<\infty$ for every $x\in\R^{d_x}$, and $Y^\star(x):=\argmax_{y\in\R^{d_y}}f(x,y)$ is nonempty for every $x\in\R^{d_x}$. There exists $\mu>0$ such that the following inequality holds for each $x\in\R^{d_x}$:
\begin{equation}
 \frac12\norm{\nabla_y f(x,y)}^2\ge \mu\bigl(\Phi(x)-f(x,y)\bigr),
 \qquad\forall y\in\R^{d_y}.
 \label{eq:inner-pl}
\end{equation}
\end{assumption}
\begin{assumption}[Initialization]
\label{ass:initialization}
There exists $\Delta>0$ and a universal constant $c_{\rm init}\ge1$ such that
\begin{equation}
 0\le\Phi(x_0)-\Phi^*\le\Delta,
 \qquad
 0\le\Phi(x_0)-f(x_0,y_0)\le c_{\rm init}\Delta.
 \label{eq:gap-init}
\end{equation}
\end{assumption}
Throughout the paper, we make the following standard stochastic gradient estimator assumptions.
\begin{assumption}[Unbiasedness]
\label{ass:unbiasedness}
The gradient estimators $G_x,G_y$ are measurable and use fresh independent samples $\xi$ from a fixed distribution. For every $(x,y)$,
\begin{equation}
 \E G_x(x,y;\xi)=\nabla_x f(x,y),
 \qquad
 \E G_y(x,y;\xi)=\nabla_y f(x,y).
 \label{eq:unbiased-oracle}
\end{equation}
\end{assumption}
\begin{assumption}[Bounded variance]
\label{ass:bounded-variance}
For a fixed $\sigma\ge0$ and every $(x,y)$,
\begin{equation}
 \E\norm{G_x(x,y;\xi)-\nabla_x f(x,y)}^2\le\sigma^2,
 \qquad
 \E\norm{G_y(x,y;\xi)-\nabla_y f(x,y)}^2\le\sigma^2.
 \label{eq:oracle}
\end{equation}
\end{assumption}
\paragraph{Two-timescale SGDA.}
SGDA with a fixed timescale ratio $\rho\ge1$ and step sizes $\eta_t\ge0$ follows:
\begin{equation}
 x_{t+1}=x_t-\frac{\eta_t}{\rho}G_x(x_t,y_t;\xi_t^x),
 \qquad
 y_{t+1}=y_t+\eta_tG_y(x'_t,y_t;\xi_t^y),
 \label{eq:updates}
\end{equation}
where we use $x'_t=x_{t+1}$ for \SGDAAlt{}, and $x'_t=x_t$ for \SGDASim{}.

\paragraph{Minimax complexity.}
\looseness=-1 Let $\Class(\ell,\kappa,\Delta,\sigma)$ collect all NC-P{\L} game instances that satisfy Assumptions~\ref{ass:smoothness}--\ref{ass:bounded-variance} with $d_x,d_y<\infty$. For each SGDA update order $\mathsf o\in\{\mathrm{Alt},\mathrm{Sim}\}$, we define the minimax convergence rate of SGDA with a fixed timescale ratio $\rho$ and nonincreasing step sizes $\eta_t$ as
\begin{equation}
 \mathcal R_{T,\mathsf o}(\rho)
 :=\inf_{\eta_0\ge\cdots\ge\eta_{T-2}\ge0}
 \sup_{I\in\Class(\ell,\kappa,\Delta,\sigma)}
 \frac1T\sum_{t=0}^{T-1}\E\norm{\nabla\Phi(x_t)}^2.
 \label{eq:minimax-risks}
\end{equation}
We write the version optimized over the timescale ratio as $\mathcal R_{T,\mathsf o}^\star:=\inf_{\rho\ge1}\mathcal R_{T,\mathsf o}(\rho)$. We emphasize that we allow parameter-dependent tuning of timescale ratio $\rho(\ell,\kappa,\Delta,\sigma,T)$ and step sizes $\eta_t(\ell,\kappa,\Delta,\sigma,T)$. Finally, we define the corresponding iteration complexities as
\begin{equation}
 N_{\eps,\mathsf o}(\rho):=\inf\bigl\{T\in\mathbb N_{\ge1}:\mathcal R_{T,\mathsf o}(\rho)\le\eps^2\bigr\},
 \qquad
 N_{\eps,\mathsf o}^\star:=\inf\bigl\{T\in\mathbb N_{\ge1}:\mathcal R_{T,\mathsf o}^\star\le\eps^2\bigr\}.
\label{eq:iteration-complexities}
\end{equation}

%% file: sections/comparison_table.tex
\begin{table}[t]
\definecolor{tableblue}{HTML}{0066FF}
\definecolor{tablepink}{HTML}{E00070}
\caption{\textbf{Upper and lower bounds in NC-P{\L} games.} Our lower bounds match SGDA's upper bounds after optimizing fixed timescale ratios and predetermined nonincreasing step sizes. With exact gradients, Theorem~\ref{thm:low-ratio} rules out uniform convergence for $\rho=o(\kappa^2)$; $\infty$ denotes nonvanishing stationarity error. MSGDA and AdaMSGDA use $0<\eps\le\sigma$. $\dagger$ AdaMSGDA measures the average expected squared norm of $A_t^{-1}\nabla\Phi(x_t)$, where $A_t$ is its primal preconditioner. $\ddagger$ HCMM-2 shows only the variance-dependent cost for $\E\norm{\nabla\Phi}\le\eps$, with Hessian variance at most $\sigma^2$. Appendix~\ref{app:table-comparison} gives the complete bounds and assumptions.}
\label{tab:ncpl-landscape}
\centering
\begingroup
\setlength{\tabcolsep}{3pt}
\setlength{\arrayrulewidth}{0.3pt}
\begin{tabular*}{\textwidth}{@{\extracolsep{\fill}}llll@{}}
\toprule
Algorithm & Citation & Oracle & Complexity / obstruction \\
\midrule
\SGDAAlt{} & \citet{yang2022faster} & Bounded $\sigma^2$ & $O(\kappa^2\ell\eps^{-2}+\kappa^4\ell\sigma^2\eps^{-4})$ \\
\SGDASim{} & \cref{thm:sim-upper} & Bounded $\sigma^2$ & $O(\kappa^2\ell\eps^{-2}+\kappa^4\ell\sigma^2\eps^{-4})$ \\
Smoothed-AGDA & \citet{yang2022faster} & Bounded $\sigma^2$ & $\widetilde O(\textcolor{tableblue}{\bm{\kappa}}\ell\eps^{-2}+\textcolor{tableblue}{\bm{\kappa^2}}\ell\sigma^2\eps^{-4})$ \\
SPIDER-GDA & \citet{chen2022faster} & Finite-sum & $O(n+\sqrt n\kappa^2\ell\eps^{-2})$ \\
SGDA-RR & \citet{cho2023sgda} & Finite-sum & $O(n\kappa^2\ell\eps^{-2}+\sqrt n\kappa^3\ell\sigma\eps^{-3})$ \\
MSGDA & \citet{huang2025enhanced} & Smooth samples & $\widetilde O((\kappa^3\ell\sigma+\kappa^4\sigma^3)\textcolor{tableblue}{\bm{\eps^{-3}}})$ \\
AdaMSGDA$^{\dagger}$ & \citet{huang2025enhanced} & Smooth samples & $\widetilde O((\kappa^3\ell\sigma+\kappa^4\sigma^3)\textcolor{tableblue}{\bm{\eps^{-3}}})$ \\
HCMM-2$^{\ddagger}$ & \citet{cai2026bias} & Hessian products & $O_\ell(\kappa^3\sigma^3\textcolor{tableblue}{\bm{\eps^{-3}}})$ \\
\hdashline[2pt/2pt]
\noalign{\vskip2pt}
\textbf{\SGDAAlt{}, \SGDASim{}} & \textbf{\cref{thm:common-lower}} & \textbf{Bounded $\bm{\sigma^2}$} & $\Theta(\textcolor{tablepink}{\bm{\kappa^2}}\ell\eps^{-2}+\textcolor{tablepink}{\bm{\kappa^4}}\ell\sigma^2\textcolor{tablepink}{\bm{\eps^{-4}}})$ \\
\midrule
Linear-span (NC-SC) & \citet{zhang2021complexity} & Deterministic & $\Omega(\sqrt{\kappa}\ell\eps^{-2})$ \\
Linear-span (NC-SC) & \citet{zhang2021complexity} & Finite-sum & $\Omega(n+\sqrt{n\kappa}\ell\eps^{-2})$ \\
Zero-respecting (NC-SC) & \citet{li2021complexity} & Bounded $\sigma^2$ & $\Omega(\sqrt{\kappa}\ell\eps^{-2}+\kappa^{1/3}\ell\sigma^2\eps^{-4})$ \\
First-order (NC-P{\L}) & \citet{pan2026lower} & Deterministic & $\Omega(\kappa\ell\eps^{-2})$ \\
\SGDASim{} & \citet{li2022local} & Deterministic & $\infty$ if $\rho=o(\kappa)$ \\
\hdashline[2pt/2pt]
\noalign{\vskip2pt}
\textbf{\SGDAAlt{}, \SGDASim{}} & \textbf{\cref{thm:low-ratio}} & \textbf{Deterministic} & $\infty$ if $\rho=o(\kappa^2)$ \\
\textbf{\SGDAAlt{}, \SGDASim{}} & \textbf{\cref{thm:high-ratio}} & \textbf{Bounded $\bm{\sigma^2}$} & $\Omega(\rho\ell\eps^{-2}+\kappa^4\ell\sigma^2\eps^{-4})$ \\
\bottomrule
\end{tabular*}
\endgroup
\end{table}

%% file: sections/main_results.tex
\section{Main results}
\label{sec:main-results}

Our main result establishes a tight lower bound for two-timescale SGDA with a fixed timescale ratio $\rho\ge1$ and non-increasing step size schedules $\eta_t$.

\begin{restatable}[A lower bound for two-timescale SGDA in NC-P{\L} games]{theorem}{commonLowerStatement}
\label{thm:common-lower}
There are universal constants $c>0$ and $\kappa_0\ge2$, depending on $c_{\rm init}$, such that, for every $\kappa\ge\kappa_0$, $\ell>0$, $\Delta>0$, $\sigma\ge0$, and integer $T\ge1$,
\begin{equation}
 \mathcal R_{T,\mathrm{Sim}}^\star, \mathcal R_{T,\mathrm{Alt}}^\star
 \ge c\min\!\left\{\ell\Delta,
 \max\!\left\{\frac{\kappa^2\ell\Delta}{T},
 \sqrt{\frac{\kappa^4\ell\Delta\sigma^2}{T}}\right\}\right\}.
 \label{eq:common-lower}
\end{equation}
\end{restatable}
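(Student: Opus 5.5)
The plan is to derive \cref{thm:common-lower} from two separate lower bounds, split according to the timescale ratio: a low-ratio result (\cref{thm:low-ratio}) and a high-ratio result (\cref{thm:high-ratio}), both listed in \cref{tab:ncpl-landscape}. Fix a threshold $\rho_0=c_0\kappa^2$ with $c_0$ a small universal constant. Since $\mathcal R_{T,\mathsf o}^\star=\inf_{\rho\ge1}\mathcal R_{T,\mathsf o}(\rho)$, it suffices to lower bound $\mathcal R_{T,\mathsf o}(\rho)$ by the right-hand side of \eqref{eq:common-lower}, uniformly over every $\rho\ge1$ and every nonincreasing schedule $\eta_0\ge\cdots\ge\eta_{T-2}\ge0$. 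The two cases are $\rho<\rho_0$ and $\rho\ge\rho_0$.

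\textbf{High-ratio case ($\rho\ge c_0\kappa^2$).} Here I invoke \cref{thm:high-ratio}, whose form is $\Omega(\min\{\ell\Delta,\max\{\rho\ell\Delta/T,\sqrt{\kappa^4\ell\Delta\sigma^2/T}\}\})$. Substituting $\rho\ge c_0\kappa^2$ gives \eqref{eq:common-lower} with constant $c\,c_0$.

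The two terms come from different mechanisms. The deterministic term comes from an outer function whose gradient is $\Theta(\eps)$ over a region of width of order $\Delta/\eps$. Since the $x$-step is $\eta_t/\rho$ with $\eta_t\lesssim1/\ell$ for stability, crossing that region takes $\gtrsim\rho\ell\Delta/\eps^2$ iterations. The noise term comes from a one-dimensional quadratic inner problem with curvature $\mu$ coupled to $x$, so that $\nabla\Phi$ amplifies the noise in $y$ by a factor $\kappa$. This forces a trade-off: the stationary noise in $x$ is of order $(\eta/\rho)\kappa^2\sigma^2$ per unit descent, while descent requires $\sum_t\eta_t/\rho\gtrsim\Delta/\eps^2$. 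Optimizing over the schedule then gives $\kappa^4\ell\sigma^2\eps^{-4}$ once $\rho\gtrsim\kappa^2$.

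\textbf{Low-ratio case ($\rho<c_0\kappa^2$).} I use \cref{thm:low-ratio}, which exhibits a deterministic instance (so $\sigma$ is irrelevant) in $\Class(\ell,\kappa,\Delta,\sigma)$ on which SGDA with $\rho=o(\kappa^2)$ keeps $\norm{\nabla\Phi(x_t)}^2\ge c'\ell\Delta$ along a nonvanishing fraction of iterates, for every step-size schedule. Because the exact bound in \eqref{eq:common-lower} never exceeds $c\ell\Delta$, this yields $\mathcal R_{T,\mathsf o}(\rho)\ge c'\ell\Delta/2$, which dominates. The step-size requirement splits in two. If the steps are small (e.g.\ $\sum_t\eta_t/\rho\lesssim\Delta/(\ell\Delta)$ in appropriate units), $x$ barely moves from an $x_0$ with $\norm{\nabla\Phi(x_0)}^2\gtrsim\ell\Delta$. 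If the steps are large, the slow inner tracking with $\rho<c_0\kappa^2$ produces cycling or divergence away from stationarity. I also need the initial gap to satisfy the $c_{\rm init}$ condition of \cref{ass:initialization}; this is where the dependence of $\kappa_0$ and $c$ on $c_{\rm init}$ enters.

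\textbf{Assembly and main obstacle.} Taking $c=\min\{c\,c_0,c'/2\}$ and $\kappa_0$ the larger of the two thresholds handles both update orders, since each cited theorem covers \SGDASim{} and \SGDAAlt{}. The hardest part is making the low-ratio obstruction uniform. \cref{thm:low-ratio} is asymptotic ($o(\kappa^2)$), so I must extract from its proof an explicit constant $c_0$ such that the failure holds for all $\rho\le c_0\kappa^2$ and all $\kappa\ge\kappa_0$, with a bound of order $\ell\Delta$ rather than merely nonzero. If the theorem gives only nonvanishing error $\delta(\kappa)$, I would rescale the instance using the homogeneity $f\mapsto a f(x/b,y/b)$ to normalize $\Delta$ and $\ell$, and check that the error scales as $\ell\Delta$. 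A second subtlety is the regime $\rho$ just above $c_0\kappa^2$, where the high-ratio bound must still hold with constants independent of $c_0$.
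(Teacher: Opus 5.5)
\textbf{Gap: the two ratio ranges may not overlap.} Reducing to \cref{thm:high-ratio} and \cref{thm:low-ratio} is the right approach, but you cannot choose the split point $c_0$ freely. Your high-ratio case needs $c_0\ge c_{\rm high}$, because \cref{thm:high-ratio} is only available for $\rho\ge c_{\rm high}\kappa^2$. Your low-ratio case needs $c_0\le c_{\rm low}$. Both are specific constants produced by the two constructions, and nothing in the theorem statements guarantees $c_{\rm high}\le c_{\rm low}$. If $c_{\rm high}>c_{\rm low}$, every $\rho$ in the band $(c_{\rm low}\kappa^2,c_{\rm high}\kappa^2)$ is covered by neither theorem. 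The infimum over $\rho\ge1$ defining $\mathcal R_{T,\mathsf o}^\star$ could then be attained there.

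You flag this as a ``second subtlety'' but do not resolve it, and it is the real crux. Conversely, what you call the hardest part is not an issue. \cref{thm:low-ratio} is not asymptotic: it holds for every $1\le\rho\le c_{\rm low}\kappa^2$ with an explicit constant and the every-iterate bound $\min_{t<T}\norm{\nabla\Phi(x_t)}^2\ge c\ell\Delta$.

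\textbf{How the paper closes it.} The paper fixes the order in which constants are chosen. The low-ratio construction (the cycle, its extension, and the normalization $C_{\rm low}$) fixes $c_{\rm low}=(\mu/C_{\rm low})^2$ first. This does not depend on the noise components. Then the noise blocks are scaled by $K=\gamma\kappa$ and $L=\theta\ell$, with $\gamma\le\sqrt{c_{\rm low}/C_{\rm resp}}$ and $\theta$ adjusted to keep $\ell$-smoothness and inner P{\L}. This makes $c_{\rm high}=\gamma^2C_{\rm resp}\le c_{\rm low}$, so the high-ratio range starts no later than the low-ratio range ends.

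\textbf{An alternative using only the theorem statements.} You could instead use monotonicity of the class in $\kappa$. For $\kappa'\le\kappa$ we have $\Class(\ell,\kappa',\Delta,\sigma)\subseteq\Class(\ell,\kappa,\Delta,\sigma)$, so for each fixed schedule the supremum can only grow with $\kappa$, and hence so does $\mathcal R_{T,\mathsf o}(\rho)$. For $\rho$ in the band, apply \cref{thm:high-ratio} with $\kappa'=\sqrt{\rho/c_{\rm high}}\in[\sqrt{c_{\rm low}/c_{\rm high}}\,\kappa,\kappa)$. This costs only a constant factor $\min\{1,c_{\rm low},c_{\rm low}/c_{\rm high}\}$ and requires $\kappa_0\ge\kappa_{\rm high}\sqrt{c_{\rm high}/c_{\rm low}}$. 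Without one of these two arguments, your case split does not cover all $\rho\ge1$.
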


Intuitively, the first term $\ell\Delta$ inside the minimum accounts for short horizons. Hence, in the usual small-error regime $\eps^2\le c\ell\Delta$, \cref{thm:common-lower} and the matching upper bounds for \SGDAAlt{} from \citet{yang2022faster} and for \SGDASim{} in \cref{thm:sim-upper} yield a tight complexity:
\begin{equation}
 N_{\eps,\mathrm{Sim}}^\star, N_{\eps,\mathrm{Alt}}^\star
 \asymp \frac{\kappa^2\ell\Delta}{\eps^2}+\frac{\kappa^4\ell\Delta\sigma^2}{\eps^4}.
 \label{eq:tight-complexity}
\end{equation}
\paragraph{Suboptimality of SGDA.}
\cref{thm:common-lower} establishes a provable suboptimality of SGDA in NC-P{\L} games: it yields a tight sample complexity $\Theta(\kappa^2\ell\Delta\eps^{-2}+\kappa^4\ell\Delta\sigma^2\eps^{-4})$ for SGDA, which is suboptimal compared with the complexity upper bound $O(\kappa\ell\Delta\eps^{-2}+\kappa^2\ell\Delta\sigma^2\eps^{-4})$ of Smoothed-AGDA \citep{yang2022faster}.\footnote{The Smoothed-AGDA bound uses the comparison conditions and output conversion in Appendix~\ref{app:smoothing-alignment}.} Hence, our result justifies the recent development of alternative methods such as Smoothed-AGDA \citep{yang2022faster} and momentum \citep{huang2025enhanced,cai2026bias}.

\paragraph{Optimal timescale ratio and step-size schedule.}
Notably, every existing analysis of two-timescale SGDA in NC-P{\L} games relies on the choice of timescale ratio $\rho=\Theta(\kappa^2)$ \citep{fiez2021global,yang2022faster,cho2023sgda,ha2026high}. However, the question whether it is actually optimal remained open. Together with the matching upper bounds \citep{yang2022faster,ha2026high}, \cref{thm:common-lower} shows that the timescale ratio $\rho=\Theta(\kappa^2)$ and the constant step size $\eta_t\equiv c_\eta\min\{1/\ell,\sqrt{\Delta/(\ell\sigma^2T)}\}$ are optimal for two-timescale SGDA with a fixed timescale ratio and non-increasing step sizes.

In the following sections, we study two distinctive regimes of timescale ratios, namely the high-ratio regime $\rho=\Omega(\kappa^2)$ and low-ratio regime $\rho=O(\kappa^2)$, and explain the intuition behind the hard instances that establish \cref{thm:common-lower}. We defer the proof of Theorem~\ref{thm:common-lower} to Appendix~\ref{app:common-lower-proof}.

%% file: sections/high_ratio.tex
\section{The high-ratio lower bound}
\label{sec:high-ratio}

In this section, we establish the lower bound for large timescale ratios $\rho=\Omega(\kappa^2)$. In this high-ratio regime, smaller descent steps slow down the outer progress, whereas larger ascent steps increase noise-induced errors that accumulate on the outer variable. These complementary obstructions prevent SGDA from obtaining sharper convergence rates.

\Needspace*{7\baselineskip}
\begin{restatable}[High-ratio lower bound]{theorem}{highRatioTheoremStatement}
\label{thm:high-ratio}
There are universal constants $c,c_{\rm high}>0$ and $\kappa_{\rm high}\ge2$ such that, for every choice of problem parameters $\ell,\Delta>0$, $\sigma\ge0$, and $\kappa\ge\kappa_{\rm high}$, time horizon $T\in\mathbb N_{\ge1}$, and timescale ratio $\rho\ge c_{\rm high}\kappa^2$, both \SGDASim{} and \SGDAAlt{} satisfy
\begin{equation}
 \mathcal R_{T,\mathrm{Sim}}(\rho), \mathcal R_{T,\mathrm{Alt}}(\rho)
 \ge c\min\!\left\{\ell\Delta,
 \max\!\left\{\frac{\rho \ell\Delta}{T},
 \sqrt{\frac{\kappa^4\ell\Delta\sigma^2}{T}}\right\}\right\}.
 \label{eq:high-ratio-result}
\end{equation}
\end{restatable}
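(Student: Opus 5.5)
The plan is to prove \cref{thm:high-ratio} by exhibiting, for each fixed $\rho\ge c_{\rm high}\kappa^2$ and each nonincreasing schedule $\eta_0\ge\cdots\ge\eta_{T-2}\ge0$, a hard instance in $\Class(\ell,\kappa,\Delta,\sigma)$ whose average squared outer gradient is at least the right-hand side of \eqref{eq:high-ratio-result}. The two terms in the maximum come from two separate instances: a deterministic one for $\rho\ell\Delta/T$, and a stochastic one for $\sqrt{\kappa^4\ell\Delta\sigma^2/T}$. Since the supremum over instances dominates each, it suffices to show that every schedule loses on at least one of them. The ``$\min\{\ell\Delta,\cdot\}$'' is handled by capping gradients: each instance has $\|\nabla\Phi\|^2\asymp\ell\Delta$ near the initialization until $x$ has moved a distance of order $\sqrt{\Delta/\ell}$.

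\textbf{Slow-descent instance.} Take a separable game $f(x,y)=g(x)-\frac{\mu}{2}y^2$, where $g$ is a one-dimensional $\ell$-smooth function with $|g'|\approx\sqrt{\ell\Delta}$ on an interval of length about $\sqrt{\Delta/\ell}$ and total decrease at most $\Delta$. Start from $y_0=0$, so $f(x_0,y_0)=\Phi(x_0)$. With exact gradients the $x$-update is gradient descent with step $\eta_t/\rho\le 1/\rho\cdot\eta_t$.

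If $\eta_t\le c/\ell$ for all $t$, each iterate moves at most $\eta_t\sqrt{\ell\Delta}/\rho$. Hence $\sum_t\eta_t\lesssim\rho\sqrt{\Delta/\ell}/\sqrt{\ell\Delta}=\rho/\ell$ is needed to leave the high-gradient region. This forces either $T\gtrsim\rho$, or an average gradient of about $\ell\Delta$ over the first $\min\{T,\rho\}$ steps. Averaging over $T$ then gives $\min\{\ell\Delta,\rho\ell\Delta/T\}$.

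If instead $\eta_0>c'/\ell$, which is large, a second instance punishes it. Take $f=g(x)+\frac{\ell}{2}$-curvature in $y$ of the form $-\frac{\ell}{2}(y-\beta x)^2$, adjusted to keep the P{\L} constant. An ascent step $\eta_0\ell>2$ makes $y$ diverge, or oscillate with nondecaying amplitude. Coupling $y$ into $\nabla_x f$ then keeps $\nabla\Phi(x_t)$ large because $x$ is pushed away. Monotonicity of $\eta_t$ is used exactly here: a large later step would imply a large $\eta_0$.

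\textbf{Noise-accumulation instance.} This is the main obstacle. Use a bilinear-coupled quadratic
\[
f(x,y)=g(x)+ \ell xy-\tfrac{\mu}{2}y^2 ,
\]
suitably scaled. Here $y^\star(x)=\kappa x$ and $\nabla\Phi(x)=g'(x)+\ell\kappa x$, so the effective outer curvature is of order $\kappa\ell$; choose $g$ to cancel it down to $\ell$-smooth $\Phi$. Inject noise only into $G_y$ with variance $\sigma^2$. The tracking error $e_t=y_t-y^\star(x_t)$ is an AR(1) process with contraction $1-\eta_t\mu$. Its stationary variance is about $\eta\sigma^2/\mu$, and it feeds into $x$ through $\nabla_x f$ with gain $\ell$. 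The $x$-iterate therefore receives correlated noise $\ell e_t\eta/\rho$. Summed over the correlation time $1/(\eta\mu)$, this is equivalent to an outer noise of variance about $\ell^2\sigma^2/\mu^2=\kappa^2\sigma^2$ per effective step.

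The plan is to compute, on the quadratic $\Phi(x)=\frac{\ell}{2}x^2$ part, the exact second moment recursion of the linear system $(x_t,e_t)$. The goal is to show that $\E\|\nabla\Phi(x_t)\|^2\gtrsim \ell\cdot(\eta/\rho)\kappa^2\sigma^2\cdot\kappa^2/(\cdot)$, which after using $\rho\gtrsim\kappa^2$ should scale like $\ell\kappa^4\sigma^2\eta/\rho$ times an appropriate factor. This is then balanced against the deterministic bias $\rho\ell\Delta/(T\eta\ell)$ from the slow-descent instance with general $\eta$, to be generalized accordingly.

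Minimizing $A\eta+B/\eta$ over $\eta$ gives $\sqrt{AB}\asymp\sqrt{\kappa^4\ell\Delta\sigma^2/T}$, independent of $\rho$. Handling nonconstant nonincreasing $\eta_t$ requires replacing $\eta$ by averages and using a Cauchy--Schwarz step. I expect verifying the P{\L} and smoothness constants while keeping the $\kappa^4$ amplification (rather than $\kappa^2$) to be the delicate part. Finally, the Alt and Sim cases differ only by an $O(\eta_t\ell/\rho)$ perturbation of the $y$-update matrix, which is negligible when $\rho\ge c_{\rm high}\kappa^2$.
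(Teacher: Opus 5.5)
Your plan has the right skeleton: a deterministic instance whose bound decreases in the cumulative step, a stochastic instance whose bound increases in it, and an AM--GM balance. The gap is that the stochastic instance you propose cannot produce the $\kappa^4$ factor, and that factor is the heart of the theorem. In a linear--quadratic game with unbiased noise in $G_y$, the iterates respond linearly and second moments add over the kicks.

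Take your instance with $g=0$. You cannot cancel the $\ell\kappa x^2/2$ term, because $g''\approx-\ell\kappa$ would violate $\ell$-smoothness of $f$, so $\Phi''\asymp\ell\kappa$ is forced. The stationary Lyapunov equation of the continuous-time limit then gives $\operatorname{Var}(x)=\eta\sigma^2/(2\mu\rho)$, hence $\E|\Phi'(x)|^2\approx\ell\kappa^3\eta\sigma^2/(2\rho)$. Your own heuristic agrees: an effective outer noise of variance $\kappa^2\sigma^2$ at outer step $\eta/\rho$ gives $\E|\Phi'|^2\approx\ell_\Phi\kappa^2\eta\sigma^2/\rho$, and $\ell_\Phi\le\ell(1+\kappa)$ by \cref{lem:inner-pl-consequences}. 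More generally, a kick of size $\eta\sigma$ into linear inner dynamics with curvatures at least $\mu$ has time-integrated inner displacement at most about $\kappa\eta\sigma/\ell$, so the outer displacement is at most about $\kappa\eta\sigma/\rho$. Balancing against $\rho\Delta/S$ therefore yields only $\sqrt{\kappa^3\ell\Delta\sigma^2/T}$.

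The missing idea is a nonlinear amplification of each gradient error. The paper gives every update $j$ its own block with an activation coordinate $s$, which enters the slow residual through $\sqrt{K}\eta_j\sigma\sin^2(\pi s/(2\eta_j\sigma))$. A single $\pm\sigma$ error at update $j$ puts $s$ exactly at a critical point of this profile, where it stays frozen. This creates a permanent shift $\delta_j=\sqrt{K}\eta_j\sigma$ along a direction of curvature of order $\ell/\kappa$, a factor $\sqrt\kappa$ larger than any linear kick. \Cref{lem:shift-response} turns this into an outer displacement $\kappa\delta_j/\rho\asymp\kappa^{3/2}\eta_j\sigma/\rho$ that persists over cumulative steps $\rho/(\ell\kappa)$, giving $\ell\kappa^4\sigma^2\eta_j/\rho$ per update. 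A cutoff keeps global $\ell$-smoothness, and a selector oracle routes each error to the next unactivated block. Each block is scaled by its own $\eta_j$ and later steps are at most $\eta_j$, so nonincreasing schedules are handled directly after discarding the final blocks, whose total step is $O(\rho/(\kappa\ell))$. No Cauchy--Schwarz averaging is needed.

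Your treatment of large steps also has a gap. An instability instance penalizes iterates only while the steps stay large. After a few large initial steps followed by small ones, the dynamics recover, so the penalty scales with the number $m$ of large steps; \cref{lem:quadratic-gradient-lower} gives only $\frac{m+1}{T}\ell\Delta$. A large $\eta_0$ does not imply many large steps. You must split on $m$ instead: if $m+1\ge T/2$, use the unstable instance; otherwise the stable-step instances must survive an arbitrary prefix of huge steps.

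Your fixed-slope descent instance does not survive such a prefix, since one huge outer step jumps past the high-gradient interval. The paper's corridor (\cref{lem:one-dimensional-lower}) starts with $g'(u)=-[G_0+\ell u]_{[0,G]}$ and $G_0=G/P_m$, so the first $m$ outer steps, of any size, land exactly at the start of a constant-slope segment. The noise blocks also sit at a stationary origin until activated, so early large steps do not disturb them.

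Finally, the balance needs the $S$-dependent bound $\rho\Delta/S$ with $S=\sum_{t=m}^{T-2}\eta_t$. This requires tuning the slope to $G^2\asymp\min\{\ell\Delta,\rho\Delta/S\}$ so that the whole tail stays on the corridor. A fixed slope $\sqrt{\ell\Delta}$ only counts iterates until the cumulative step reaches $\rho/\ell$, and for a decreasing schedule that can be far fewer than $\rho T/(\ell S)$. With these two instances, $S<8T/\ell$ and AM--GM give the theorem.
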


Together with the matching upper bounds from \citet{yang2022faster} and Theorem~\ref{thm:sim-upper}, Theorem~\ref{thm:high-ratio} shows that the smallest possible timescale ratio $\rho=\Theta(\kappa^2)$ is optimal in the high-ratio regime, as larger ratios only increase the lower bound. This suggests that the standard choice $\rho=\Theta(\kappa^2)$ \citep{yang2022faster} might be optimal among all fixed ratios. We establish this optimality in Section~\ref{sec:low-ratio} by ruling out a uniform convergence guarantee for timescale ratios of smaller order $\rho=o(\kappa^2)$.

In the following subsections, we construct two hard NC-P{\L} game instances that yield Theorem~\ref{thm:high-ratio}.

\subsection{The instance for unstable step sizes}
\label{sec:high-ratio-large-steps}

Consider a simple linear quadratic game $f_{\mathrm{unstable}}:\R\times\R\to\R$ and its outer objective:
\begin{equation}
 f_{\mathrm{unstable}}(x;y)=\frac{\ell+\mu}{4}(xy-y^2),
 \qquad \Phi_{\mathrm{unstable}}(x)=\frac{\ell+\mu}{16}x^2.
 \label{eq:quadratic-main}
\end{equation}
This game is $\ell$-smooth and $\mu$-strongly concave in $y$, and therefore, is an NC-P{\L} game. For this game, an inner ascent step becomes unstable for $\eta_t\ge8/\ell$. The following result shows that such $m$ initial unstable steps can push $x$ away from its minimizer.

\begin{restatable}[A lower bound for $f_{\mathrm{unstable}}$]{lemma}{quadraticGradientLowerStatement}
\label{lem:quadratic-gradient-lower}
There is a universal $c>0$ such that, for $\ell,\Delta>0$, $\kappa\ge2$, $\sigma\ge0$, $\rho\ge1$, integers $T\ge1$ and $0\le m<T$, and predetermined nonnegative non-increasing step sizes with $\eta_t\ge8/\ell$ for $t<m$, each of \SGDASim{} and \SGDAAlt{} admits an initialization $(x_0,y_0)$ making $f_{\mathrm{unstable}}$ with $\mu=\ell/\kappa$ an exact-gradient instance in $\Class(\ell,\kappa,\Delta,\sigma)$ with
\begin{equation}
 \frac1T\sum_{t=0}^{T-1}|\Phi_{\mathrm{unstable}}'(x_t)|^2
 \ge c\frac{m+1}{T}\ell\Delta.
 \label{eq:quadratic-main-bound}
\end{equation}
\end{restatable}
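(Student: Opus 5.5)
The plan is to make the dynamics explicit and drive the outer iterate with an exponentially growing tracking error created by the unstable ascent steps.

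Write $a:=(\ell+\mu)/4$, so $\nabla_x f=ay$, $\nabla_y f=a(x-2y)$, $y^\star(x)=x/2$, $\Phi_{\mathrm{unstable}}(x)=ax^2/4$ and $\Phi_{\mathrm{unstable}}'(x)=ax/2$. Since $\ell/4\le a\le\ell/2$, the first-order conditions are routine: $\ell$-smoothness, $\mu$-strong concavity in $y$ (hence the P{\L} inequality), and membership in $\Class(\ell,\kappa,\Delta,\sigma)$ with exact gradients (zero variance). Set $e:=y-x/2$. Then
\[
\Phi(x)-f(x,y)=ae^2 ,
\]
so \cref{ass:initialization} reduces to $ax_0^2/4\le\Delta$ and $ae_0^2\le c_{\rm init}\Delta$. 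The target \eqref{eq:quadratic-main-bound} follows if, for all $t\le m$ (note $m\le T-1$),
\[
|x_t|\ge c'\sqrt{\Delta/\ell},
\]
because $|\Phi'(x_t)|^2=a^2x_t^2/4\gtrsim \ell\Delta$ at each such $t$.

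Next I will write the linear recursion in the coordinates $(x,e)$. For \SGDASim{},
\[
x_{t+1}=\Bigl(1-\tfrac{\eta_ta}{2\rho}\Bigr)x_t-\tfrac{\eta_ta}{\rho}e_t ,
\]
and $e_{t+1}$ equals $(1-2a\eta_t)e_t$ plus correction terms of relative size $O(\eta_ta/\rho)$. \SGDAAlt{} gives the same structure with slightly different corrections. For $t<m$ we have $2a\eta_t\ge 2\cdot\frac{\ell}{4}\cdot\frac{8}{\ell}=4$, so the $e$-mode has multiplier of modulus at least $3$, and its sign alternates.

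The construction is: take $x_0=2\sqrt{\Delta/a}$ with the largest admissible $|e_0|\asymp\sqrt{\Delta/\ell}$, and choose the sign of $e_0$ so that the first increment $-(\eta_0a/\rho)e_0$ has the same sign as $x_0$. Because $e_t$ alternates in sign and grows by a factor of at least $3$, the increments $x_{t+1}-x_t\approx-(\eta_ta/\rho)e_t$ also alternate, with geometrically growing magnitude (modulated by the nonincreasing $\eta_t$). Each new increment therefore reverses and dominates the previous one, which should keep $|x_t|$ at least of order $|x_0|$ or $|e_0|$.

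To make this rigorous I will exhibit an invariant cone for the per-step matrices $M_t$, $t<m$. The natural candidate is
\[
\mathcal K=\bigl\{(x,e): |e|\ge\beta|x|\bigr\},
\]
together with a sign-pattern condition. I will show that each $M_t$ maps $\mathcal K$ into itself (up to a sign flip), expands the norm, and that $|x_{t+1}|\ge\gamma\max\{|x_t|,|e_t|\eta_ta/\rho\}$. This is a Perron--Frobenius-type argument, and it needs uniformity over arbitrary $\eta_t\ge8/\ell$ and arbitrary $\rho\ge1$.

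Two special cases should be dispatched first. The case $m=0$ needs only $t=0$, and follows from $|x_0|$ alone with $e_0=0$. The regime where $\eta_ta/\rho$ is large, so that $x$ itself may overshoot, can be controlled because the decay factor $1-\eta_ta/(2\rho)$ then has modulus bounded below as well.

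The main obstacle is the cone invariance, uniformly in $\eta_t\in[8/\ell,\infty)$ and $\rho$. In particular, the $x$-component of the unstable mode can be tiny, of order $\eta_ta/\rho$ times $|e|$, when $\rho$ is huge, and then $|x_t|\ge c|x_0|$ must come from $x_0$ rather than from the growing error. I would first try the simpler alternative of showing that $x_t$ is monotone in $|x|$: the alternating, growing increments each push $x$ outward on net, so that $|x_t|\ge|x_0|\cdot\prod_s(1-\eta_sa/(2\rho))_+$ is compensated by the error term. If that fails, I would exploit linearity: compute $(x_t)$ for the two initializations $e_0=\pm|e_0|$ and use that their average is the trajectory from $e_0=0$, so at each $t$ one of the two signs gives $|x_t|\ge|x_t^{(e_0=0)}|$. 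I would then handle the sign choice across different $t$ through a counting argument on which sign wins for at least half of the indices $t\le m$.
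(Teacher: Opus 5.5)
There is a genuine gap. The sufficient condition you target, $|x_t|\ge c'\sqrt{\Delta/\ell}$ for \emph{every} $t\le m$, fails for your initialization. So do the per-step cone estimate $|x_{t+1}|\ge\gamma|x_t|$ and the claim that the factor $1-\eta_ta/(2\rho)$ has modulus bounded below.

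Write $s_t=2a\eta_t\ge4$. Then that factor is $1-s_t/(4\rho)$, which vanishes at the admissible ratio $\rho=s_t/4$. Concretely, take $e_0=-x_0/2$, which is admissible for every $c_{\rm init}\ge1$ and is your choice when $c_{\rm init}=1$; this means $y_0=0$. Both update orders then give $x_1=x_0$, $y_1=s_0x_0/2$, and $x_2=x_0\bigl(1-s_0s_1/(4\rho)\bigr)$. This is exactly zero when $\rho=s_0s_1/4\ge4$. An increment that reverses and dominates its predecessor can land $x$ anywhere, including at the origin, so no cone or Perron--Frobenius argument can give a bound at every iterate.

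Your linearity fallback breaks at the same place. The bound $|x_t|\ge|x_t^{(e_0=0)}|$ is vacuous, because the $e_0=0$ trajectory already has $x_1=x_0\bigl(1-s_0/(4\rho)\bigr)=0$ at $\rho=s_0/4$. What you would actually need is a lower bound on $|\alpha_t|x_0+|\beta_t||e_0|$, where $(\alpha_t,\beta_t)$ is the first row of the product of update matrices. The proposal offers no route to that bound.

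The missing idea has two parts: only about half of the iterates need a large gradient, and the outward push holds over \emph{pairs} of steps, not single steps. The paper takes $y_0=0$, so both initial gaps equal $\Delta$, $x_1=x_0$, and $y_1=s_0x_0/2\ge2x_0$. It then shows that two consecutive steps with normalized sizes $s,s'\ge4$ map any state with $0\le x\le y$ to a state with $0\le x\le x''\le y''$:
\begin{itemize}
\item the intermediate state has $y'\le(1-s/2)y\le0$;
\item hence $x''-x\ge\frac{ss'/2-s-s'}{2\rho}\,y\ge0$, because $(s-2)(s'-2)\ge4$;
\item a short separate computation for each update order gives $y''\ge x''$.
\end{itemize}
Induction gives $x_t\ge x_0$ at every odd $t\le m$. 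Together with $t=0$, this gives $1+\lceil m/2\rceil\ge(m+1)/2$ indices with $|\Phi_{\mathrm{unstable}}'(x_t)|^2\ge\ell\Delta/4$, which is the claimed bound.

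With this invariant in hand, your sign-counting fallback would also work. Your other sign $e_0=+x_0/2$ corresponds to $y_0=x_0$, which also starts in the invariant region and controls the even indices. But the invariant is the essential ingredient, and a single initialization already suffices.
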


For $m+1\ge T/2$, Lemma~\ref{lem:quadratic-gradient-lower} immediately proves Theorem~\ref{thm:high-ratio} with the constant lower bound $c\ell\Delta$. We construct a distinct hard instance $f_{\mathrm{stable}}$ for the other case $m+1<T/2$ in the next subsection.

\subsection{The instance for stable step sizes}
\label{sec:high-ratio-corridor-noise}

For the remaining case $m+1<T/2$, we design an instance $f_{\mathrm{stable}}$ with outer coordinates $x=(u,v)$:
\begin{equation}
 f_{\mathrm{stable}}(u,v;y)=g_{\mathrm{corridor}}(u)+g_\sigma(v,y),
 \qquad \Phi_{\mathrm{stable}}(u,v)=g_{\mathrm{corridor}}(u)+\Phi_\sigma(v),
 \label{eq:high-separable-game-main}
\end{equation}
where we use a scalar coordinate $u\in\R$ for the deterministic function $g_{\mathrm{corridor}}$, and define the outer objective $\Phi_\sigma(v):=\max_y g_\sigma(v,y)$ for the noisy function $g_\sigma(v,y)$. Intuitively, the corridor function $g_{\mathrm{corridor}}$ limits progress from small descent steps, and the noisy function $g_\sigma$ transfers the inner stochastic perturbations to the outer variable. The stable ascent steps sum to $S=\sum_{t=m}^{T-2}\eta_t$.

% Defer Figure 1 to the top of page 6, beside the stable-step construction.
\suppressfloats[t]
\input{figures/figure1}

\Needspace{4\baselineskip}
\subsubsection{The corridor function \texorpdfstring{$g_{\mathrm{corridor}}$}{g\_corridor}}
\label{sec:high-ratio-corridor}
The following lower bound shows that the corridor function $g_{\mathrm{corridor}}$ keeps the remaining primal gradients constant and nonzero.

\Needspace{7\baselineskip}
\begin{restatable}[A lower bound for $g_{\mathrm{corridor}}$]{lemma}{corridorFunctionLowerStatement}
\label{lem:one-dimensional-lower}
For $\ell,\Delta>0$, $\rho\ge1$, integers $T\ge1$, $0\le m<T$, and nonnegative steps with $S=\sum_{t=m}^{T-2}\eta_t$, there is an $\ell$-smooth function $g_{\mathrm{corridor}}:\R\to\R$ with initial gap $g_{\mathrm{corridor}}(0)-\inf_u g_{\mathrm{corridor}}(u)=\Delta$. Gradient descent from $u_0=0$ with steps $\eta_t/\rho$ satisfies $g_{\mathrm{corridor}}'(u_t)<0$ for all $t<T$ and $g_{\mathrm{corridor}}'(u_t)=-G$ for $m\le t<T$, where $G>0$ and $G^2\asymp\min\{\ell\Delta,\rho\Delta/S\}$. Consequentially,
\begin{equation}
 \frac1T\sum_{t<T}|g_{\mathrm{corridor}}'(u_t)|^2
 \gtrsim\frac{T-m}{T}\min\left\{\ell\Delta,\frac{\rho\Delta}{S}\right\}.
 \label{eq:one-dimensional-average-main}
\end{equation}
\end{restatable}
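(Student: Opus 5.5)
We will build $g_{\mathrm{corridor}}$ explicitly as a piecewise function: a linear "corridor" of slope $-G$ on the region the iterates visit, joined to a smooth quadratic cap that brings the function down to its infimum. Gradient descent from $u_0=0$ with steps $\eta_t/\rho$ on a function with derivative $-G$ moves deterministically, $u_{t+1}=u_t+G\eta_t/\rho$. So as long as the iterates stay in the linear region, $u_t=(G/\rho)\sum_{s<t}\eta_s$.

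The first $m$ steps may be very large, since there is no upper bound on $\eta_t$ for $t<m$. To avoid any dependence on them, we use the freedom to shape $g_{\mathrm{corridor}}$ before the point $a:=u_m$. Let $E:=\sum_{t<m}\eta_t$ and $a:=GE/\rho$. On $(-\infty,a+GS/\rho]$ we take $g'(u)=-G$ exactly. The iterates for $m\le t<T$ lie in $[a,a+GS/\rho]$, because $u_{T-1}=a+GS/\rho$ and $S$ sums the steps up to index $T-2$. Then $g'(u_t)=-G$ for all $t<T$, which is strictly negative, and in particular equals $-G$ for $m\le t<T$.

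Beyond $b:=a+GS/\rho$ we continue with the quadratic $g(u)=g(b)-G(u-b)+\frac{\ell}{2}(u-b)^2$ until its minimizer $b+G/\ell$, and hold $g$ constant afterwards. This piece is $\ell$-smooth and $C^1$ at the junctions, and it lowers the function by $G^2/(2\ell)$ beyond $b$.

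There is a problem with this picture: the linear part before $a$ would make the initial gap $g(0)-\inf g$ include the term $Ga=G^2E/\rho$, which can be huge. The fix is to let $g'$ equal $-G$ only at the sample points and not on whole intervals during the first $m$ steps. On each interval $(u_t,u_{t+1})$ with $t<m$, we take $g'$ to be a tent-shaped bump: $g'$ rises from $-G$ with slope $\ell$ up to a maximum and returns to $-G$ at $u_{t+1}$. This makes the net decrease over that interval zero or even positive. The real constraint is that the stated conclusion also requires $g'(u_t)<0$ at every iterate, which these bumps satisfy since $g'=-G$ at each $u_t$.

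With this modification the first phase contributes about $0$ (or less) to the gap. The total drop from $u_0=0$ to the infimum is then essentially $G\cdot GS/\rho+G^2/(2\ell)$. We choose $G$ to make this exactly $\Delta$: $G^2\bigl(S/\rho+1/(2\ell)\bigr)=\Delta$, so $G^2\asymp\min\{\ell\Delta,\rho\Delta/S\}$. If the bumps make the first phase contribute a small extra rise, we absorb it by a continuous adjustment of $G$, or we make the tents exactly net-zero; a tent whose integral exactly cancels is available whenever the interval is long enough, and otherwise the interval is short, so the loss there is at most $G\cdot(\text{length})$ and can be made net-zero with a shallower bump.

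The final bound follows by summing $G^2$ over the $T-m$ indices $m\le t<T$ and dropping the rest.

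The main obstacle is the first-phase shaping. It requires a $C^1$ piece with $\ell$-Lipschitz derivative on each interval $[u_t,u_{t+1}]$ that has derivative $-G$ at both endpoints and integral exactly zero. A symmetric tent with peak height $h$ on an interval of length $L$ has integral $-GL+$(area of the tent), and the tent area is controlled by $\ell L^2/4$. Exact cancellation is therefore possible when $\ell L^2/4\ge GL$. Otherwise the first phase simply increases the gap by an amount that is controlled: these intervals correspond to steps with $\eta_t\ge 8/\ell$, so $L=G\eta_t/\rho$. I will check whether the resulting residual loss can be bounded by $O(\Delta)$ after rescaling $G$ by a constant, and I expect the length condition to hold comfortably.
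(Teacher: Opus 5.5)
There is a genuine gap in the first phase. You insist on $g_{\mathrm{corridor}}'(u_t)=-G$ at every early iterate $t<m$, so each early step has length $L_t=G\eta_t/\rho$. Any $\ell$-Lipschitz derivative equal to $-G$ at both ends of $[u_t,u_{t+1}]$ satisfies $g'(u)+G\le\ell\min\{u-u_t,u_{t+1}-u\}$, so the function still drops by at least $GL_t-\ell L_t^2/4$ across that interval.

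Exact cancellation therefore needs $L_t\ge4G/\ell$, i.e.\ $\eta_t\ge4\rho/\ell$, not $\eta_t\ge8/\ell$. With $\rho\gtrsim\kappa^2$ this fails for typical large steps. In any case, this lemma puts no lower bound on the first $m$ steps; the condition $\ell\eta_t\ge8$ belongs to \cref{lem:tail-lower}, not to this statement.

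Concretely, take $\eta_t=\rho/\ell$ for $t<m$ and $\eta_t=0$ afterwards, so $S=0$. This schedule is nonincreasing and even satisfies $\ell\eta_t\ge8$ when $\rho\ge8$. Each early interval costs at least $3G^2/(4\ell)$ of the gap, so $G^2\le4\ell\Delta/(3m)$. The lemma, however, requires $G^2\asymp\ell\Delta$ uniformly in $m$. Rescaling $G$ by a universal constant cannot fix this, because the loss grows like $\sum_{t<m}\eta_t/\rho$, which is unbounded relative to $S/\rho+1/\ell$.

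The missing idea is that the statement only requires $g'(u_t)<0$ for $t<m$, so the early derivatives should start small and grow. The paper uses an entry segment with $g'(u)=-(G_0+\ell u)$, a concave quadratic of curvature exactly $\ell$. Gradient descent then gives $G_0+\ell u_{t+1}=(1+\ell\eta_t/\rho)(G_0+\ell u_t)$. Choosing $G_0=G/P_m$ with $P_m=\prod_{i<m}(1+\ell\eta_i/\rho)$ makes $u_m=(G-G_0)/\ell$ land exactly at the start of the constant-slope corridor, with $g'(u_t)=-P_tG_0<0$ before that.

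This entry segment consumes only $(G^2-G_0^2)/(2\ell)\le G^2/(2\ell)$ of the gap, however many or however large the early steps are. Your corridor of length $GS/\rho$ and your quadratic exit match the paper's. Combining them with the calibration $G^2\bigl(S/\rho+1/\ell-1/(2\ell P_m^2)\bigr)=\Delta$ gives the two-sided bound $G^2\asymp\min\{\ell\Delta,\rho\Delta/S\}$.
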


We illustrate the corridor function $g_{\mathrm{corridor}}$ in Figure~\ref{fig:tail-mechanisms} (a). Intuitively, it consists of an entry curve and a constant-slope segment, followed by an exit curve. The entry curve brings the first $m$ updates to the corridor, and the remaining iterates travel its length $GS/\rho$ at slope $-G$, using gap $G^2S/\rho$. The exit curve brings the slope to zero; with curvature of magnitude $\ell$, the entry and exit curves together use gap of order $G^2/\ell$. Hence $G^2(S/\rho+1/\ell)\asymp\Delta$ gives the stated lower bound (Appendix~\ref{app:corridor}).

While Lemma~\ref{lem:one-dimensional-lower} provides a lower bound for small cumulative steps $S$, it weakens as $S$ grows. We complement this by constructing the function $g_\sigma$ which induces a lower bound that grows with $S$.

\subsubsection{The lasting effect of inner perturbation}
\label{sec:high-ratio-prototype}
To construct a noisy function $g_\sigma$ whose lower bound grows with the cumulative steps $S$, we study an inner perturbation that causes a long-lasting outer error. To make our exposition clear, we represent the perturbation by a deterministic shift $\delta>0$ in a simple linear-quadratic game $g_\delta:\R\times\R^2\to\R$:
\begin{equation}
 g_\delta(z;a,b)=\ell za-\frac{\ell}{2\kappa}\bigl[a^2+(a+b+\delta)^2\bigr]
 \label{eq:local-perturbation-game-main}
\end{equation}
with an outer variable $z$ and inner variables $a,b$. Here, the inner maximizer $(\kappa z,-\kappa z-\delta)$ gives outer objective $\Phi_{g_\delta}(z)=\ell\kappa z^2/2$; hence, the inner perturbation $\delta$ does not affect the outer objective. From the origin, however, an ascent step follows the negative derivative $\partial_a g_\delta=-\ell\delta/\kappa$ and makes the inner variable $a$ negative. A descent step then follows the derivative $-\partial_zg_\delta=-\ell a>0$ and moves the outer variable $z$ away from zero, which is the minimizer of the outer objective $\Phi_{g_\delta}$. Consequentially, the inner perturbation $\delta$ induces an outer error through GDA dynamics.

In the high-ratio regime $\rho=\Omega(\kappa^2)$, such an outer error takes longer to remove than to create. To see this clearly, let us measure the time by cumulative steps $\sum_{i<t}\eta_i$. The outer variable $z$ moves slowly, so we may approximate its effect by fixing $z=0$. Then, the inner gradients follow $\partial_a g_\delta=-(\ell/\kappa)(2a+b+\delta)$ and $\partial_b g_\delta=-(\ell/\kappa)(a+b+\delta)$. In particular, the gradients start at $-\ell\delta/\kappa$ and remain negative while $|a|+|b|$ is small relative to $\delta$. Therefore, $a$ reaches a negative constant fraction of $\delta$ on the cumulative timescale $\kappa/\ell$; then it returns to zero as $b$ approaches $-\delta$. Substituting this response into $z_{i+1}-z_i=-\ell\eta_i a_i/\rho$ predicts the scale
\[
 \underbrace{\frac{\ell\delta}{\rho}}_{\text{outer speed}}
 \times\underbrace{\frac{\kappa}{\ell}}_{\text{formation time}}
 =\underbrace{\frac{\kappa\delta}{\rho}}_{\text{error size}}.
\]
After the inner variables adapt enough to the perturbation $\delta$, the inner maximizer gives the approximation $a\approx\kappa z$, so we obtain an outer contraction $z_{i+1}\approx(1-\ell\kappa\eta_i/\rho)z_i$. This suggests that removing a fixed fraction of the error induced by an inner perturbation takes cumulative steps of order $\rho/(\ell\kappa)$, a factor $\rho/\kappa^2$ longer than its formation time. The following lemma captures this intuition and bounds the error on the outer variable $z_t$, and specifies the interval over which this error persists.

\Needspace{8\baselineskip}
\begin{restatable}[Persistent error under a deterministic shift]{lemma}{shiftResponseStatement}
\label{lem:shift-response}
There are universal constants $c,C,C_0>0$ such that, for $\ell,\delta>0$, $\kappa\ge1$, $\rho\ge C_0\kappa^2$, and any finite step-size sequence with $0\le\ell\eta_t\le c\kappa$, both \SGDASim{} and \SGDAAlt{} with exact gradients on the linear quadratic game $g_\delta$ given by~\eqref{eq:local-perturbation-game-main}, initialized at $(0,0,0)$, satisfy $|z_t|\le C\kappa\delta/\rho$ and $|a_t|+|b_t|\le C\delta$ whenever $\sum_{i<t}\eta_i\le c\rho/(\ell\kappa)$, and
\begin{equation}
 z_t\ge c\frac{\kappa\delta}{\rho}
 \quad\text{if}\quad
 C\frac\kappa\ell\le\sum_{i<t}\eta_i\le c\frac\rho{\ell\kappa}.
 \label{eq:response-window-main}
\end{equation}
\end{restatable}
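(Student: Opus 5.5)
Let me fix notation and write out the dynamics. Set $s_t:=\sum_{i<t}\eta_i$, $\beta:=\ell/\kappa$, and $w:=(a,b+\delta)$. The exact-gradient updates are
\[
z_{t+1}=z_t-\frac{\ell\eta_t}{\rho}a_t,
\qquad
w_{t+1}=w_t+\eta_t\bigl(\ell z'_t e_1-\beta Mw_t\bigr),
\qquad
M=\begin{pmatrix}2&1\\1&1\end{pmatrix},
\]
with $z'_t=z_{t+1}$ for \SGDAAlt{} and $z'_t=z_t$ for \SGDASim{}, and $w_0=(0,\delta)$. The eigenvalues of $M$ are $(3\pm\sqrt5)/2$, both in $[0.38,2.62]$. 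The plan is to treat this as a slowly forced linear system. First I solve the inner dynamics with $z$ frozen at $0$ to get a reference trajectory $w^{(0)}_t=\prod_{i<t}(I-\eta_i\beta M)\,w_0$. Then I control the coupling perturbatively, using that $z$ moves at rate $O(\ell\delta/\rho)$.

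\textbf{Step 1 (reference response).} Since $\ell\eta_t\le c\kappa$ gives $\eta_t\beta\le c$, each factor $I-\eta_i\beta M$ is a symmetric contraction. Its eigenvalues $1-\eta_i\beta\lambda_j$ lie in $[1-2.62c,\,1]$, so $\|w^{(0)}_t\|\le\delta$ and $w^{(0)}_t$ decays like $e^{-c'\beta s_t}$. In the eigenbasis $v_\pm$ of $M$, the first coordinate is
\[
a^{(0)}_t=\delta\sum_\pm \alpha_\pm\prod_{i<t}(1-\eta_i\beta\lambda_\pm).
\]
Here $\alpha_++\alpha_-=0$, and the weight on the slow mode $\lambda_-$ has negative sign, so $a^{(0)}_t=-\delta\,h_t$ where $h_t\ge0$. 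I will also show that $\sum_i\eta_i h_i\ge c_1\kappa/\ell$ once $\beta s_t\ge C$. Concretely, $\sum_i\eta_i a^{(0)}_i$ tends to the discrete analogue of $-\delta\,(e_1^\trans(\beta M)^{-1}e_2)=\delta\kappa/\ell$ with sign arranged so that $z$ grows. The point to check is that $h_t\ge0$ and that the tail $\sum_{s_i\ge s_t}\eta_i h_i$ is $O(\delta^{-1}\kappa/\ell\cdot e^{-c'\beta s_t})$. The tail bound follows from the factor bounds $\prod(1-\eta_i\beta\lambda)\le e^{-\lambda\beta s}$, together with matching lower bounds $\ge e^{-2\lambda\beta s}$ for $\eta\beta\le c$, which hold with a product-sum comparison.

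\textbf{Step 2 (bootstrap on the window $s_t\le c\rho/(\ell\kappa)$).} I write $w_t=w^{(0)}_t+r_t$, where $r_{t+1}=(I-\eta_t\beta M)r_t+\eta_t\ell z'_t e_1$. I assume the a priori bound $|z_i|\le C\kappa\delta/\rho$ for $i\le t$, which extends to $z'_i$ by one step. A discrete Duhamel estimate then gives
\[
\|r_t\|\le \frac{\ell\sup|z|}{\beta\lambda_-}\lesssim \frac{\kappa^2}{\rho}C\delta\le \frac{C}{C_0}\delta,
\]
which yields $|a_t|+|b_t|\le C\delta$. Feeding this back,
\[
z_t=\frac{\ell}{\rho}\sum_{i<t}\eta_i\,\delta h_i-\frac{\ell}{\rho}\sum_{i<t}\eta_i (r_i)_1.
\]
The first term is at most $C_1\kappa\delta/\rho$ by Step 1. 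For the second term I use the crude bound $\frac{\ell}{\rho}s_t\|r\|\le \frac{\ell}{\rho}\cdot\frac{c\rho}{\ell\kappa}\cdot\frac{C\kappa^2\delta}{\rho}$. This crude bound is too weak by itself, so I sharpen it. The forcing of $r$ is $\ell z e_1$, which pushes $a$ toward $\kappa z\cdot(\text{positive})$, so the quasi-static part of $(r_i)_1$ is $\approx\kappa z_i$ times a positive constant. This gives $z_{t+1}\approx(1-\Theta(\ell\kappa\eta_t/\rho))z_t+\frac{\ell\eta_t}{\rho}\delta h_t$, i.e.\ the second term acts as a damping. Over $s_t\le c\rho/(\ell\kappa)$ this damping removes only a factor $\ge 1-O(c)$. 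To make this rigorous I split $r$ into its quasi-static part $\kappa(\beta M)^{-1}$-type term and a remainder of order $\kappa^2\delta/\rho$ times the variation of $z$, and I handle the remainder by summation by parts. This closes the bootstrap, giving $|z_t|\le C\kappa\delta/\rho$, and also gives the lower bound $z_t\ge(1-O(c))\bigl(c_1-O(e^{-c'C})\bigr)\kappa\delta/\rho\ge c\kappa\delta/\rho$ once $s_t\ge C\kappa/\ell$.

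\textbf{Main obstacle.} The delicate part is Step 2's treatment of the feedback term. The naive bound loses a factor $\rho/\kappa^2$, so the monotone damping structure has to be exploited. I also need the discrete, nonuniform step sizes and the Alt/Sim difference to cost only $O(c)$ relative errors. I expect to handle both via the contraction estimates with $\eta\beta\le c$ and to take $C_0$ large.
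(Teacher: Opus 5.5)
Your plan works and is close to the paper's proof. The main correction is that the ``main obstacle'' you describe does not exist: the crude bound you wrote down already closes the argument, exactly as in the paper.

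Write $\bar z_t:=\max_{k\le t}|z_k|$, which dominates $|z'_i|$ for $i<t$ under both update orders. Your Duhamel estimate gives $\|r_i\|\le(\kappa/\lambda_-)\bar z_t$ for $i<t$. So on the window $s_t\le c\rho/(\ell\kappa)$,
\[
 \frac{\ell}{\rho}\Bigl|\sum_{i<t}\eta_i(r_i)_1\Bigr|
 \le\frac{\ell s_t}{\rho}\cdot\frac{\kappa}{\lambda_-}\,\bar z_t
 \le\frac{c}{\lambda_-}\,\bar z_t .
\]
With $c\le\lambda_-/2$ this absorbs into the left-hand side and gives $\bar z_t\le2(1+\sqrt2)\kappa\delta/\rho$.

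Your expression $\frac{\ell}{\rho}\cdot\frac{c\rho}{\ell\kappa}\cdot\frac{C\kappa^2\delta}{\rho}$ simplifies to $cC\kappa\delta/\rho$, which is $c$ times the target scale. A loss of $\rho/\kappa^2$ appears only if you replace $\|r\|\lesssim C\kappa^2\delta/\rho$ by the weaker $(C/C_0)\delta$. The lower bound follows the same way once $C$ is fixed and $c$ is taken with $cC/\lambda_-\le1/4$.

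So you can drop the quasi-static splitting and the summation by parts, which as written is only a sketch. The damping $z_{t+1}\approx(1-\ell\kappa\eta_t/\rho)z_t$ is real, but on the window it changes $z$ only by a factor $1-O(c)$, and the crude bound already absorbs that.

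The paper also bypasses your Step~1. Subtracting the $b$-update from the $a$-update gives $(a_{t+1}-b_{t+1})-(a_t-b_t)=\eta_t\bigl(\ell z'_t-\tfrac{\ell}{\kappa}a_t\bigr)$ for both update orders. Summing from the origin and inserting into $z_t=-\frac{\ell}{\rho}\sum_{i<t}\eta_ia_i$ yields the exact identity
\[
 z_t=\frac{\kappa}{\rho}\Bigl[a_t-b_t-\ell\sum_{i<t}\eta_iz'_i\Bigr],
 \qquad
 a_t-b_t=\delta+(w_t)_1-(w_t)_2 .
\]
Combine this with the contraction bound $\|w_t\|\le e^{-\lambda_-\ell s_t/\kappa}\delta+(\kappa/\lambda_-)\bar z_t$. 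Using $\kappa^2/\rho\le1/C_0$ and $\ell\kappa s_t/\rho\le c$, you get both the absorption and the lower bound $z_t\ge\kappa\delta/(4\rho)$ in a few lines. No sign analysis of $h_t$, tail sums, or product lower bounds are needed.

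Even your reference sum is exact: $\sum_{i<t}\eta_iw^{(0)}_i=(\beta M)^{-1}(w_0-w^{(0)}_t)$. Your split into reference response plus forced remainder is therefore the same identity applied to the two pieces separately.
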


\subsubsection{The noisy function \texorpdfstring{$g_\sigma$}{g\_sigma}}
\label{sec:high-ratio-noisy}
We now construct the noisy function $g_\sigma$ with its coordinate-wise components $g_{\sigma,j}$, each of which implements $g_\delta$ through a stochastic noise oracle.
Specifically, we assign each update $j\in J:=\{0,\ldots,T-2\}$ a separate component $g_{\sigma,j}:\R\times\R^3\to\R$ with $y=(a,b,s)$, initialized at the origin in coordinates $v[j],y[j]=(a[j],b[j],s[j])$. We denote its outer objective by $\Phi_{\sigma,j}$, choose the constants $K\asymp\kappa$ and $L\asymp\ell$ with fixed universal factors, and set $g_{\sigma,j}\equiv0$ if $\eta_j\sigma=0$.
\begin{samepage}
\begin{align}
 g_\sigma(v,y)&=\sum_{j\in J}g_{\sigma,j}(v[j];y[j]),
 \label{eq:aggregate-noise-game}\\
 g_{\sigma,j}(v;a,b,s)&:=Lva-\frac{L}{2K}\left[a^2+\bigl(a+b+\chi_j(a+b)\delta_j(s)\bigr)^2\right],
 \label{eq:noise-component-main}\\
 \delta_j(s)&:=\sqrt K\,\eta_j\sigma\sin^2\!\left(\frac{\pi s}{2\eta_j\sigma}\right).
 \label{eq:noise-activation-main}
\end{align}
The smooth cutoff $\chi_j:\R\to[0,1]$ equals one along the trajectory segment used in Lemma~\ref{lem:shift-response}, preserving $g_\delta$ with parameters $L,K$ and inner perturbation $\delta_j(s)$. The cutoff function $\chi_j$ and the choice of constants $K,L$ ensure $\ell$-smoothness and the inner $(\ell/\kappa)$-P{\L} inequality (Appendix~\ref{app:one-perturbation-conversion}).

Our stochastic oracle induces a gradient error $\pm\sigma$ in $s[j]$, and SGDA turns that error into $s[j]=\pm\eta_j\sigma$.
Specifically, let $e(y)$ be the unit vector in the $s[j]$-coordinate for the smallest $j\in J$ with $s[j]=0$, or zero if none exists.
With fresh independent uniform signs $\xi\in\{-1,1\}$, we introduce a~stochastic~oracle
\begin{equation}
 G_x=\nabla_x f_{\mathrm{stable}},\qquad
 G_y=\nabla_y g_\sigma+\sigma\xi e(y).
 \label{eq:noise-oracle-main}
\end{equation}
\end{samepage}
Since the derivative of $\delta_j$ vanishes at $s=\pm\eta_j\sigma$, each $s[j]$ stays fixed once activated, while the remaining $s$-coordinates stay at the origin; only $s[j]$ receives noise. The oracle has variance $\sigma^2\norm{e(y)}^2\le\sigma^2$, and the estimators in~\eqref{eq:noise-oracle-main} are pointwise unbiased, so they satisfy Assumptions~\ref{ass:unbiasedness}--\ref{ass:bounded-variance}.

For $j\ge m$ with enough remaining cumulative steps to cover $\asymp\rho/(\ell\kappa)$, Lemma~\ref{lem:shift-response} and curvature $\Phi_{\sigma,j}''\asymp\ell\kappa$ give an outer gradient $\Phi_{\sigma,j}'(v_t[j])\gtrsim\ell\kappa^{5/2}\eta_j\sigma/\rho$. Subsequent steps are at most $\eta_j$ due to the non-increasing step size schedule, hence, the following lower bound holds:
\begin{equation}
 \sum_{t<T}\E|\Phi_{\sigma,j}'(v_t[j])|^2
 \gtrsim
 \underbrace{\left(\frac{\ell\kappa^{5/2}\eta_j\sigma}{\rho}\right)^2}_{\text{squared outer gradient}}\times
 \underbrace{\frac{\rho}{\ell\kappa\eta_j}}_{\text{number of iterates}}
 =\frac{\ell\kappa^4\sigma^2\eta_j}{\rho}.
 \label{eq:one-noise-contribution-main}
\end{equation}

The next lemma captures this derivation, and lower bounds the outer gradient accumulation from a single inner perturbation.
\begin{restatable}[Contribution of a single inner-gradient perturbation]{lemma}{onePerturbationStatement}
\label{lem:one-perturbation-main}
There are universal constants $c,C,c_{\rm step},c_{\rm high}>0$ and $\kappa_{\rm high}\ge2$ such that, for $\ell,\sigma>0$, $\kappa\ge\kappa_{\rm high}$, $\rho\ge\max\{1,c_{\rm high}\kappa^2\}$, an integer $T\ge2$, any predetermined step sizes $\eta_0\ge\cdots\ge\eta_{T-2}\ge0$ and index $0\le j\le T-2$ with $0<\ell\eta_j\le c_{\rm step}\kappa$ admit an $\ell$-smooth NC-$(\ell/\kappa)$-P{\L} component $g_{\sigma,j}:\R\times\R^3\to\R$. Its outer objective $\Phi_{\sigma,j}(v):=\max_{y\in\R^3}g_{\sigma,j}(v;y)$ is quadratic with curvature $\Phi_{\sigma,j}''\asymp\ell\kappa$ and initial gaps $\Phi_{\sigma,j}(0)-\min_v\Phi_{\sigma,j}(v)=\Phi_{\sigma,j}(0)-g_{\sigma,j}(0;0)=0$. Starting from the origin with one equiprobable $\pm\sigma$ error in the third inner gradient at update $j$ and exact gradients otherwise, both \SGDASim{} and \SGDAAlt{} generate outer iterates $v_t[j]$ satisfying
\begin{equation}
 \sum_{t<T}\E|\Phi_{\sigma,j}'(v_t[j])|^2
 \ge c\frac{\ell\kappa^4\sigma^2\eta_j}{\rho}
 \quad\text{if}\quad
 \sum_{t=j+1}^{T-2}\eta_t\ge C\frac{\rho}{\ell\kappa}.
 \label{eq:one-perturbation-gradient-bound}
\end{equation}
\end{restatable}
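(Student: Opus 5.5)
The plan is to build $g_{\sigma,j}$ from \eqref{eq:noise-component-main}--\eqref{eq:noise-activation-main} and reduce the dynamics after the noisy update to the deterministic-shift game $g_\delta$ of Lemma~\ref{lem:shift-response}. There are three checks: regularity, reduction to the shift lemma, and summation of the gradient bound.

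\textbf{Regularity.} We first pick $K=c_K\kappa$ and $L=c_L\ell$ with small universal factors. We then choose the cutoff $\chi_j$ equal to one on $|a+b|\le C'\sqrt K\eta_j\sigma$ and vanishing beyond twice that radius, with $|\chi_j'|\lesssim 1/(\sqrt K\eta_j\sigma)$ and $|\chi_j''|\lesssim 1/(K\eta_j^2\sigma^2)$. The delicate term is $\chi_j(a+b)\delta_j(s)$. Its $s$-derivatives are bounded by $\sqrt K\eta_j\sigma\cdot(\eta_j\sigma)^{-2}$. This is too large on its own, but multiplying by the prefactor $L/K$ gives curvature of order $L/(\sqrt K\eta_j\sigma)\cdot\delta$-scale terms. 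I expect the admissible trick to be that the $s$-curvature is paired with the product $\delta_j\cdot(a+b+\cdots)$, which stays $O(\eta_j\sigma\sqrt K)$ on the support. The assumption $\ell\eta_j\le c_{\rm step}\kappa$ is then what makes the Hessian $\lesssim\ell$.

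For the P{\L} inequality, we use the substitution $w=a+b+\chi_j\delta_j$. The map $(a,b,s)\mapsto(a,w)$ is a submersion, so $g_{\sigma,j}=Lva-\tfrac{L}{2K}(a^2+w^2)$. Hence $\max_y g_{\sigma,j}=LK v^2/2$, which gives curvature $LK\asymp\ell\kappa$ and zero initial gaps at the origin. The P{\L} inequality then follows from strong concavity in $(a,w)$, provided the Jacobian of $(a,b)\mapsto(a,w)$ has singular values bounded below. This holds because $|\chi_j'\delta_j|\le1/2$ after shrinking constants. Attaining the maximum is immediate, since we may take $s=0$.

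\textbf{Dynamics.} Before update $j$, every gradient is zero at the origin, so the iterates stay at $0$. At update $j$, the noise $\pm\sigma$ moves $s$ to $\pm\eta_j\sigma$. Because $\partial_s\delta_j$ vanishes there and exact gradients are used afterwards, $s$ stays frozen. The perturbation is then $\delta=\sqrt K\eta_j\sigma$, independent of the sign since $\sin^2$ is even. From update $j+1$ on, the game coincides with $g_\delta$ (parameters $L,K$) as long as $|a+b|$ stays in the cutoff region. Lemma~\ref{lem:shift-response} guarantees $|a_t|+|b_t|\le C\delta$ on the relevant window, so the cutoff is inactive by the choice of $C'$. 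For \SGDASim{} versus \SGDAAlt{} we must note that at step $j$ the $y$-update uses $x'_j$, but $v$ is still $0$, so the two orders agree at that step. The outcome is deterministic, so the expectation is trivial.

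\textbf{Summation and main obstacle.} Lemma~\ref{lem:shift-response} gives $v_t\ge c\kappa\delta/\rho$, hence $|\Phi_{\sigma,j}'(v_t)|\gtrsim\ell\kappa^{5/2}\eta_j\sigma/\rho$. This holds for all $t$ with $\sum_{i=j+1}^{t-1}\eta_i\in[C\kappa/\ell,\,c\rho/(\ell\kappa)]$. The hypothesis $\sum_{t>j}\eta_t\ge C\rho/(\ell\kappa)$, together with $\rho\ge c_{\rm high}\kappa^2$, ensures this window has cumulative length $\gtrsim\rho/(\ell\kappa)$. Since every step in it is at most $\eta_j$, the window contains $\gtrsim\rho/(\ell\kappa\eta_j)$ iterates; we take this lower bound without subtracting anything. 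Multiplying as in \eqref{eq:one-noise-contribution-main} yields the claim.

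I expect the smoothness verification of the cutoff-coupled term $\chi_j(a+b)\delta_j(s)$ to be the main obstacle. The scaling $\delta_j\propto\sqrt K\eta_j\sigma$ against the period $\eta_j\sigma$ must be balanced exactly. If the direct bound fails, I would first try restricting the $\sin^2$ to a single period via a further cutoff in $s$. Failing that, I would adjust the $\sqrt K$ factor, since it appears only to produce the $\kappa^{5/2}$ gradient scale.
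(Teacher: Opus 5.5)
Your proposal follows the paper's proof essentially step for step. You build the component from the shifted quadratic, freeze $s[j]$ at $\pm\eta_j\sigma$ so both signs give one deterministic trajectory, identify the dynamics with Lemma~\ref{lem:shift-response} while the cutoff is inactive, and count iterates in the window using $\eta_t\le\eta_j$; your Jacobian argument for $(a,b)\mapsto(a,w)$ matches the paper's P{\L} check.

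You have, however, misplaced the roles of the hypotheses. Smoothness needs no step-size condition: $g_{\sigma,j}=L(\eta_j\sigma)^2g_K(\,\cdot\,/(\eta_j\sigma))$ is an exact amplitude rescaling of a fixed game, so its Hessian is $L\nabla^2g_K$ uniformly in $\eta_j\sigma$. Your own pairing estimate already gives this, so the fallback cutoffs in $s$ are unnecessary. The assumption $\ell\eta_j\le c_{\rm step}\kappa$, together with $\eta_t\le\eta_j$, is what you must invoke to verify the step condition $L\eta_t\le cK$ of Lemma~\ref{lem:shift-response} for $t>j$, a check your proposal omits. It also bounds the boundary overshoot when you count iterates in the window. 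Finally, $\rho\ge c_{\rm high}\kappa^2$ is what supplies the lemma's ratio condition $\rho\ge C_0K^2$ and keeps the window at least half its nominal length.
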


\subsubsection{Combining the corridor function \texorpdfstring{$g_{\mathrm{corridor}}$}{g\_corridor} and noisy function \texorpdfstring{$g_\sigma$}{g\_sigma}}
\label{sec:high-ratio-combination}

With enough remaining cumulative steps, $S\ge C\rho/(\ell\kappa)$, we can sum Lemma~\ref{lem:one-perturbation-main}, omitting late perturbations with insufficient subsequent cumulative steps; their total step size is $O(\rho/(\ell\kappa))$. The corridor covers the other case of small remaining cumulative steps. The following lemma combines these bounds and captures a tradeoff in $S$ (Appendix~\ref{app:tail-lower}).
\Needspace*{11\baselineskip}

\begin{restatable}[Stable-step lower bound and admissibility]{lemma}{tailLowerStatement}
\label{lem:tail-lower}
There are universal $c,c_{\rm high}>0$ and $\kappa_{\rm high}\ge2$ such that, for $\ell,\Delta>0$, $\sigma\ge0$, $\kappa\ge\kappa_{\rm high}$, $\rho\ge\max\{1,c_{\rm high}\kappa^2\}$, an integer $T\ge1$, and predetermined steps $\eta_0\ge\cdots\ge\eta_{T-2}\ge0$, the following holds. Let $m$ count the entries $\eta_t\ge8/\ell$ and assume $m+1<T/2$. For each of \SGDASim{} and \SGDAAlt{}, the game $f_{\mathrm{stable}}$ given by~\eqref{eq:high-separable-game-main} and a fixed oracle can be chosen as an instance in $\Class(\ell,\kappa,\Delta,\sigma)$ with initial outer gap $\Delta$ and inner gap zero at the origin. Writing $S=\sum_{t=m}^{T-2}\eta_t$ and $\rho\Delta/0:=+\infty$, its iterates satisfy
\begin{equation}
 \frac1T\sum_{t=0}^{T-1}\E\norm{\nabla\Phi_{\mathrm{stable}}(u_t,v_t)}^2
 \ge c\min\!\left\{\ell\Delta,
 \max\!\left\{\frac{\rho\Delta}{S},\frac{\kappa^4\ell\sigma^2S}{\rho T}\right\}\right\}.
 \label{eq:tail-lower-main}
\end{equation}
\end{restatable}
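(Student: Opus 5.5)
The construction is the separable game $f_{\mathrm{stable}}(u,v;y)=g_{\mathrm{corridor}}(u)+g_\sigma(v,y)$ from \eqref{eq:high-separable-game-main}. We then lower bound the averaged squared outer gradient by each summand separately. Since $\nabla\Phi_{\mathrm{stable}}(u,v)=(g_{\mathrm{corridor}}'(u),\nabla\Phi_\sigma(v))$ and the blocks decouple under both SGDA orders, we have
\[
\frac1T\sum_{t<T}\E\norm{\nabla\Phi_{\mathrm{stable}}(u_t,v_t)}^2
=\frac1T\sum_{t<T}|g_{\mathrm{corridor}}'(u_t)|^2+\frac1T\sum_{t<T}\sum_{j\in J}\E|\Phi_{\sigma,j}'(v_t[j])|^2 .
\]
So it suffices to show the first term is $\gtrsim\min\{\ell\Delta,\rho\Delta/S\}$ and the second is $\gtrsim\min\{\ell\Delta,\kappa^4\ell\sigma^2S/(\rho T)\}$. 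The maximum of the two is then at least a constant times the larger of these minima.

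\textbf{Admissibility.} $g_{\mathrm{corridor}}$ is built by \cref{lem:one-dimensional-lower} with gap $\Delta$ at $u_0=0$. Each $g_{\sigma,j}$ has zero outer and inner gaps at the origin by \cref{lem:one-perturbation-main}. Hence $\Phi_{\mathrm{stable}}(0)-\Phi^*=\Delta$ and the inner gap is zero, which meets Assumption~\ref{ass:initialization}. Smoothness and the P{\L} inequality need care because $g_\sigma$ is a sum over $j$. The components live on disjoint coordinates, so the Hessian is block diagonal, and $\ell$-smoothness of each block gives $\ell$-smoothness of the sum, up to adjusting the constant in $\ell$ by rescaling $L$. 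The P{\L} inequality holds with constant $\min_j\mu_j=\ell/\kappa$, since both sides of \eqref{eq:inner-pl} are additive over blocks. The oracle \eqref{eq:noise-oracle-main} is unbiased with variance at most $\sigma^2$, because exactly one coordinate is noisy. It also delivers exactly one $\pm\sigma$ error to $s[j]$ at update $j$, provided $\eta_j\sigma>0$.

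\textbf{Corridor term.} By \cref{lem:one-dimensional-lower}, $\frac1T\sum_t|g_{\mathrm{corridor}}'(u_t)|^2\gtrsim\frac{T-m}{T}\min\{\ell\Delta,\rho\Delta/S\}$. Since $m+1<T/2$, the prefactor $(T-m)/T$ is at least $1/2$.

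\textbf{Noise term.} The noise term is the main obstacle. If $\sigma=0$ or $S<2C\rho/(\ell\kappa)$, it is dominated. Indeed, then $\kappa^4\ell\sigma^2S/(\rho T)\lesssim\kappa^3\sigma^2/T$, and this must be compared with $\rho\Delta/S\gtrsim\ell\kappa\Delta$. This comparison is not automatic, so I would instead handle it by noting that when $S$ is this small we have $\rho\Delta/S\gtrsim\ell\kappa\Delta\ge\ell\Delta$. The corridor term alone then gives $c\ell\Delta$, which already exceeds the claimed bound.

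In the main case $S\ge 2C\rho/(\ell\kappa)$, the steps with $t\ge m$ satisfy $\ell\eta_t<8\le c_{\rm step}\kappa$ once $\kappa_{\rm high}$ is large. So \cref{lem:one-perturbation-main} applies to every $j\ge m$ whose tail $\sum_{t>j}\eta_t\ge C\rho/(\ell\kappa)$. Let $j^\ast$ be the last such index. Monotonicity of the steps gives $\sum_{j=m}^{j^\ast}\eta_j\ge S-C\rho/(\ell\kappa)-\eta_{j^\ast+1}\gtrsim S$, using $\eta\le8/\ell\ll\rho/(\ell\kappa)$. Summing the contributions yields $\frac1T\sum_{j}\sum_t\E|\Phi_{\sigma,j}'|^2\ge c\kappa^4\ell\sigma^2S/(\rho T)$.

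Two points need checking here. First, components with $j<m$, where steps are unstable, must not invalidate the argument. I would set $g_{\sigma,j}\equiv0$ for $j<m$, or choose the cutoff so that they are harmless. Second, the single-perturbation lemma must apply verbatim inside the aggregate. The oracle activates $s[j]$ exactly at update $j$ because earlier $s$-coordinates are frozen at $\pm\eta_i\sigma$, where $\delta_i'$ vanishes. Combining the corridor and noise bounds and taking $c$ small finishes the proof.
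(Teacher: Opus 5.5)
Your proposal is correct and follows essentially the paper's proof. It uses disjoint-block admissibility and a case split at $S\asymp\rho/(\ell\kappa)$: for small $S$ the corridor alone gives $c\ell\Delta$, and for large $S$ you add the corridor bound to the single-perturbation bounds summed over indices with enough remaining steps, which the paper packages as Lemma~\ref{lem:noise-components-lower}. Zeroing the blocks with $j<m$ is a harmless alternative to the paper's observation that $\partial_s r_K(a,b,\pm1)=0$ keeps those activated blocks frozen under large steps, provided you keep their $s[j]$ coordinates so the oracle's selector still reaches $s[m]$ at update $m$.
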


Figure~\ref{fig:tail-mechanisms} illustrates the tradeoff suggested by Lemma~\ref{lem:tail-lower}: increasing $S$ weakens the corridor bound $\rho\Delta/S$ but strengthens the stochastic bound $\kappa^4\ell\sigma^2S/(\rho T)$. Applying the AM--GM inequality and $S<8T/\ell$ to Lemma~\ref{lem:tail-lower} proves Theorem~\ref{thm:high-ratio} as follows:
\begin{equation}
 \max\!\left\{\frac{\rho\Delta}{S},\frac{\ell\kappa^4\sigma^2S}{\rho T}\right\}
 \ge\max\!\left\{\frac{\rho\ell\Delta}{8T},\sqrt{\frac{\ell\kappa^4\Delta\sigma^2}{T}}\right\}.
 \label{eq:high-ratio-balance-main}
\end{equation}

%% file: figures/figure1.tex
% The v3 panel PDFs have a common vertical crop; equal heights preserve alignment.
\begin{figure}[!t]
  \centering
  % Remove subfig's default 10pt padding above this row.
  \captionsetup[subfloat]{farskip=0pt,justification=centering}
  \subfloat[The corridor function\label{fig:tail-corridor}]{%
    \includegraphics[height=1.70in]{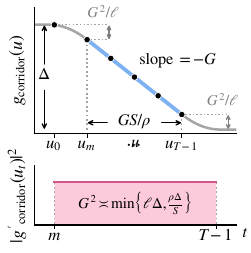}}\hfill
  \subfloat[Inner suboptimality\label{fig:tail-inner}]{%
    \includegraphics[height=1.70in]{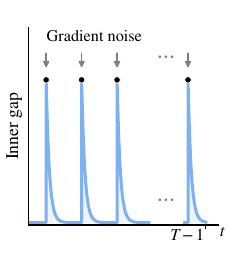}}\hfill
  \subfloat[\mbox{Mean squared outer gradient}\label{fig:tail-outer}]{%
    \makebox[0.34\linewidth][c]{\includegraphics[height=1.70in]{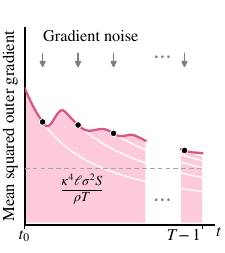}}}
  \caption{\textbf{The competing bounds in $f_{\mathrm{stable}}$ (schematic).}
  Let $m$ count the initial large steps and $S=\sum_{t=m}^{T-2}\eta_t$.
  (a) The corridor keeps gradient magnitude $G$ at $u_m,\ldots,u_{T-1}$.
  (b) Gradient noise creates inner gap $\Phi_\sigma(v_t)-g_\sigma(v_t,y_t)$.
  (c) Pink bands show contributions to $t^{-1}\sum_{i<t}\|\nabla\Phi_\sigma(v_i)\|^2$; the initial increases are omitted. The dashed level is the bound's scale $\kappa^4\ell\sigma^2S/(\rho T)$ for $S\ge C\rho/(\ell\kappa)$.}
  \label{fig:tail-mechanisms}
\end{figure}

%% file: sections/low_ratio.tex
\Needspace{18\baselineskip}
\section{Low-ratio lower bound}
\label{sec:low-ratio}

In the low ratio regime $\rho=O(\kappa^2)$, inner variables may fail to track their maximizer due to relatively small inner step sizes. In particular, \citet{mourrat2026pl} recently constructed a P{\L}-P{\L} game that exhibits a periodic orbit of a continuous-time GDA flow, suggesting that a P{\L} inequality can fail to prevent limit cycles of GDA. Below, we realize a similar construction for discrete-time GDA in NC-P{\L}~games.

\subsection{Inner P{\L} does not prevent a limit cycle}
\label{sec:low-ratio-small-steps}

We start with a simple bilinear game that creates a periodic orbit for GDA:
\begin{equation}
 g(u,a)=\sqrt2\,u^\trans a,\qquad u,a\in\R^2.
 \label{eq:low-bilinear-main}
\end{equation}
For timescale ratio $\rho=1$, GDA flow oscillates in this game \citep{hsieh2021limits}, but the infinite inner maximum for $u\ne0$ violates NC-P{\L}. Adding $-\norm a^2$ gives the outer objective $\norm u^2/2$ yet damps the flow: $\ddot u+2\dot u+2u=0$. Hence, we introduce $b\in\R^2$ and a residual $R:\R^2\times\R^2\to\R^2$ and define
\begin{equation}
 g_{\mathrm{cycle}}(u,a,b)=\sqrt2\,u^\trans a-\norm a^2-\tfrac12\norm{R(a,b)}^2.
 \label{eq:low-game-main}
\end{equation}

\paragraph{Designing the residual $R(a,b)$.}
We want the residual $R(a,b)$ to restore a periodic motion while preserving an inner P{\L} inequality and the quadratic outer objective $\norm u^2/2$. If every $a$ admits $b$ that satisfies $R(a,b)=0$, maximization over $(a,b)$ can cancel the residual and leave the outer objective unchanged. On the other hand, the inner P{\L} inequality follows when the map $(a,b)\mapsto(\sqrt2a,R(a,b))$ has a global smooth inverse with a uniformly bounded derivative (c.f.\ Lemma~\ref{lem:completion-square}). To specify the periodic motion we want $g_{\mathrm{cycle}}$ to induce, let $J$ denote counterclockwise rotation by $\pi/2$, and define
\begin{equation}
 u_\star(t)=(\cos t,\sin t)^\trans,\qquad
 a_\star(t)=-Ju_\star(t)/\sqrt2,\qquad
 b_\star(t)=u_\star(t)/\sqrt2.
 \label{eq:low-cycle-main}
\end{equation}
Our goal is to design the residual $R(a,b)$ for GDA to follow this path. Outer descent already has the required velocity along this path, since $\dot u=Ju=-\sqrt2a$. The inner variables must have velocities $\dot a=Ja$ and $\dot b=-a$. Hence, to obtain the desired inner motion, we can impose the restriction
\begin{equation}
 \nabla_a\!\left(\tfrac12\norm{R(a,b)}^2\right)=-2a+Ja,\qquad
 \nabla_b\!\left(\tfrac12\norm{R(a,b)}^2\right)=a,
 \label{eq:low-orbit-conditions-main}
\end{equation}
on the circle $b=Ja$, $\norm a=1/\sqrt2$ traced by the motion $(a_\star,b_\star)$. With these identities, the GDA equations agree with the target path. We choose further derivatives so that nearby trajectories approach the orbit, and use cutoffs for global smoothness without changing these dynamics (c.f.\ Appendix~\ref{app:low-ratio-periodic}).

\begin{restatable}[An attracting periodic orbit of GDA flow in $g_{\mathrm{cycle}}$]{lemma}{lowPeriodicGameStatement}
\label{lem:low-periodic-solution}
There are universal $C,\mu,h_\star>0$ and a smooth residual $R$ such that the base game $g_{\mathrm{cycle}}:\R^2\times\R^4\to\R$ in \eqref{eq:low-game-main} is jointly $C$-smooth and affine in $u$, has outer objective $\Phi_{g_{\mathrm{cycle}}}(u)=\tfrac12\norm u^2$, and satisfies the inner $\mu$-P{\L} inequality.
The path~\eqref{eq:low-cycle-main}, with $y_\star=(a_\star,b_\star)$, traces a locally stable $2\pi$-periodic orbit $\Gamma$ of GDA flow. There is a compact neighborhood $\mathcal U$ of $\Gamma$ such that both \SGDASim{} and \SGDAAlt{}, started in $\mathcal U$ with exact gradients and equal primal and dual steps $h_t\in[0,h_\star]$, satisfy $(u_t,y_t)\in\mathcal U$ and $\norm{\nabla\Phi_{g_{\mathrm{cycle}}}(u_t)}=\norm{u_t}\ge\tfrac12$ for all $t\ge0$, where $y_t=(a_t,b_t)$.
\end{restatable}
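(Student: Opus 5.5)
Our plan is to build $R$ explicitly through a global diffeomorphism and then argue stability by averaging the discrete dynamics around the flow orbit. \emph{Construction.} We write $R(a,b)=b-Ja+r(a,b)$, where $r$ is a smooth correction. The correction vanishes to first order on the circle $\mathcal C=\{b=Ja,\ \norm a=1/\sqrt2\}$, except for the prescribed gradient data~\eqref{eq:low-orbit-conditions-main}, and it is cut off outside a compact annulus. We then check two global properties.
\begin{itemize}
\item The map $F(a,b)=(\sqrt2a,R(a,b))$ is a global diffeomorphism with uniformly bounded inverse derivative. For fixed $a$, $b\mapsto R(a,b)$ is a perturbation of the identity with $\norm{\partial_b r}\le1/2$, so it is invertible. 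The completion-of-squares lemma (Lemma~\ref{lem:completion-square}) then gives the inner $\mu$-P{\L} inequality.
\item For every $a$ there is a $b$ with $R=0$. Maximizing $-\norm a^2+\sqrt2u^\trans a$ over $a$ then gives $\Phi(u)=\norm u^2/2$.
\end{itemize}
Joint $C$-smoothness and affinity in $u$ are immediate from the compact support of $r$. To realize the orbit conditions on $\mathcal C$, we note that $\nabla(\tfrac12\norm R^2)=DR^\trans R$. Since $R$ should vanish on $\mathcal C$ (so the orbit sits at the inner maximizer structure), $\nabla(\tfrac12\norm R^2)$ would also vanish there. Hence $R$ must \emph{not} vanish on $\mathcal C$. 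Instead we choose $R$ along $\mathcal C$ so that $DR^\trans R$ equals $(-2a+Ja,\,a)$. Taking $\partial_bR=I$ there forces $R=a$ on $\mathcal C$, and $\partial_aR^\trans a=-2a+Ja$ fixes the normal component of $\partial_aR$ along $a$. We extend by a tubular-neighborhood formula and then verify that the zero set of $R$ is a graph $b=\beta(a)$ disjoint from $\mathcal C$.

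\emph{Orbit and stability.} Substituting~\eqref{eq:low-cycle-main} confirms that $\Gamma$ solves GDA flow. For local stability we compute the monodromy matrix of the linearization along $\Gamma$. The free second-order data of $r$ normal to $\mathcal C$ (the normal Hessian entries) enter the variational equation as tunable damping. Choosing them yields Floquet multipliers strictly inside the unit circle, apart from the trivial one along the flow, so $\Gamma$ is a hyperbolic attracting cycle.

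\emph{Discretization.} Both SGDA variants with steps $h_t\le h_\star$ are one-step integrators of the flow with local error $O(h_t^2)$, uniformly for nonincreasing or arbitrary $h_t$. The alternating version differs from the simultaneous one by $O(h_t^2)$. We use a Lyapunov function near a hyperbolic attracting cycle: a smooth $V$ on a tubular neighborhood, obtained from the stable-fibration or phase–amplitude coordinates, with $\dot V\le-\lambda V$ off $\Gamma$ in the transverse directions. For a small sublevel set $\mathcal U=\{V\le r_0\}$, one step changes $V$ by at most $-\lambda h V+Kh^2$. With $h_\star$ small relative to $r_0$, the set $\mathcal U$ is therefore forward invariant; steps with $h_t=0$ trivially preserve it. Finally, $\norm{u}=1$ on $\Gamma$, so shrinking $\mathcal U$ gives $\norm{u_t}\ge1/2$.

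\emph{Main obstacle.} The hardest step is reconciling the P{\L}/global-invertibility requirement with the nonzero orbit data of $R$ and the damping choice. The required damping must come from second-order terms whose size stays compatible with $\norm{\partial_b r}\le1/2$. We would first try a concrete polynomial ansatz in the local coordinates $(\norm a^2-\tfrac12,\ b-Ja)$ and compute the Floquet exponents explicitly, using rotational symmetry to reduce the variational equation to constant coefficients.
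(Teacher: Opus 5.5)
\textbf{Overall.} Your architecture is the paper's. It uses a triangular map $(a,b)\mapsto(\sqrt2a,R(a,b))$ with $b\mapsto R(a,b)$ a perturbation of the identity, fed into Lemma~\ref{lem:completion-square} for both $\Phi(u)=\norm{u}^2/2$ and inner P{\L}. It uses the orbit data $R=a$, $D_bR=I$, $D_aR^\trans a=-2a+Ja$ on the circle, and rotational equivariance to reduce the variational equation to constant coefficients in a rotating frame. It closes with a phase--amplitude Lyapunov function and an $O(h^2)$ one-step estimate for both update orders. However, the one step that carries the stability claim is missing.

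\textbf{The gap: the stabilizing choice is only asserted.} You say that tuning the normal second-order data of $r$ ``yields Floquet multipliers strictly inside the unit circle,'' but you neither exhibit such data nor show that it exists, and this is not automatic. Once $R=a$ and $D_bR=I$ on $\mathcal C$, the constraints are tight:
\begin{itemize}
\item Differentiating $R=a$ along the circle forces $D_aR\,(Ja)=a+Ja$.
\item The orbit condition forces $a^\trans D_aR=(-2a+Ja)^\trans$.
\item So $D_aR$ on $\mathcal C$ has a single free scalar, namely $(Ja)^\trans D_aR\,a$.
\item Second derivatives of $R$ enter the linearization only through the symmetric block $\sum_kR_k\nabla^2R_k$ with $R=a$, since the $y$-block of the Jacobian is the inner Hessian.
\item The $u$-couplings are fixed, and phase shifts force a zero eigenvalue.
\end{itemize}
Whether some admissible choice puts the other five eigenvalues of the rotating-frame Jacobian in the open left half-plane must be settled by computation.

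\textbf{How the paper closes it.} It takes $\psi(a)=\chi_a(\norm{a}^2)[c(\norm{a}^2)a+d(\norm{a}^2)Ja]$. The values $c(\tfrac12)=1$, $c'(\tfrac12)=-3$, $d(\tfrac12)=-1$ are forced, and the values $c''(\tfrac12)=7$, $d'(\tfrac12)=1$ are chosen. It then computes $A_\star$ and $S_\star$ and obtains $\det(\lambda I-M_\star)=\lambda(\lambda^5+7\lambda^4+7\lambda^3+21\lambda^2+10\lambda+4)$. It certifies the quintic as Hurwitz with a Routh array whose first column is positive. This choice uses the free first-order scalar ($d'(\tfrac12)=1$ fixes it) as well as second-order data. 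So if you freeze the first-order data and tune only normal Hessian entries, that restricted family needs its own verification.

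\textbf{Two smaller points.}
\begin{itemize}
\item With $R=b-Ja+r$ and $b=Ja$ on $\mathcal C$, the requirement $R=a$ forces $r=a\neq0$ there. So $r$ cannot ``vanish to first order on $\mathcal C$''; this part of your description is inconsistent.
\item The cutoff must act in $b$ as well as in $a$. Otherwise $R\,D^2R$ is unbounded as $\norm{b}\to\infty$ and global smoothness fails. Its $b$-gradient must also be small relative to $\sup\norm{r}$ so that $\norm{\partial_b r}\le\tfrac12$ holds; the paper's condition $\norm{\psi}_\infty\norm{\nabla\chi_b}_\infty\le\tfrac12$ is exactly this.
\end{itemize}
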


\subsection{From the cycle to the low-ratio lower bound}
\label{sec:low-ratio-initial-steps}

Lemma~\ref{lem:low-periodic-solution} shows that continuous-time GDA can be trapped in a limit cycle in $g_{\mathrm{cycle}}$. However, its discrete-time guarantee remains local and requires step sizes at most $h_\star$, which can be extremely small. The next lemma fills this gap by modifying the game, and shows that GDA can reach the orbit from a suitable initialization after any finite sequence of larger steps and remain nearby under subsequent small steps.

\begin{restatable}[GDA can reach the orbit in a modified game $\tilde g_{\mathrm{cycle}}$]{lemma}{lowInitialUpdatesStatement}
\label{lem:low-initial-updates}
There are universal $C,\mu>0$ with the following property. For any finite sequence of steps $h_0,\ldots,h_{m-1}>h_\star$, including $m=0$, we can modify the game $g_{\mathrm{cycle}}$ in \cref{lem:low-periodic-solution} to obtain $\tilde{g}_{\mathrm{cycle}}:\R^2\times\R^5\to\R$. This game is jointly $C$-smooth and affine in $u$, satisfies the inner $\mu$-P{\L} inequality, and has outer objective $\Phi_{\tilde{g}_{\mathrm{cycle}}}(u)=\norm u^2/2$. With $y=(a,b,r)$ and $e_1=(1,0)^\trans$, there is an initialization $(e_1,y_0)$ with inner gap at most $C$. From this initialization, both GDA\textsubscript{Sim} and GDA\textsubscript{Alt}, using exact gradients and equal primal and dual steps, keep $u_t=e_1$ for $0\le t\le m$ and reach $y_m=(y_\star(0),0)$.
For every continuation with $h_t\in[0,h_\star]$, each algorithm follows its $g_{\mathrm{cycle}}$ dynamics in $(u,a,b)$, while $r$ remains zero.
\end{restatable}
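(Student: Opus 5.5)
The plan is to build $\tilde g_{\mathrm{cycle}}$ from $g_{\mathrm{cycle}}$ by adding one auxiliary inner coordinate $r\in\R$. This coordinate acts as a ``clock'' that, during the $m$ large steps, steers $y=(a,b)$ along a prescribed discrete path ending at $y_\star(0)$. At the same time it cancels the primal gradient, so $u$ stays at $e_1$. Concretely, I would pick a finite sequence of inner points $y^{(0)},\ldots,y^{(m)}=y_\star(0)$ together with clock values $r_0>r_1>\cdots>r_m=0$, spaced at least a fixed constant apart. Here $y^{(0)}$ is chosen so that the initial inner gap is $O(1)$; for instance, take all $y^{(k)}=y_\star(0)$ and let the modification only absorb the gradients. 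Then I would set
\begin{equation*}
 \tilde g_{\mathrm{cycle}}(u,a,b,r)=g_{\mathrm{cycle}}(u,a,b)-\tfrac12 r^2+\psi(u,a,b,r),
\end{equation*}
where $\psi$ is supported in disjoint small balls $B_k$ around the points $(e_1,y^{(k)},r_k)$ for $k<m$, and vanishes identically on the slab $|r|\le r_{\min}/2$.

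The first step is to prescribe $\psi$ on each ball $B_k$ as an affine function times a bump that equals one near the centre. Its gradient at the centre is chosen so that the total gradient there produces the next point exactly. For the primal coordinate this means $\nabla_u\tilde g_{\mathrm{cycle}}=0$, so $u$ stays at $e_1$. For the inner coordinates it means $y^{(k)}+h_k\nabla_y\tilde g_{\mathrm{cycle}}=(y^{(k+1)},r_{k+1})$. The prescribed gradient magnitude is $O(1/h_k)\le O(1/h_\star)$, since $h_k>h_\star$, and the radius of $B_k$ scales with the spacing. Hence the Hessian of $\psi$ is bounded by a constant depending only on $h_\star$, which is universal, and joint $C$-smoothness follows. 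To keep the game affine in $u$, I would make the bump depend only on $(y,r)$ and let $\psi$ be linear in $u$, namely $\psi=\chi_k(y,r)\bigl[\langle \alpha_k,u\rangle+\beta_k(y,r)\bigr]$. The coefficient $\alpha_k$ cancels $\nabla_u g_{\mathrm{cycle}}=\sqrt2 a$ at the centre.

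Simultaneous and alternating updates coincide here because $u$ never moves. For the same reason, the two update rules reach $y_m=(y_\star(0),0)$ identically. After time $m$ we have $r=0$, and $\partial_r\tilde g=-r=0$ on the slab, so $r$ stays zero. The dynamics in $(u,a,b)$ are then exactly those of $g_{\mathrm{cycle}}$, provided the trajectory stays in the slab, which it does since $r\equiv0$.

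\textbf{Main obstacle.} The hard part is preserving the inner P{\L} inequality and keeping $\Phi_{\tilde g_{\mathrm{cycle}}}(u)=\|u\|^2/2$ while $\psi$ is a nonconcave bump far from the maximizer. My plan is the following.
\begin{itemize}
\item The maximizer set is left untouched: $\psi\le0$ on the balls, or the balls avoid the maximizer region, where $r=0$.
\item The unperturbed function $g_{\mathrm{cycle}}-r^2/2$ satisfies P{\L}, giving $\|\nabla_{(y,r)}\|^2\gtrsim \mu\cdot\text{gap}$.
\item On $B_k$ we have $|r|\ge r_{\min}/2$, so the gap is at least of order $r_k^2$, while $\|\nabla\|\ge|r|$ from the $r$-component.
\end{itemize}
I would therefore scale the $r$-part of $\psi$'s gradient to be small relative to $|r_k|$. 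This amounts to placing the clock values at large $|r_k|\asymp 1/h_\star$, so that the $-r$ term dominates and the needed decrement $r_{k+1}-r_k$ is realized mainly by $-h_k r_k$ plus a bounded correction. The gap inflation from $\psi$ is $O(\text{bump height})$, which is controlled by $r_k^2$. This yields P{\L} with a worse but universal constant. If $m$ is large, the $r_k$ must grow accordingly, but smoothness constants stay uniform because every ball has the same geometry. The initial inner gap is then $O(r_0^2)$, which could depend on $m$. To fix this, I would instead place the clock steps in a cyclic or bounded region using a second direction in $r$ if needed. This is the delicate point to verify.
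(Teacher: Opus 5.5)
Your construction fails in two concrete places. First, the term $\chi_k(y,r)\ip{\alpha_k}{u}$ breaks global joint smoothness. Its Hessian in $(y,r)$ is $\ip{\alpha_k}{u}\nabla^2\chi_k(y,r)$, which grows without bound as $\norm{u}\to\infty$, and \cref{ass:smoothness} is global. You need $\alpha_k\ne0$ only because your query states have $a\ne0$. The paper instead puts them at $a=0$, where $\nabla_u\tilde g_{\mathrm{cycle}}=\sqrt2a$ already vanishes, and never touches the bilinear term $\sqrt2u^\trans a$; every modification lives in a $u$-free residual. Second, your claim that the prescribed gradients are $O(1/h_k)$ contradicts P{\L}. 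Keeping $u=e_1$ forces $\nabla_u\tilde g_{\mathrm{cycle}}(e_1,y_k)=0$, and because the game is affine in $u$, this gives $\tilde g_{\mathrm{cycle}}(e_1,y_k)=\tilde g_{\mathrm{cycle}}(0,y_k)\le\Phi(0)=0$. So the inner gap at every query state is at least $1/2$. P{\L} then gives $\norm{\nabla_y\tilde g_{\mathrm{cycle}}(e_1,y_k)}\ge\sqrt\mu$, hence $\norm{y_{k+1}-y_k}\ge\sqrt\mu\,h_k$. This displacement is unbounded, because the steps $h_k>h_\star$ have no upper bound; in \cref{thm:low-ratio}, $h_t\propto\eta_t$ is arbitrary. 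In particular, taking every $y^{(k)}=y_\star(0)$ with small gradients is impossible.

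The same computation shows that the obstacle you flag is fatal for a penalized clock, not merely delicate. The error bound in \cref{lem:inner-pl-consequences} gives $\norm{\nabla_y\tilde g_{\mathrm{cycle}}(e_1,y)}\ge\mu\dist(y,Y^\star(e_1))$. Hence the last query state lies within $\norm{y_m-y_{m-1}}/(\mu h_{m-1})$ of $Y^\star(e_1)$. With $-r^2/2$ and bumps that avoid the maximizers, $Y^\star(e_1)$ is bounded; indeed it is a single fixed point. Then $\norm{y_m-y_{m-1}}$ stays bounded as $h_{m-1}\to\infty$, contradicting the lower bound $\sqrt\mu\,h_{m-1}$. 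This rules out every design with bounded $Y^\star(e_1)$, including your cyclic or bounded-region fallback.

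The missing idea is to leave $r$ \emph{unpenalized}. Then $Y^\star(e_1)$ contains a whole line in $r$, and the clock can travel arbitrarily far at no cost in the gap. The paper takes $y_t=(0,-Be_1,r_t)$ with $r_t=-\sum_{j\ge t}h_j$. Each query then has $r$-gradient exactly $1$ and gap $(1+B^2)/2$, and distinct query states are separated by more than $h_\star$. The prescribed gradients are realized by composing $\bar H=((a,R(a,b)),r)$ with the $a$-preserving local diffeomorphisms of \cref{lem:low-local-modification} in disjoint balls. The game is $\tilde g_{\mathrm{cycle}}=\ip{(u,0)}{\widetilde T}-\frac12\norm{\widetilde T}^2$ with $\widetilde T=(\sqrt2a,\pi_b\widetilde H)$. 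Then $\Phi=\norm{u}^2/2$ and uniform P{\L} follow directly from \cref{lem:completion-square}, with no competition between bumps and a penalty left to verify.
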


\Needspace{19\baselineskip}
We now rescale $\tilde g_{\mathrm{cycle}}$ to satisfy $\ell$-smoothness and the inner $(\ell/\kappa)$-P{\L} inequality. We choose $f_{\mathrm{cycle}}(x,y)=A\tilde g_{\mathrm{cycle}}(x/s_x,y/s_y)$ with positive scales satisfying $s_y^2=\rho s_x^2$, which converts both updates at ratio $\rho$ to equal-step dynamics. The scales in Appendix~\ref{app:low-ratio-theorem} make $f_{\mathrm{cycle}}$ $\ell$-smooth with inner P{\L} constant of order $\ell/\sqrt\rho$, so $\rho\le c_{\rm low}\kappa^2$ ensures inner P{\L} at least $\ell/\kappa$. They also meet the initial-gap budgets and give $\norm{\nabla\Phi_{\mathrm{cycle}}(x_t)}^2\gtrsim\ell\Delta$ at every iterate, proving the following theorem.

\begin{restatable}[Necessity of an order-$\kappa^2$ fixed ratio]{theorem}{lowRatioTheoremStatement}
\label{thm:low-ratio}
There are universal $c,c_{\rm low}>0$ and $\kappa_{\rm low}\ge2$ such that, for every $\kappa\ge\kappa_{\rm low}$, $\ell,\Delta>0$, $\sigma\ge0$, integer $T\ge1$, $1\le\rho\le c_{\rm low}\kappa^2$, and predetermined nonnegative non-increasing step sizes, each of \SGDASim{} and \SGDAAlt{} admits a deterministic instance $f_{\mathrm{cycle}}$ in $\Class(\ell,\kappa,\Delta,\sigma)$ with $d_x=2$, $d_y=5$, a strongly convex quadratic outer objective, and
\begin{equation}
 \min_{0\le t<T}\norm{\nabla\Phi(x_t)}^2\ge c\ell\Delta.
 \label{eq:low-ratio-result}
\end{equation}
\end{restatable}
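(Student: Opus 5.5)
The plan is to take \cref{lem:low-initial-updates} and rescale the game $\tilde g_{\mathrm{cycle}}$ as sketched above, $f_{\mathrm{cycle}}(x,y)=A\tilde g_{\mathrm{cycle}}(x/s_x,y/s_y)$, and then choose $A,s_x,s_y$ to meet all the class constraints at once.

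\textbf{Step 1: reduce to equal-step dynamics.} With $x=s_xu$ and $y=s_yw$, SGDA at ratio $\rho$ with step $\eta_t$ acts on $(u,w)$ as GDA with primal step $A\eta_t/(\rho s_x^2)$ and dual step $A\eta_t/s_y^2$. Imposing $s_y^2=\rho s_x^2$ makes both steps equal to $h_t:=A\eta_t/s_y^2$. The step order (Sim versus Alt) is preserved, so the iterates are exactly the GDA$_{\mathrm{Sim}}$ or GDA$_{\mathrm{Alt}}$ iterates of $\tilde g_{\mathrm{cycle}}$.

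\textbf{Step 2: split the step sequence.} Since the steps are nonincreasing, the steps with $h_t>h_\star$ form an initial block $h_0,\dots,h_{m-1}$, possibly with $m=0$, and all later steps lie in $[0,h_\star]$. I then invoke \cref{lem:low-initial-updates} with exactly these $m$ large steps. Here I use that the steps are predetermined, so the instance may depend on them. This gives:
\begin{itemize}
\item $u_t=e_1$ for $t\le m$, hence $\norm{u_t}=1$;
\item $y_m=(y_\star(0),0)$, with $r$ staying zero afterwards;
\item for $t\ge m$, the $(u,a,b)$ dynamics coincide with those of $g_{\mathrm{cycle}}$, started at $(e_1,y_\star(0))\in\Gamma\subset\mathcal U$.
\end{itemize}
\cref{lem:low-periodic-solution} then yields $\norm{u_t}\ge 1/2$ for all $t\ge m$. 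So $\norm{\nabla\Phi_{\tilde g_{\mathrm{cycle}}}(u_t)}\ge1/2$ for every $t<T$.

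\textbf{Step 3: scaling bookkeeping.} The Hessian of $f_{\mathrm{cycle}}$ has blocks of size $A/s_x^2$, $A/(s_xs_y)$ and $A/s_y^2$, and $s_x\le s_y$. Setting $A/s_x^2=\ell/C$ therefore gives $\ell$-smoothness under the paper's coordinate-wise Lipschitz convention, possibly after enlarging $C$. The outer objective becomes $\Phi(x)=A\norm{x}^2/(2s_x^2)$, which is strongly convex quadratic with $\Phi^*=0$.

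The inner P{\L} inequality rescales as follows: $\frac12\norm{\nabla_y f}^2=\frac{A^2}{2s_y^2}\norm{\nabla_w\tilde g}^2\ge\frac{A\mu}{s_y^2}(\Phi-f)$. Its constant is therefore $A\mu/s_y^2=\mu\ell/(C\rho)$.

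This appears to give $\ell/\rho$ rather than the $\ell/\sqrt\rho$ claimed in the text. That would require $\rho\lesssim\kappa$, not $\rho\lesssim\kappa^2$. This is the main obstacle. I would resolve it by not insisting on the full-scale smoothness bound $A/s_x^2\asymp\ell$. Instead, I would balance the blocks: the cross term $A/(s_xs_y)=A/(\sqrt\rho\, s_x^2)$ is the one that must be at most $\ell$. The primal block is affine in $u$ (the $uu$-Hessian vanishes), and the $yy$-block is $A/s_y^2$.

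Taking $A/(s_xs_y)\asymp\ell$ gives $A/s_y^2\asymp\ell/\sqrt\rho$. The P{\L} constant is then $\asymp\mu\ell/\sqrt\rho\ge\ell/\kappa$ whenever $\rho\le c_{\rm low}\kappa^2$. The $x$-gradient Lipschitz constant in $x$ is zero because $\tilde g$ is affine in $u$. The remaining blocks are $A/(s_xs_y)\asymp\ell$ and $A/s_y^2\le\ell$, so smoothness holds. This affine structure is exactly why \cref{lem:low-initial-updates} keeps the game affine in $u$.

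\textbf{Step 4: gaps and the final bound.} The remaining free scale fixes the budgets.
\begin{itemize}
\item Outer gap: $\Phi(x_0)=A/2$ for $u_0=e_1$, so I set $A=2\Delta$.
\item Inner gap: it is $A$ times an $O(1)$ quantity, hence at most $c_{\rm init}\Delta$ after shrinking $A$ by a universal factor. The leftover outer slack is absorbed into $c$.
\item Gradient: $\norm{\nabla\Phi(x_t)}^2=(A/s_x)^2\norm{u_t}^2\ge A^2/(4s_x^2)$. Since $A/s_x^2=\sqrt\rho\,A/(s_xs_y)\asymp\sqrt\rho\,\ell\ge\ell$, this is $\gtrsim A\ell\asymp\ell\Delta$.
\end{itemize}
The instance is deterministic, so $\sigma$ is irrelevant and the oracle conditions hold trivially. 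Finally, $\kappa\ge\kappa_{\rm low}$ keeps $\ell/\kappa\le$ the P{\L} constant compatible with $\kappa\ge1$. This gives \eqref{eq:low-ratio-result}.
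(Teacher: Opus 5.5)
Your proposal is correct and follows the paper's proof. It uses the same rescaling $f_{\mathrm{cycle}}(x,y)=A\tilde g_{\mathrm{cycle}}(x/s_x,y/s_y)$ with $s_y^2=\rho s_x^2$, the same split of the induced steps at $h_\star$ fed into \cref{lem:low-initial-updates,lem:low-periodic-solution}, and, after your correction in Step~3, the same balancing $A/(s_xs_y)\asymp\ell$, $A/s_y^2\asymp\ell/\sqrt\rho$, which exploits affinity in $u$ to obtain inner P{\L} of order $\ell/\sqrt\rho$. The only difference is the amplitude: you take $A\asymp\Delta$, giving $\norm{\nabla\Phi(x_t)}^2\gtrsim\sqrt\rho\,\ell\Delta$, whereas the paper takes $A=2\zeta\Delta/\sqrt\rho$, which yields only order $\ell\Delta$ but also keeps the regularized initialization budget used for the Smoothed-AGDA comparison in Appendix~\ref{app:smoothed-initialization}; either choice suffices for the theorem as stated.
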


Crucially, the theorem answers the question raised by \citet{cho2023sgda} affirmatively: \emph{timescale ratio $\rho=\Omega(\kappa^2)$ is necessary for SGDA to converge in NC-P{\L} games.} In particular, it shows that smaller timescale ratios $\rho=o(\kappa^2)$ cannot provide convergence guarantees in general, even with deterministic gradients and non-increasing step sizes. Together with Theorem~\ref{thm:high-ratio}, it yields our tight main lower bound in Theorem~\ref{thm:common-lower}.

%% file: sections/conclusion.tex
\section{Conclusion}
\label{sec:conclusion}

In this work, we established the first tight lower bounds of two-timescale SGDA with a fixed timescale ratio and nonincreasing step sizes in NC-P{\L} games. The lower bounds explain the fundamental limitation of two-timescale SGDA: small ratios can keep outer gradients bounded away from zero, while large ratios leave a tradeoff between outer progress and errors induced by inner noise. Our negative result justifies the recent line of work developing alternative methods, such as Smoothed-AGDA \citep{yang2022faster} and methods based on momentum \citep{huang2025enhanced,cai2026bias}.

On the other hand, our analysis only applies to SGDA with a fixed timescale ratio and nonincreasing step sizes; therefore, it does not rule out nonmonotone or signed step size schedules \citep{altschuler2025silver,shugart2025negative} that might improve SGDA's complexity. An additional interesting direction to explore would be to extend the ideas behind our lower bounds to nonconvex-concave problems. We leave these two questions as intriguing future work.

%% file: appendices/main_result.tex
\section{Proof of main result}
\label{app:main-result}

We first combine the two ratio regimes to prove Theorem~\ref{thm:common-lower}. We then prove Theorems~\ref{thm:high-ratio} and~\ref{thm:low-ratio} using the construction lemmas from Sections~\ref{sec:high-ratio} and~\ref{sec:low-ratio}, and establish the matching upper bound for \SGDASim{}.

\subsection{Proof of Theorem~\ref{thm:common-lower}}
\label{app:common-lower-proof}

\begingroup
\renewcommand{\theHequation}{restated.common-theorem.\arabic{equation}}
\commonLowerStatement*
\endgroup

\begin{proof}
We first choose the construction constants so that the two ratio ranges overlap. The fixed cycle and its extension in Lemma~\ref{lem:low-initial-updates} determine a universal inner-P{\L} constant $\mu$. Let $C_{\mathrm{low}}$ be the smoothness-normalization constant chosen in Appendix~\ref{app:low-ratio-theorem}; the low-ratio construction then applies up to $c_{\mathrm{low}}=(\mu/C_{\mathrm{low}})^2$. These constants are fixed before we calibrate the noise component. Let $C_{\rm resp}$ be the ratio constant $C_0$ in \cref{lem:shift-response}, enlarged as in Appendix~\ref{app:one-perturbation-conversion}, let $c_0,C_0$ be the constants from \cref{lem:raw-perturbation-game}, and choose
\[
 0<\gamma\le
 \min\left\{\sqrt{\frac{c_{\mathrm{low}}}{C_{\rm resp}}},\frac{c_0}{2C_0}\right\},
 \qquad \theta:=\frac{\gamma}{c_0}.
\]
These choices calibrate only the high-ratio block: they give \eqref{eq:high-ratio-scale-choice} and $c_{\rm high}=\gamma^2C_{\rm resp}\le c_{\mathrm{low}}$. Thus its ratio range begins no later than the endpoint of the low-ratio range. The low-ratio constants do not depend on $\gamma$ or $\theta$.

Choose $\kappa_0$ above both theorem thresholds. In the high-ratio bound, $\rho\ell\Delta/T\ge c_{\rm high}\kappa^2\ell\Delta/T$; the stochastic term already matches \eqref{eq:common-lower}. The low-ratio bound supplies the cap $c\ell\Delta$. Since the ranges cover every $\rho\ge1$, taking the ratio infimum proves \eqref{eq:common-lower}.

For $\eps^2\le c\ell\Delta$ with sufficiently small universal $c>0$, inversion gives the lower complexity bound in \eqref{eq:tight-complexity}. The matching constant-step upper bounds cited in Section~\ref{sec:main-results} complete the proof.
\end{proof}

\subsection{Proof of Theorem~\ref{thm:high-ratio}}
\label{app:high-ratio-theorem}

\begingroup
\renewcommand{\theHequation}{restated.high-theorem.\arabic{equation}}
\highRatioTheoremStatement*
\endgroup

\begin{proof}
For each of \SGDASim{} and \SGDAAlt{} with predetermined non-increasing step sizes, let $m$ count the initial excessively large steps $\ell\eta_t\ge8$ and set $S=\sum_{t=m}^{T-2}\eta_t$. If $m+1\ge T/2$, Lemma~\ref{lem:quadratic-gradient-lower} gives an average squared outer gradient at least $c\ell\Delta$ on $f_{\mathrm{unstable}}$, proving the capped bound. Otherwise, Lemma~\ref{lem:tail-lower} applies to $f_{\mathrm{stable}}$ and also gives the cap when $S=0$. For $S>0$, the remaining steps satisfy $S\le8T/\ell$, while the maximum of two nonnegative numbers is at least their geometric mean. Hence
\begin{equation}
 \max\!\left\{\frac{\rho\Delta}{S},
 \frac{\kappa^4\ell\sigma^2S}{\rho T}\right\}
 \ge\max\!\left\{\frac{\rho\ell\Delta}{8T},
 \sqrt{\frac{\kappa^4\ell\Delta\sigma^2}{T}}\right\}.
 \label{eq:high-ratio-cancellation}
\end{equation}
Substituting into \eqref{eq:tail-lower-main} and adjusting the universal constant proves \eqref{eq:high-ratio-result}. In each case, the instance is chosen after fixing the schedule. Taking the instance supremum and then the schedule infimum proves the theorem.
\end{proof}

\subsection{Proof of Theorem~\ref{thm:low-ratio}}
\label{app:low-ratio-theorem}
\label{app:low-ratio-scaling}

\begingroup
\renewcommand{\theHequation}{restated.low-theorem.\arabic{equation}}
\lowRatioTheoremStatement*
\endgroup

\begin{proof}
We rescale the fixed-dimensional game while retaining its strongly convex quadratic outer objective, then verify smoothness, inner P{\L}, and the initial-gap budgets.

Let $C\ge1$ bound the smoothness constant and initial inner gap in \cref{lem:low-initial-updates}. Its construction also gives $\norm{\nabla_u\tilde{g}_{\mathrm{cycle}}(e_1,y_0)}\le1$, as verified in Appendix~\ref{app:low-ratio-interpolation}. Choose a universal $C_{\mathrm{low}}\ge2C$, and fix $0<\zeta\le\min\{1/2,c_{\rm init}/(3C)\}$. This choice also covers the regularized initial gap in Appendix~\ref{app:smoothed-initialization}. For a ratio $\rho$, set
\begin{equation}
 A:=\frac{2\zeta\Delta}{\sqrt \rho},
 \qquad
 s_x^2:=\frac{2C_{\mathrm{low}}\zeta\Delta}{\ell\rho},
 \qquad
 s_y^2:=\frac{2C_{\mathrm{low}}\zeta\Delta}{\ell}.
 \label{eq:low-exact-scales-app}
\end{equation}
For a fixed horizon and predetermined non-increasing step sizes $(\eta_t)$, put $h_t=\ell\eta_t/(C_{\mathrm{low}}\sqrt \rho)$. Let $m$ count the initial steps with $h_t>h_\star$. Apply \cref{lem:low-initial-updates} to those steps, writing $\tilde{g}_{\mathrm{cycle}}$ for its game. For both \SGDASim{} and \SGDAAlt{}, the lemma keeps $u_t=e_1$ through time $m$ and then identifies the continuation with the small-step dynamics of $g_{\mathrm{cycle}}$. Hence \cref{lem:low-periodic-solution} keeps all later states in $\mathcal U\times\{0\}$, so $\norm{u_t}\ge1/2$ at every iterate. This includes $m=0$, $m=T-1$, $T=1$, and zero later steps.

Define the game $f_{\mathrm{cycle}}:\R^2\times\R^5\to\R$ by
\begin{equation}
 f_{\mathrm{cycle}}(x,y):=A\,\tilde{g}_{\mathrm{cycle}}\left(\frac{x}{s_x},\frac{y}{s_y}\right),
 \label{eq:low-rescaled-game-app}
\end{equation}
initialized at $x_0=s_xe_1$ and $y_0^{\mathrm{cycle}}=s_y y_0$. Its outer objective and Hessian are
\begin{equation}
 \Phi_{\mathrm{cycle}}(x)=\frac{A}{2s_x^2}\norm x^2
 =\frac{\ell\sqrt \rho}{2C_{\mathrm{low}}}\norm x^2,
 \qquad
 \nabla^2\Phi_{\mathrm{cycle}}=\frac{\ell\sqrt \rho}{C_{\mathrm{low}}}I_2\succ0.
 \label{eq:low-quadratic-value-app}
\end{equation}
Thus the full outer objective, not merely one component, is strongly convex and quadratic.

After substituting $u=x/s_x$ and $v=y/s_y$ into the updates, both gradient coefficients equal
\begin{equation}
 \frac{\eta_t A}{\rho s_x^2}=\frac{\eta_t A}{s_y^2}
 =\frac{\ell\eta_t}{C_{\mathrm{low}}\sqrt \rho}=h_t.
 \label{eq:low-induced-step-app}
\end{equation}
Since $f_{\mathrm{cycle}}$ is affine in $x$, its Hessian with respect to $x$ vanishes: $\nabla^2_{xx}f_{\mathrm{cycle}}=0$. For every $(x,y)$, differentiating the partial gradients $\nabla_xf_{\mathrm{cycle}}$ and $\nabla_yf_{\mathrm{cycle}}$ with respect to $y$ gives
\begin{align*}
 \norm{\nabla^2_{xy}f_{\mathrm{cycle}}(x,y)}_{\mathrm{op}}
 &\le C\frac{A}{s_xs_y}=C\frac{\ell}{C_{\mathrm{low}}},\\
 \norm{\nabla^2_{yy}f_{\mathrm{cycle}}(x,y)}_{\mathrm{op}}
 &\le C\frac{A}{s_y^2}=C\frac{\ell}{C_{\mathrm{low}}\sqrt \rho}.
\end{align*}
The operator norm of the full Hessian is at most the sum of these two bounds, which is at most $2C\ell/C_{\mathrm{low}}\le \ell$ since $\rho\ge1$. Thus the full gradient is $\ell$-Lipschitz. If $\mu$ is the inner-P{\L} constant of $\tilde{g}_{\mathrm{cycle}}$, the scaled constant is $\mu A/s_y^2=\mu \ell/(C_{\mathrm{low}}\sqrt \rho)$, which is at least $\ell/\kappa$ when
\begin{equation}
 \rho\le c_{\mathrm{low}}\kappa^2,
 \qquad c_{\mathrm{low}}:=\left(\frac{\mu}{C_{\mathrm{low}}}\right)^2.
 \label{eq:low-ratio-threshold-app}
\end{equation}
By \cref{lem:low-initial-updates}, $\tilde{g}_{\mathrm{cycle}}(x/s_x,\cdot)$ has a maximizer for every fixed $x$. Multiplying that maximizer by $s_y$ gives a maximizer of $f_{\mathrm{cycle}}(x,\cdot)$, since $A>0$.

The initial outer gap $\Phi_{\mathrm{cycle}}(x_0)-\inf_x\Phi_{\mathrm{cycle}}(x)$ and inner gap $\Phi_{\mathrm{cycle}}(x_0)-f_{\mathrm{cycle}}(x_0,y_0^{\mathrm{cycle}})$ satisfy the budgets in \cref{ass:initialization}:
\begin{align}
 \Phi_{\mathrm{cycle}}(x_0)-\inf_x\Phi_{\mathrm{cycle}}(x)
 &=\frac A2=\frac{\zeta\Delta}{\sqrt \rho}\le\Delta,\notag\\
 \Phi_{\mathrm{cycle}}(x_0)-f_{\mathrm{cycle}}(x_0,y_0^{\mathrm{cycle}})
 &=A\left[\frac12-\tilde{g}_{\mathrm{cycle}}(e_1,y_0)\right]
 \le CA\le2C\zeta\Delta\le c_{\rm init}\Delta.
 \label{eq:low-gap-budgets-app}
\end{align}
The last inequalities use $\rho\ge1$ and the chosen $\zeta$. Finally, every iterate satisfies
\begin{equation}
 \norm{\nabla\Phi_{\mathrm{cycle}}(x_t)}^2
 =\frac{A^2}{s_x^2}\norm{u_t}^2
 =\frac{2\zeta \ell\Delta}{C_{\mathrm{low}}}\norm{u_t}^2
 \ge\frac{\zeta}{2C_{\mathrm{low}}}\ell\Delta.
 \label{eq:low-gradient-app}
\end{equation}
The instance therefore belongs to $\Class(\ell,\kappa,\Delta,\sigma)$ for every $\sigma\ge0$, using exact gradients. Its dimension is $(d_x,d_y)=(2,5)$, independent of the horizon and schedule. Equation~\eqref{eq:low-gradient-app} proves the every-iterate bound \eqref{eq:low-ratio-result}. Averaging this inequality, taking the instance supremum, and then taking the schedule infimum in \eqref{eq:minimax-risks} gives $\mathcal R_{T,\mathsf o}(\rho)\ge c\ell\Delta$ for both update orders.

\end{proof}

\input{appendices/simultaneous_upper}

%% file: appendices/simultaneous_upper.tex
\subsection{Upper bound for \SGDASim{}}
\label{app:simultaneous-upper}

We prove the matching upper bound for \SGDASim{} under the same assumptions and average squared-gradient criterion as the lower bound. The deterministic one-sided-P{\L} counterpart is already established by \citet[Appendix~B.2]{chen2022faster}; the argument below includes unbiased stochastic gradients with bounded variance and our convention of averaging over $x_0,\ldots,x_{T-1}$. The main issue is that $\nabla_x f(x_t,y_t)$ differs from $\nabla\Phi(x_t)$. Inner P{\L} controls this discrepancy through the inner gradient by \cref{lem:inner-pl-consequences}; the potential below uses a faster inner step to absorb it.

\begin{theorem}[Upper bound for \SGDASim{}]
\label{thm:sim-upper}
There are constants $c_{\mathrm{step}},C>0$, depending at most on $c_{\mathrm{init}}$, such that the following holds for every $\kappa\ge2$, $\ell,\Delta>0$, $\sigma\ge0$, and integer $T\ge1$. On any instance in $\Class(\ell,\kappa,\Delta,\sigma)$, run \SGDASim{} with
\begin{equation}
 \rho=16\kappa^2,
 \qquad
 \eta_x:=c_{\mathrm{step}}
 \min\left\{\frac1{\kappa^2\ell},
 \sqrt{\frac{\Delta}{\kappa^4\ell\sigma^2T}}\right\},
 \qquad
 \eta_t\equiv\eta_y:=16\kappa^2\eta_x,
 \label{eq:sim-upper-steps}
\end{equation}
where the second term is $+\infty$ when $\sigma=0$. Then
\begin{equation}
 \frac1T\sum_{t=0}^{T-1}\E\norm{\nabla\Phi(x_t)}^2
 \le C\left(
 \frac{\kappa^2\ell\Delta}{T}
 +\sqrt{\frac{\kappa^4\ell\Delta\sigma^2}{T}}
 \right).
 \label{eq:sim-upper-risk}
\end{equation}
Consequently, whenever $\eps^2\le c\ell\Delta$ for a sufficiently small universal $c>0$,
\begin{equation}
 N_{\eps,\mathrm{Sim}}(16\kappa^2),
 N_{\eps,\mathrm{Sim}}^\star
 \le C\left(
 \frac{\kappa^2\ell\Delta}{\eps^2}
 +\frac{\kappa^4\ell\Delta\sigma^2}{\eps^4}
 \right).
 \label{eq:sim-upper-complexity}
\end{equation}
\end{theorem}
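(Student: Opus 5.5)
We would prove \cref{thm:sim-upper} by a Lyapunov argument with the potential
\[
V_t:=\Phi(x_t)-\Phi^*+\lambda\bigl(\Phi(x_t)-f(x_t,y_t)\bigr),
\]
where $\lambda$ is a fixed universal constant, say $\lambda=1/10$.

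We first record the standard consequences of inner P{\L}, which we take from \cref{lem:inner-pl-consequences}:
\begin{itemize}
\item $\Phi$ is differentiable with $\nabla\Phi(x)=\nabla_xf(x,y^\star)$ for any $y^\star\in Y^\star(x)$.
\item $\nabla\Phi$ is $L_\Phi$-Lipschitz with $L_\Phi\lesssim\kappa\ell$.
\item The quadratic growth bound $\dist(y,Y^\star(x))^2\le\frac{2}{\mu}(\Phi(x)-f(x,y))$ holds.
\item Consequently, $\norm{\nabla_xf(x,y)-\nabla\Phi(x)}^2\le \ell^2\dist(y,Y^\star(x))^2\le\frac{2\ell^2}{\mu}(\Phi(x)-f(x,y))=2\kappa\ell\,\delta(x,y)$, where $\delta$ denotes the inner gap.
\end{itemize}

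\textbf{Step 1: descent on $\Phi$.} Write $\eta_x=\eta/\rho$. By $L_\Phi$-smoothness and unbiasedness,
\[
\E\Phi(x_{t+1})\le\Phi(x_t)-\tfrac{\eta_x}{2}\norm{\nabla\Phi(x_t)}^2+\tfrac{\eta_x}{2}\norm{\nabla_xf-\nabla\Phi}^2-\tfrac{\eta_x}{4}\norm{\nabla_x f}^2+\tfrac{L_\Phi\eta_x^2}{2}\sigma^2 .
\]
This uses $L_\Phi\eta_x\le1/2$, which holds since $\eta_x\le c_{\rm step}/(\kappa^2\ell)$.

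\textbf{Step 2: inner gap.} We split
\[
\delta_{t+1}=\bigl[\Phi(x_{t+1})-\Phi(x_t)\bigr]+\bigl[f(x_t,y_t)-f(x_t,y_{t+1})\bigr]+\bigl[f(x_t,y_{t+1})-f(x_{t+1},y_{t+1})\bigr].
\]
\begin{itemize}
\item The middle bracket is an ascent step on an $\ell$-smooth $\mu$-P{\L} function. Because the step is simultaneous, $G_y$ is evaluated at $(x_t,y_t)$, which is cleaner than in the alternating case. With $\eta_y\ell\le1/2$ it contributes $-\tfrac{\eta_y}{2}\norm{\nabla_yf}^2+\tfrac{\ell\eta_y^2}{2}\sigma^2\le-\mu\eta_y\delta_t+\ell\eta_y^2\sigma^2$.
\item The outer brackets are controlled by smoothness in $x$ together with $\norm{x_{t+1}-x_t}=\eta_x\norm{G_x}$. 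The cross term pairs $\nabla_xf(x_t,y_{t+1})-\nabla\Phi(x_t)$ with the $x$-step. We bound it by Young's inequality as $\eta_x\norm{\nabla_xf}^2/8+O(\eta_x)\bigl(\kappa\ell\,\delta_t+\ell^2\eta_y^2\norm{G_y}^2\bigr)$, plus $O(\kappa\ell\eta_x^2)(\norm{\nabla_xf}^2+\sigma^2)$.
\end{itemize}

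\textbf{Step 3: combine.} Set $\eta_y=16\kappa^2\eta_x$, so that $\mu\eta_y=16\kappa\ell\eta_x$. The contraction $-\lambda\mu\eta_y\delta_t$ then dominates both the $\kappa\ell\eta_x\delta_t$ error from Step 1 and the one from Step 2. Exactly here the ratio $\Theta(\kappa^2)$ is needed. The terms $\ell^2\eta_x\eta_y^2\norm{\nabla_yf}^2$ are absorbed into $-\tfrac{\lambda\eta_y}{4}\norm{\nabla_yf}^2$ once $\ell\eta_y\le c$ and $\ell\eta_x\lesssim1$. This yields
\[
\E V_{t+1}\le V_t-\tfrac{\eta_x}{4}\norm{\nabla\Phi(x_t)}^2+C\bigl(\kappa\ell\eta_x^2+\ell\eta_y^2\bigr)\sigma^2 ,
\]
and $\ell\eta_y^2=256\kappa^4\ell\eta_x^2$ is the dominant noise term.

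\textbf{Step 4: telescope and choose the step.} Summing over $t<T$ and using $V_0\le(1+\lambda c_{\rm init})\Delta$ from \cref{ass:initialization} gives
\[
\frac1T\sum_{t<T}\E\norm{\nabla\Phi(x_t)}^2\lesssim\frac{\Delta}{\eta_xT}+\kappa^4\ell\eta_x\sigma^2 .
\]
Substituting $\eta_x$ from \eqref{eq:sim-upper-steps} and considering which term attains the minimum gives \eqref{eq:sim-upper-risk}. Inverting, by requiring both terms to be at most $\eps^2/(2C)$, gives \eqref{eq:sim-upper-complexity}, and $N^\star\le N(16\kappa^2)$ is trivial. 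The restriction $\eps^2\le c\ell\Delta$ only ensures $T\ge1$ is consistent.

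The main obstacle is Step 2. Since $\Phi$ is only $\kappa\ell$-smooth, the cross terms between the $x$-movement and the inner gap must be bounded with exactly the right powers of $\kappa$. We would follow the bookkeeping of \citet[Appendix~B.2]{chen2022faster}, inserting the variance terms, and verify that no term worse than $\kappa^4\ell\eta_x^2\sigma^2$ appears.
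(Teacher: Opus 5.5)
Your Lyapunov argument is correct in substance and proves the theorem, but it is organized differently from the paper's proof. The paper uses essentially the same potential, $\Phi(x_t)+\frac18[\Phi(x_t)-f(x_t,y_t)]$, but never splits $\delta_{t+1}-\delta_t$ into three brackets. Instead, it adds $\frac98$ times the outer descent inequality to $\frac18$ times a single joint-smoothness bound on the game-value change $f(x_{t+1},y_{t+1})-f(x_t,y_t)$. In that bound both partial gradients are evaluated at $(x_t,y_t)$, because the update is simultaneous. The bias is controlled by the error bound $\norm{\nabla_xf-\nabla\Phi}\le\kappa\norm{\nabla_yf}$ and absorbed directly by the ascent gain, using $\frac{\eta_y}{8}(1-\ell\eta_y)\ge\frac{9}{16}\kappa^2\eta_x$. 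P{\L} is never used as a contraction of the inner gap, and no cross term involving $\nabla_xf(x_t,y_{t+1})$ arises. The step you flag as the main obstacle therefore disappears, and the explicit constants visibly fit $\rho=16\kappa^2$.

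Your route uses quadratic growth to bound the bias by $2\kappa\ell\delta_t$, then P{\L} to turn the ascent gain into $-\lambda\mu\eta_y\delta_t$. This is the standard inner-gap template. It leads to the same requirement $\eta_y\gtrsim\kappa^2\eta_x$ and adapts to the alternating order by reordering the split, at the cost of cross-term bookkeeping. The paper treats the last reported iterate through the terminal potential after $T-1$ updates. You instead telescope over a phantom $T$-th update, which is harmless because it does not affect $x_0,\dots,x_{T-1}$.

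Two bookkeeping points need care.

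\emph{Constants must fit the fixed ratio.} Because $16$ is fixed by the statement, your constants must fit inside it, and with the natural constants behind your $O(\eta_x)$ they do not. The Young split leaving $\eta_x\norm{\nabla_xf}^2/8$, combined with $\norm{u+w}^2\le2\norm u^2+2\norm w^2$, costs $8\kappa\ell\eta_x\delta_t$. With $\lambda=1/10$ and your halved contraction $-\mu\eta_y\delta_t=-16\kappa\ell\eta_x\delta_t$, the $\delta_t$-coefficient becomes $(1+0.8-1.6)\kappa\ell\eta_x>0$. The repair is easy. Since $\ell\eta_y\le16c_{\rm step}$ is tiny, the ascent step actually gives nearly $-\eta_y\norm{\nabla_yf}^2\le-2\mu\eta_y\delta_t$, and the term $\lambda\ell^2\eta_x\eta_y^2\norm{\nabla_yf}^2$ needs only a negligible reserve. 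Alternatively, put more weight on $\norm{\nabla_xf}^2$ in Young's inequality; this is cheap because that term is multiplied by $\lambda$ against the $-\eta_x/4$ from Step 1.

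\emph{Take the expectation before Young.} Pair the $\mathcal F_t$-measurable part of the cross term with $\E_t[x_{t+1}-x_t]=-\eta_x\nabla_xf(x_t,y_t)$ before applying Young's inequality, as your displayed bound implicitly does. Applying Young pathwise to $\ip{a}{x_{t+1}-x_t}$ would charge $\eta_x\E\norm{G_x}^2$ and leave an $O(\eta_x\sigma^2)$ term per step, i.e., a nonvanishing error floor.

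Also, your three-bracket identity is an identity for $\delta_{t+1}-\delta_t$, not for $\delta_{t+1}$.
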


\begin{proof}[Proof of \cref{thm:sim-upper}]
We track the outer objective together with the inner suboptimality through
\begin{equation}
 \mathcal V_t:=\Phi(x_t)+\frac18[\Phi(x_t)-f(x_t,y_t)].
 \label{eq:sim-upper-potential}
\end{equation}
This potential is bounded below by $\Phi^*$. Its decrease combines $9/8$ times the outer-objective decrease with $1/8$ times the game-value increase. The faster dual step makes the latter absorb the discrepancy between the primal and outer gradients.

Let $\mathcal F_t$ be the history before the two oracle calls at iteration $t$ and $\E_t[\cdot]:=\E[\cdot\mid\mathcal F_t]$. The constant primal and dual steps are $\eta_x$ and $\eta_y=16\kappa^2\eta_x$ from \eqref{eq:sim-upper-steps}. Choose the universal constant $c_{\mathrm{step}}\le1/128$. Then
\begin{equation}
 \ell_\Phi\eta_x\le\frac14,
 \qquad
 \ell\eta_x\le\frac18,
 \qquad
 \ell\eta_y\le\frac18.
 \label{eq:sim-step-smallness}
\end{equation}

By $\ell_\Phi$-smoothness of $\Phi$ and the unbiased primal oracle,
\begin{equation}
 \begin{aligned}
 \E_t[\Phi(x_t)-\Phi(x_{t+1})]
 &\ge \eta_x\ip{\nabla\Phi(x_t)}{\nabla_x f(x_t,y_t)}
 -\frac{\ell_\Phi\eta_x^2}{2}(\norm{\nabla_x f(x_t,y_t)}^2+\sigma^2)\\
 &=\frac{\eta_x}{2}\norm{\nabla\Phi(x_t)}^2
 +\frac{\eta_x}{2}(1-\ell_\Phi\eta_x)\norm{\nabla_x f(x_t,y_t)}^2
 \\
 &\quad-\frac{\eta_x}{2}\norm{\nabla_x f(x_t,y_t)-\nabla\Phi(x_t)}^2
 -\frac{\ell_\Phi\eta_x^2}{2}\sigma^2.
 \end{aligned}
 \label{eq:sim-outer-descent}
\end{equation}
\Cref{ass:smoothness} implies that the full gradient is $2\ell$-Lipschitz in the Euclidean product norm. Since the primal and dual gradients in \SGDASim{} are evaluated at $(x_t,y_t)$, this gives
\begin{align}
 \E_t[f(x_{t+1},y_{t+1})-f(x_t,y_t)]
 &\ge-\left(\eta_x+\ell\eta_x^2\right)\norm{\nabla_x f(x_t,y_t)}^2
 +\left(\eta_y-\ell\eta_y^2\right)\norm{\nabla_y f(x_t,y_t)}^2\notag\\
 &\hspace{18mm}-\ell(\eta_x^2+\eta_y^2)\sigma^2.
 \label{eq:sim-game-increase}
\end{align}

Combining $\frac98$ times \eqref{eq:sim-outer-descent} with $\frac18$ times \eqref{eq:sim-game-increase}, using the gradient-bias bound \eqref{eq:primal-gradient-bias}, and then applying \eqref{eq:sim-step-smallness}, yields
\begin{equation}
 \E_t[\mathcal V_t-\mathcal V_{t+1}]
 \ge
 \frac{9\eta_x}{16}\norm{\nabla\Phi(x_t)}^2
 -33\kappa^4\ell\eta_x^2\sigma^2.
 \label{eq:sim-upper-one-step}
\end{equation}
Indeed, \eqref{eq:sim-step-smallness} makes the remaining squared-gradient coefficients nonnegative:
\begin{align*}
 \frac{9\eta_x}{16}(1-\ell_\Phi\eta_x)-\frac{\eta_x}{8}-\frac{\ell\eta_x^2}{8}&\ge0,\\
 \frac{\eta_y}{8}(1-\ell\eta_y)-\frac{9\eta_x\kappa^2}{16}&\ge0.
\end{align*}
The dual-noise contribution is $\ell\eta_y^2/8=32\kappa^4\ell\eta_x^2$, which determines the fourth power of $\kappa$ in the variance bound. The full noise coefficient satisfies
\[
 \frac{9\ell_\Phi\eta_x^2}{16}
 +\frac \ell8(\eta_x^2+\eta_y^2)
 \le33\kappa^4\ell\eta_x^2
 \qquad(\kappa\ge2).
\]

We sum \eqref{eq:sim-upper-one-step} over the $T-1$ updates, retaining the terminal potential to control the last reported iterate. Smoothness and lower boundedness of $\Phi$ give $\norm{\nabla\Phi(x)}^2\le2\ell_\Phi[\Phi(x)-\Phi^*]$. Thus \eqref{eq:sim-step-smallness} implies
\[
 \frac{9\eta_x}{16}\E\norm{\nabla\Phi(x_{T-1})}^2
 \le\frac9{32}\E[\mathcal V_{T-1}-\Phi^*].
\]
Adding this bound to the sum and using both initial-gap bounds gives
\begin{align}
 \frac{9\eta_x}{16}\sum_{t=0}^{T-1}\E\norm{\nabla\Phi(x_t)}^2
 &\le\mathcal V_0-\Phi^* -\frac{23}{32}\E[\mathcal V_{T-1}-\Phi^*]\notag\\
 &\quad+33\kappa^4\ell\eta_x^2\sigma^2(T-1)\notag\\
 &\le\left(1+\frac{c_{\mathrm{init}}}{8}\right)\Delta
 +33\kappa^4\ell\eta_x^2\sigma^2T.
 \label{eq:sim-upper-summed}
\end{align}
With the empty-sum convention, the same argument applies when $T=1$.
Dividing by $\eta_x T$ and absorbing the numerical constants gives
\begin{equation}
 \frac1T\sum_{t=0}^{T-1}\E\norm{\nabla\Phi(x_t)}^2
 \le C\left[\frac{\Delta}{\eta_x T}+\kappa^4\ell\eta_x\sigma^2\right].
 \label{eq:sim-upper-master}
\end{equation}

Substitute \eqref{eq:sim-upper-steps}. If the step-size bound $1/(\kappa^2\ell)$ is active, then $\sigma^2T\le \ell\Delta$ and both terms in \eqref{eq:sim-upper-master} are bounded by $C\kappa^2\ell\Delta/T$. Otherwise the two terms are both bounded by $C\sqrt{\kappa^4\ell\Delta\sigma^2/T}$. This proves \eqref{eq:sim-upper-risk}.

Finally, choosing
\[
 T\ge C\left(
 \frac{\kappa^2\ell\Delta}{\eps^2}
 +\frac{\kappa^4\ell\Delta\sigma^2}{\eps^4}
 \right)
\]
makes the right-hand side of \eqref{eq:sim-upper-risk} at most $\eps^2$. The constant schedule is nonincreasing, so the minimax definitions in \eqref{eq:minimax-risks} give \eqref{eq:sim-upper-complexity}.
\end{proof}

%% file: appendices/preliminaries.tex
\section{Preliminary lemmas}
\label{app:basic}

We first record the completion-of-squares construction used to preserve the outer function and inner P{\L}. We then prove the consequences of inner P{\L} used in the matching upper bound.

\subsection{NC-P{\L} games with a predetermined outer function}
\label{app:square-completion}

We use an onto inner map to determine the outer maximum exactly. A lower bound on its transposed derivative then converts the residual norm into an inner-gradient bound. This lemma proves inner P{\L}, not global smoothness; the latter is checked for each constructed map.

\begin{lemma}[Outer function and inner P{\L} for a completion-of-squares construction]
\label{lem:completion-square}
Let $b:\R^{d_x}\to\R^m$ and $T:\R^{d_y}\to\R^m$ be continuously differentiable, with $T$ onto (surjective). Define
\begin{equation}
 g(x,y):=\ip{b(x)}{T(y)}-\frac12\norm{T(y)}^2.
 \label{eq:completion-square-game}
\end{equation}
Then
\begin{equation}
 \max_y g(x,y)=\frac12\norm{b(x)}^2,
 \qquad
 \max_y g(x,y)-g(x,y)=\frac12\norm{b(x)-T(y)}^2.
 \label{eq:completion-square-gap}
\end{equation}
If, for some $\nu>0$,
\begin{equation}
 \norm{DT(y)^\trans z}\ge\nu\norm z
 \qquad\text{for all }y,z,
 \label{eq:completion-square-jacobian}
\end{equation}
then
\begin{equation}
 \frac12\norm{\nabla_y g(x,y)}^2
 \ge\nu^2\left[\max_v g(x,v)-g(x,y)\right].
 \label{eq:completion-square-pl}
\end{equation}
\end{lemma}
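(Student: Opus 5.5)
The proof is a direct completion of squares. For fixed $x$ write $w=T(y)$. Then
\[
 g(x,y)=\ip{b(x)}{w}-\tfrac12\norm w^2=\tfrac12\norm{b(x)}^2-\tfrac12\norm{b(x)-w}^2 .
\]
This gives $g(x,y)\le\frac12\norm{b(x)}^2$ for every $y$. Surjectivity of $T$ supplies some $y^\star$ with $T(y^\star)=b(x)$, so the bound is attained. That proves the first identity in \eqref{eq:completion-square-gap}, with the maximum achieved, and the second identity follows by subtracting $g(x,y)$.

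For the P{\L} inequality, differentiate in $y$ with the chain rule:
\[
 \nabla_y g(x,y)=DT(y)^\trans\bigl(b(x)-T(y)\bigr).
\]
Here $b(x)$ is constant in $y$, and the gradient of $-\frac12\norm{T(y)}^2$ is $-DT(y)^\trans T(y)$. Apply \eqref{eq:completion-square-jacobian} with $z=b(x)-T(y)$ to get $\norm{\nabla_y g}\ge\nu\norm{b(x)-T(y)}$. Squaring, halving, and using the gap identity gives
\[
 \tfrac12\norm{\nabla_y g}^2\ge\tfrac{\nu^2}{2}\norm{b(x)-T(y)}^2=\nu^2\bigl[\max_v g(x,v)-g(x,y)\bigr],
\]
which is \eqref{eq:completion-square-pl}.

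There is no genuine obstacle. The one point to watch is that surjectivity is what makes the supremum an attained maximum, which is needed to match Assumption~\ref{ass:nc-pl}'s nonempty $Y^\star(x)$. The Jacobian lower bound is applied pointwise at the current $y$, so no global inverse is needed for this lemma itself.
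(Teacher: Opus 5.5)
Your proposal is correct and follows essentially the same route as the paper. Both complete the square in $w=T(y)$, use surjectivity to attain the maximum, and then combine $\nabla_y g(x,y)=DT(y)^\trans\bigl(b(x)-T(y)\bigr)$ with the Jacobian lower bound and the gap identity.
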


\begin{proof}
Since $T$ is onto, maximizing \eqref{eq:completion-square-game} over $y$ is equivalent to maximizing $\ip{b(x)}{z}-\norm z^2/2$ over $z\in\R^m$. Completing the square gives \eqref{eq:completion-square-gap}. Moreover,
\[
 \nabla_y g(x,y)=DT(y)^\trans[b(x)-T(y)],
\]
so \eqref{eq:completion-square-jacobian} and \eqref{eq:completion-square-gap} imply \eqref{eq:completion-square-pl}.
\end{proof}

\subsection{NC-P{\L} geometry}
\label{app:inner-pl-consequences}

The following are standard consequences of smoothness and inner P{\L}; see \citet[Lemma~A.5]{nouiehed2019solving}, \citet[Lemmas~A.7--A.8]{chen2022faster}, and \citet[Lemmas~A.2--A.3]{ha2026high}. We prove them under \cref{ass:smoothness,ass:nc-pl}, without concavity or uniqueness of the maximizer.

\begin{lemma}[Inner error bound, gradient bias, and outer smoothness]
\label{lem:inner-pl-consequences}
Under \cref{ass:smoothness,ass:nc-pl}, $\Phi$ is differentiable with $\nabla\Phi(x)=\nabla_x f(x,y^\star)$ for every $y^\star\in Y^\star(x)$. For every $(x,y)$,
\begin{align}
 \dist(y,Y^\star(x))
 &\le \frac{\kappa}{\ell}\norm{\nabla_yf(x,y)},
 \label{eq:inner-error-bound}\\
 \norm{\nabla_xf(x,y)-\nabla\Phi(x)}
 &\le \kappa\norm{\nabla_yf(x,y)}.
 \label{eq:primal-gradient-bias}
\end{align}
Moreover, $\Phi$ has $\ell_\Phi$-Lipschitz gradient with
\begin{equation}
 \ell_\Phi\le \ell(1+\kappa).
 \label{eq:outer-smoothness}
\end{equation}
\end{lemma}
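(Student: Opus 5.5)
The three claims are standard consequences of the inner P{\L} inequality and smoothness. I would prove them in the order: quadratic growth and error bound first, then differentiability of $\Phi$ with its gradient formula, then the gradient-bias and outer-smoothness bounds.

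\textbf{Step 1: error bound.} Fix $x$ and set $h(y):=\Phi(x)-f(x,y)\ge0$. By \cref{ass:smoothness}, $h$ is $\ell$-smooth, and \eqref{eq:inner-pl} says $\tfrac12\norm{\nabla h}^2\ge\mu h$. The standard Karimi--Nutini--Schmidt argument follows the gradient flow $\dot y=-\nabla h(y)$ and studies $\sqrt{h}$ along it. Since $\tfrac{d}{dt}\sqrt h=-\norm{\nabla h}^2/(2\sqrt h)\le-\sqrt{\mu/2}\,\norm{\nabla h}$, the flow has finite length at most $\sqrt{2h(y)/\mu}$. It therefore converges to a point with $h=0$, that is, a point of $Y^\star(x)$. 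This gives the quadratic growth bound $\dist(y,Y^\star(x))\le\sqrt{2h(y)/\mu}$. Combining it with P{\L} once more, $\mu\,\dist^2/2\le h\le\norm{\nabla_y f}^2/(2\mu)$, which yields $\dist(y,Y^\star(x))\le\norm{\nabla_yf}/\mu=(\kappa/\ell)\norm{\nabla_yf}$, i.e.\ \eqref{eq:inner-error-bound}. Since $Y^\star(x)$ is closed, the distance is attained.

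\textbf{Step 2: differentiability and Danskin-type formula.} This is the main obstacle, because concavity and uniqueness of the maximizer are not assumed. Fix $x$, $y^\star\in Y^\star(x)$, and a direction $d$.
\begin{itemize}
\item \emph{Lower bound.} $\Phi(x+d)\ge f(x+d,y^\star)=\Phi(x)+\ip{\nabla_xf(x,y^\star)}{d}+O(\ell\norm d^2)$.
\item \emph{Upper bound.} Take $y'\in Y^\star(x+d)$. Since $\nabla_yf(x,y^\star)=0$, smoothness gives $\norm{\nabla_yf(x+d,y^\star)}\le\ell\norm d$, so Step 1 yields a $y'$ with $\norm{y'-y^\star}\le\kappa\norm d$. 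Then
\[
\Phi(x+d)=f(x+d,y')\le f(x,y')+\ip{\nabla_xf(x,y')}{d}+\tfrac{\ell}{2}\norm d^2,
\]
with $f(x,y')\le\Phi(x)$ and $\norm{\nabla_xf(x,y')-\nabla_xf(x,y^\star)}\le\ell\kappa\norm d$.
\end{itemize}
The two bounds agree up to $O(\ell\kappa\norm d^2)$. Hence $\Phi$ is differentiable with $\nabla\Phi(x)=\nabla_xf(x,y^\star)$ for every $y^\star\in Y^\star(x)$; in particular the value is independent of the choice of maximizer.

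\textbf{Step 3: bias and outer smoothness.} For \eqref{eq:primal-gradient-bias}, pick $y^\star$ to be the projection of $y$ onto $Y^\star(x)$. Then
\[
\norm{\nabla_xf(x,y)-\nabla\Phi(x)}=\norm{\nabla_xf(x,y)-\nabla_xf(x,y^\star)}\le\ell\norm{y-y^\star}\le\kappa\norm{\nabla_yf(x,y)}
\]
by Step 1. For \eqref{eq:outer-smoothness}, take $x_1,x_2$ and $y_1\in Y^\star(x_1)$. By \eqref{eq:primal-gradient-bias} at $(x_2,y_1)$,
\[
\norm{\nabla\Phi(x_2)-\nabla\Phi(x_1)}\le\norm{\nabla_xf(x_2,y_1)-\nabla_xf(x_1,y_1)}+\kappa\norm{\nabla_yf(x_2,y_1)}.
\]
The first term is at most $\ell\norm{x_1-x_2}$. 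For the second, $\nabla_yf(x_1,y_1)=0$ gives $\norm{\nabla_yf(x_2,y_1)}\le\ell\norm{x_1-x_2}$. Together these give $\ell_\Phi\le\ell(1+\kappa)$.
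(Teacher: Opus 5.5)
Your proposal is correct and follows essentially the paper's proof. Both use the gradient-flow length bound for the error bound, projection onto $Y^\star(x)$ for the bias, and a nearby maximizer at the second point for outer smoothness. The one variation is in differentiability: instead of routing the upper bound through Step~1, the paper applies P{\L} directly at $(x+h,y^\star)$ to get $0\le\Phi(x+h)-f(x+h,y^\star)\le\ell\kappa\norm{h}^2/2$, which with the Taylor bound for $f(\cdot,y^\star)$ gives the two-sided estimate at once.
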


\begin{proof}
We first bound the inner gap after changing $x$ while keeping a maximizer fixed; the resulting quadratic remainder proves differentiability of $\Phi$. An inner gradient-flow argument then bounds the distance to a maximizer, giving the gradient-bias and outer-smoothness estimates.

Fix $x$, $y^\star\in Y^\star(x)$, and $h\in\R^{d_x}$. Since the inner domain is unconstrained, $\nabla_y f(x,y^\star)=0$. Inner P{\L} at $(x+h,y^\star)$ and \cref{ass:smoothness} give
\[
 0\le\Phi(x+h)-f(x+h,y^\star)
 \le\frac{\norm{\nabla_y f(x+h,y^\star)}^2}{2\mu}
 \le\frac{\ell\kappa}{2}\norm h^2.
\]
Combining this with the Taylor bound for $f(\cdot,y^\star)$ yields
\[
 \left|\Phi(x+h)-\Phi(x)-\ip{\nabla_x f(x,y^\star)}h\right|
 \le\frac{\ell(1+\kappa)}2\norm h^2.
\]
Thus $\Phi$ is differentiable and $\nabla\Phi(x)=\nabla_x f(x,y^\star)$ for every maximizer, including when the maximizer is not unique. If $d_y=0$, this identity and all the claimed bounds follow directly from \cref{ass:smoothness}; henceforth assume $d_y\ge1$.

Fix $x$ and define the inner objective gap $g_x(y):=\Phi(x)-f(x,y)\ge0$. It is $\ell$-smooth, attains its minimum zero, and satisfies $\norm{\nabla g_x(y)}^2\ge2(\ell/\kappa)g_x(y)$. Since $\nabla g_x$ is globally Lipschitz, the gradient flow $\dot y(s)=-\nabla g_x(y(s))$, initialized at $y(0)=y$, exists for all $s\ge0$. If it reaches $g_x=0$ in finite time, the desired distance bound follows immediately by integrating up to that time. Otherwise $g_x(y(s))>0$ for every finite $s$, and
\[
 -\frac{d}{ds}\sqrt{g_x(y(s))}
 =\frac{\norm{\nabla g_x(y(s))}^2}{2\sqrt{g_x(y(s))}}
 \ge \sqrt{\frac{\ell}{2\kappa}}\norm{\nabla g_x(y(s))}.
\]
Hence the total distance traveled by the gradient-flow solution is finite:
\[
 \int_0^\infty\norm{\dot y(s)}\,ds
 \le \sqrt{\frac{2\kappa g_x(y)}{\ell}}.
\]
The finite path length makes $y(s)$ a Cauchy curve in $\R^{d_y}$, so it converges to a limit $y^\star$. The limit must satisfy $g_x(y^\star)=0$: otherwise P{\L} would keep $\norm{\nabla g_x}$ bounded away from zero and the integral of its speed would be infinite. Thus $y^\star\in Y^\star(x)$, and therefore
\[
 \dist(y,Y^\star(x))
 \le \sqrt{\frac{2\kappa g_x(y)}{\ell}}
 \le \frac\kappa \ell\norm{\nabla g_x(y)},
\]
where the second inequality is the P{\L} inequality. This proves \eqref{eq:inner-error-bound}.

Choose the maximizer supplied by \eqref{eq:inner-error-bound}. The gradient identity just proved and \cref{ass:smoothness} give
\[
 \norm{\nabla_xf(x,y)-\nabla\Phi(x)}
 \le \ell\norm{y-y^\star}
 \le \kappa\norm{\nabla_yf(x,y)},
\]
which is \eqref{eq:primal-gradient-bias}.

Finally, fix $x,x'$ and $y^\star\in Y^\star(x)$. Since $\nabla_yf(x,y^\star)=0$, \cref{ass:smoothness} gives $\norm{\nabla_yf(x',y^\star)}\le \ell\norm{x'-x}$. Applying \eqref{eq:inner-error-bound} to the inner problem at $x'$ yields a maximizer $\bar y\in Y^\star(x')$ such that $\norm{\bar y-y^\star}\le\kappa\norm{x'-x}$. Thus
\[
 \begin{aligned}
 \norm{\nabla\Phi(x')-\nabla\Phi(x)}
 &=\norm{\nabla_xf(x',\bar y)-\nabla_xf(x,y^\star)}\\
 &\le \ell\bigl(\norm{x'-x}+\norm{\bar y-y^\star}\bigr)
 \le \ell(1+\kappa)\norm{x'-x}.
 \end{aligned}
\]
This proves \eqref{eq:outer-smoothness}.
\end{proof}

%% file: appendices/high_ratio_proofs.tex
\section{Proofs for the high-ratio lower bound}
\label{app:high-ratio}

We prove the construction lemmas used in Section~\ref{sec:high-ratio}. The lower bound for $f_{\mathrm{unstable}}$ accounts for initially large dual steps. On the other hand, the corridor function $g_{\mathrm{corridor}}$ controls the remaining primal movement, and the noisy function $g_\sigma$ supplies the stochastic term in the lower bound. Finally, we combine the corridor function with the noisy function to obtain a single NC-P{\L} game instance and prove its lower bound.

\subsection{Proof of the lower bound for \texorpdfstring{$f_{\mathrm{unstable}}$}{f\_unstable}}
\label{app:quadratic}

\begingroup
\renewcommand{\theHequation}{restated.quadratic-gradient.\arabic{equation}}
\quadraticGradientLowerStatement*
\endgroup

\begin{proof}
We use the same initialization for both update rules. We show that two consecutive updates with dual step sizes at least $8/\ell$ take any state $(x,y)$ satisfying $0\le x\le y$ to a state $(x'',y'')$ satisfying $0\le x\le x''\le y''$. For $m\ge1$, the first update gives $x_1=x_0$ and $y_1\ge2x_0$. Repeating the two-update argument then gives $x_t\ge x_0$ for $t=1,3,5,\ldots$ with $t\le m$.

Set $\mu=\ell/\kappa$, $x_0=4\sqrt{\Delta/(\ell+\mu)}$, and $y_0=0$. The Hessian is $(\ell+\mu)\bigl(\begin{smallmatrix}0&1/4\\1/4&-1/2\end{smallmatrix}\bigr)$, and the inner maximum is attained at $y=x/2$. Hence
\begin{align*}
 \norm{\nabla^2 f_{\mathrm{unstable}}}_{\mathrm{op}}
 &=\frac{1+\sqrt2}{4}(\ell+\mu)
 \le\frac{3(1+\sqrt2)}8\ell<\ell,\\
 \Phi_{\mathrm{unstable}}(x)-f_{\mathrm{unstable}}(x,y)
 &=\frac{\ell+\mu}{4}(y-x/2)^2,\\
 \frac12|\nabla_y f_{\mathrm{unstable}}(x,y)|^2
 &=\frac{\ell+\mu}{2}\bigl(\Phi_{\mathrm{unstable}}(x)-f_{\mathrm{unstable}}(x,y)\bigr).
\end{align*}
Since $(\ell+\mu)/2\ge\mu$, the game satisfies inner $\mu$-P{\L}. Its outer objective is differentiable and lower bounded. Both initial gaps equal $\Delta$, so the initialization is admissible because $c_{\mathrm{init}}\ge1$.

Write $s_t=(\ell+\mu)\eta_t/2\ge4$ for $t<m$. A step of normalized size $a$ sends $(x,y)$ to
\begin{equation}
 x'=x-\frac{a}{2\rho}y,
 \qquad
 y'=\begin{cases}
 (1-a)y+\frac a2x,&\mathrm{Sim},\\
 (1-a)y+\frac a2x',&\mathrm{Alt}.
 \end{cases}
 \label{eq:quadratic-updates-app}
\end{equation}
If $m\ge1$, the first step gives $x_1=x_0$ and $y_1=s_0x_0/2\ge2x_0$. Consider two subsequent steps $a,b\ge4$ starting from $0\le x\le y$, with states $(x',y')$ and $(x'',y'')$. Since $x'\le x\le y$, both update rules satisfy
\begin{align*}
 y'&\le(1-a/2)y\le0,\\
 x''-x&=-\frac{ay+by'}{2\rho}
 \ge\frac{ab/2-a-b}{2\rho}y\ge0.
\end{align*}
The last coefficient is nonnegative because $(a-2)(b-2)\ge4$. For \SGDAAlt{}, $y''=(b/2)x''+(1-b)y'\ge2x''$, since $x''\ge0$ and $y'\le0$. For \SGDASim{},
\begin{align*}
 x'-y'
 &=(1-a/2)x+(a-1-a/(2\rho))y
 \ge\frac a2(1-\rho^{-1})x\ge0,\\
 y''-x''
 &=(b/2-1)x'+(1-b+b/(2\rho))y'
 \ge-\frac b2(1-\rho^{-1})y'\ge0.
\end{align*}
Here the first line uses $y\ge x$ and $a-1-a/(2\rho)\ge0$; the second uses $x'\ge y'$ and $b/2-1\ge0$. Thus either pair of steps gives $0\le x\le x''\le y''$.

Induction from $(x_1,y_1)$ yields $x_t\ge x_0$ at every odd $t\le m$. Including $t=0$, we obtain
\begin{equation}
 \frac1T\sum_{t=0}^{T-1}|\Phi_{\mathrm{unstable}}'(x_t)|^2
 \ge\frac{(\ell+\mu)^2}{64T}\sum_{t=0}^{m}x_t^2
 \ge\frac{(\ell+\mu)^2x_0^2}{64T}\left(1+\left\lceil\frac m2\right\rceil\right)
 \ge\frac{m+1}{8T}\ell\Delta.
 \label{eq:unstable-conversion-app}
\end{equation}
The same bound holds directly for $m=0$. Exact gradients are admissible for every $\sigma\ge0$, completing the proof.
\end{proof}

\Needspace{10\baselineskip}
\subsection{Construction of \texorpdfstring{$g_{\mathrm{corridor}}$}{g\_corridor}}
\label{app:corridor}

\begingroup
\renewcommand{\theHequation}{restated.corridor.\arabic{equation}}
\corridorFunctionLowerStatement*
\endgroup

\begin{proof}
We prescribe the derivative before choosing its magnitude. An initial linear segment makes the first $m$ updates reach a constant-derivative interval. Its length is the distance traveled by the remaining steps; a final linear segment joins it to a minimum. Integrating the derivative then calibrates the exact initial gap.

Fix $m\in\{0,\ldots,T-1\}$ and let $S=\sum_{t=m}^{T-2}\eta_t$. When $S=0$, the minimum in the statement is $\ell\Delta$. Define
\begin{equation}
 P_0:=1,
 \qquad
 P_t:=\prod_{i=0}^{t-1}\left(1+\frac{\ell\eta_i}{\rho}\right)
 \quad(1\le t\le m),
 \label{eq:one-dimensional-product-app}
\end{equation}
\begin{equation}
 G^2:=\frac{\Delta}{S/\rho+\ell^{-1}(1-1/(2P_m^2))},
 \qquad
 G_0:=\frac{G}{P_m},
 \qquad
 b:=\frac{G-G_0}{\ell}+\frac{GS}{\rho}.
 \label{eq:one-dimensional-scales-app}
\end{equation}
For $G>0$, write $[z]_{[0,G]}:=\min\{G,\max\{0,z\}\}$. Add a constant to $g_{\mathrm{corridor}}$ so that $\inf_u g_{\mathrm{corridor}}(u)=0$, and define
\begin{equation}
 g_{\mathrm{corridor}}'(u):=-[G_0+\ell u]_{[0,G]}+[\ell(u-b)]_{[0,G]}.
 \label{eq:one-dimensional-function-app}
\end{equation}
The two intervals on which the clipping terms vary are disjoint, so $g_{\mathrm{corridor}}'$ is continuous and $\ell$-Lipschitz.

Starting from $u_0=0$, induction gives
\begin{align}
 u_t&=\frac{P_t-1}{\ell}G_0,
 &g_{\mathrm{corridor}}'(u_t)&=-P_tG_0,
 &&0\le t\le m,
 \label{eq:one-dimensional-first-app}\\
 u_t&=\frac{G-G_0}{\ell}+G\sum_{j=m}^{t-1}\frac{\eta_j}{\rho},
 &g_{\mathrm{corridor}}'(u_t)&=-G,
 &&m\le t\le T-1.
 \label{eq:one-dimensional-later-app}
\end{align}
Direct integration gives
\begin{equation}
 g_{\mathrm{corridor}}(0)
 =\frac{G^2-G_0^2}{2\ell}+\frac{G^2S}{\rho}+\frac{G^2}{2\ell}
 =G^2\left(\frac S\rho+\frac1\ell-\frac{1}{2\ell P_m^2}\right)=\Delta,
 \label{eq:one-dimensional-gap-app}
\end{equation}
and
\begin{equation}
 G^2\ge\frac{\Delta}{S/\rho+1/\ell}
 \ge\frac12\min\left\{\ell\Delta,\frac{\rho\Delta}{S}\right\}.
 \label{eq:one-dimensional-size-app}
\end{equation}
Since $P_m\ge1$, the denominator in \eqref{eq:one-dimensional-scales-app} is at least $S/\rho+1/(2\ell)$, so also $G^2\le2\min\{\ell\Delta,\rho\Delta/S\}$. This proves the two-sided comparison in the statement.

The game is the single-player objective $f(u)=g_{\mathrm{corridor}}(u)$, equivalently an instance with no inner variable. Hence the inner-P{\L} condition is vacuous. Equations \eqref{eq:one-dimensional-later-app} and \eqref{eq:one-dimensional-size-app}, summed over the last $T-m$ iterates, prove \eqref{eq:one-dimensional-average-main}. Since $G_0,G>0$ and $P_t>0$, the derivative is negative at every iterate, so each positive update strictly increases the scalar iterate. The initial gap equals $\Delta$ by \eqref{eq:one-dimensional-gap-app}. Exact gradients are admissible for every $\sigma$.
\end{proof}

\subsection{Construction of \texorpdfstring{$g_\sigma$}{g\_sigma}}

We construct $g_\sigma$ by giving each inner-gradient error its own component. The shifted quadratic game $g_\delta$ in Lemma~\ref{lem:shift-response} shows why this is useful: a shift in its inner gradients moves the outer variable away from its minimizer, and that displacement persists over a range of cumulative steps. We then build a smooth inner-P{\L} component $g_{\sigma,j}$ in which either sign of the error at update $j$ activates the same shift. The displacement estimate lets us count how many subsequent iterates have a large outer gradient. Summing the resulting squared-gradient bounds over components with enough remaining cumulative steps gives the lower bound on the average squared outer-gradient norm in Lemma~\ref{lem:noise-components-lower}. The fixed oracle in \eqref{eq:noise-oracle-main} generates all of these errors.

\Needspace{12\baselineskip}
\subsubsection{Persistent error under a deterministic shift}
\label{app:perturbation-response}

\begingroup
\renewcommand{\theHequation}{restated.shift-response.\arabic{equation}}
\shiftResponseStatement*
\endgroup

\begin{proof}
Recall that $g_\delta(z;a,b)=\ell za-\frac{\ell}{2\kappa}[a^2+(a+b+\delta)^2]$ and $(z_0,a_0,b_0)=(0,0,0)$. We bound the inner coordinates in terms of the largest outer iterate, then use an exact identity to close the outer bound. Once the initial inner transient has decayed, the same identity gives a positive lower bound on the outer displacement. Normalize the coordinates and dual steps by setting
\[
 X_n=\frac{\kappa z_n}{\delta},\qquad Y_n=\frac{a_n}{\delta},\qquad
 Z_n=1+\frac{b_n}{\delta},\qquad h_n=\frac{\ell\eta_n}{\kappa},\qquad
 \alpha=\frac{\kappa^2}{\rho}.
\]
The SGDA updates for $g_\delta$ become
\begin{equation}
 \begin{aligned}
 X_{n+1}&=X_n-\alpha h_nY_n,\\
 Y_{n+1}&=Y_n+h_n(X'_n-2Y_n-Z_n),\\
 Z_{n+1}&=Z_n-h_n(Y_n+Z_n),
 \end{aligned}
 \qquad (X_0,Y_0,Z_0)=(0,0,1),
 \label{eq:response-both-updates-app}
\end{equation}
where $X'_n=X_n$ for \SGDASim{} and $X'_n=X_{n+1}$ for \SGDAAlt{}. Write $\tau_n=\sum_{i<n}h_i$ for the cumulative normalized step. We choose universal positive constants $\alpha_\star,\bar h,\tau_-,\tau_+$ below. For $\alpha\le\alpha_\star$ and $h_n\le\bar h$, we will bound $|X_n|$ by a universal multiple of $\alpha$ when $\tau_n\le\tau_+/\alpha$, and prove $X_n\ge\alpha/4$ when $\tau_-\le\tau_n\le\tau_+/\alpha$.

Subtracting the updates for $Y_n$ and $Z_n$ in \eqref{eq:response-both-updates-app} yields $Y_{n+1}-Z_{n+1}-(Y_n-Z_n)=h_n(X'_n-Y_n)$. Summing this equality over $0\le i<n$, using $Y_0-Z_0=-1$, and then summing the update for $X_n$ gives
\begin{equation}
 X_n=\alpha\left[1+Y_n-Z_n-\sum_{i<n}h_iX'_i\right].
 \label{eq:response-exact-identity-app}
\end{equation}
The constant $1$ records the initial offset $Y_0-Z_0=-1$. To obtain a positive outer displacement, we must make the inner-coordinate difference small while keeping the accumulated outer term small.

Let $q_n=(Y_n,Z_n)^\trans\in\R^2$ collect the normalized inner coordinates in \eqref{eq:response-both-updates-app}. Its update is $q_{n+1}=(I+h_nB)q_n+h_nX'_n(1,0)^\trans$, where $B=\bigl(\begin{smallmatrix}-2&-1\\-1&-1\end{smallmatrix}\bigr)$. The eigenvalues of $-B$ are $(3\pm\sqrt5)/2$. Write $\lambda_0=(3-\sqrt5)/2$ and choose $\bar h\le2/(3+\sqrt5)$. Then $I+h B$ is positive semidefinite and $\norm{I+hB}_{\mathrm{op}}=1-\lambda_0h$ for $0\le h\le\bar h$.

Fix an endpoint $N$, and let $M_N=\max_{0\le i\le N}|X_i|$ be the largest magnitude of the normalized outer coordinate through iterate $N$. Since $|X'_n|\le M_N$ for $n<N$, the formula for $q_{n+1}$ and the norm bound on $I+h_nB$ give
\begin{align}
 \norm{q_{n+1}}
 &\le(1-\lambda_0h_n)\norm{q_n}+h_nM_N,\\
 \norm{q_n}
 &\le\prod_{i<n}(1-\lambda_0h_i)
 +\frac{M_N}{\lambda_0}\left[1-\prod_{i<n}(1-\lambda_0h_i)\right]
 \le e^{-\lambda_0\tau_n}+\frac{M_N}{\lambda_0}.
 \label{eq:response-scalar-induction-app}
\end{align}
Iterating the first inequality from $q_0=(0,1)^\trans$ gives the product bound in the second line; $1-a\le e^{-a}$ gives its exponential bound.

For every $n\le N$, $|Y_n-Z_n|\le\sqrt2\norm{q_n}$ and $\sum_{i<n}h_i|X'_i|\le\tau_NM_N$. Substituting these bounds and \eqref{eq:response-scalar-induction-app} into \eqref{eq:response-exact-identity-app}, then taking the maximum over $0\le n\le N$, gives
\begin{equation}
 M_N\le\alpha(1+\sqrt2)+\alpha\left(\frac{\sqrt2}{\lambda_0}+\tau_N\right)M_N.
 \label{eq:response-absorption-app}
\end{equation}
Put $C_X=2(1+\sqrt2)$. If $\alpha_\star\sqrt2/\lambda_0\le1/4$ and $\tau_+\le1/4$, then $\tau_N\le\tau_+/\alpha$ makes the coefficient multiplying $M_N$ in \eqref{eq:response-absorption-app} at most $1/2$. Thus $M_N\le C_X\alpha$, and \eqref{eq:response-scalar-induction-app} gives $\norm{q_n}\le e^{-\lambda_0\tau_n}+C_X\alpha/\lambda_0$ for $0\le n\le N$.

For a lower bound on $X_N$, use $Y_N-Z_N\ge-\sqrt2\norm{q_N}$ in \eqref{eq:response-exact-identity-app}, together with \eqref{eq:response-scalar-induction-app} and $M_N\le C_X\alpha$, to obtain
\begin{equation}
 \frac{X_N}{\alpha}
 \ge1-\sqrt2e^{-\lambda_0\tau_N}
       -\frac{\sqrt2}{\lambda_0}M_N-\tau_NM_N
 \ge1-\sqrt2e^{-\lambda_0\tau_N}
       -\frac{\sqrt2C_X}{\lambda_0}\alpha-C_X\alpha\tau_N.
 \label{eq:response-lower-identity-app}
\end{equation}
Choose $\tau_-=\lambda_0^{-1}\log(4\sqrt2)$, take $\tau_+\le\min\{1/4,1/(4C_X)\}$, and reduce $\alpha_\star$ so that
\[
 \alpha_\star\le\min\left\{
 \frac{\lambda_0}{4\sqrt2C_X},\,1,\,
 \frac{\tau_+}{2(\tau_-+2\bar h)}\right\}.
\]
For $\tau_-\le\tau_N\le\tau_+/\alpha$, the three subtracted terms $\sqrt2e^{-\lambda_0\tau_N}$, $(\sqrt2C_X/\lambda_0)\alpha$, and $C_X\alpha\tau_N$ in \eqref{eq:response-lower-identity-app} are each at most $1/4$, so $X_N\ge\alpha/4$. The choices also satisfy the conditions used in \eqref{eq:response-absorption-app} and give $\tau_-<\tau_+/\alpha$. Since $N$ was arbitrary, $|X_n|\le C_X\alpha$ holds when $\tau_n\le\tau_+/\alpha$, and $X_n\ge\alpha/4$ when $\tau_-\le\tau_n\le\tau_+/\alpha$. The argument permits $h_n=0$ and both definitions of $X'_n$ in \eqref{eq:response-both-updates-app}.

To express the bounds in the original coordinates $(z_n,a_n,b_n)$, choose $c>0$ no larger than $\bar h$, $\tau_+$, and $1/4$, and take $C\ge\tau_-$ large enough to bound $|X_n|/\alpha$ and $|Y_n|+|Z_n-1|$. Set $C_0=\alpha_\star^{-1}$. These constants are fixed before choosing the cutoff in Appendix~\ref{app:perturbation-realization} and the component scales in Appendix~\ref{app:one-perturbation-conversion}.

If $\rho\ge C_0\kappa^2$ and $\ell\eta_n\le c\kappa$, then $\alpha\le\alpha_\star$ and $h_n\le\bar h$. Substituting $z_n=\delta X_n/\kappa$, $a_n=\delta Y_n$, and $b_n=\delta(Z_n-1)$ into the bounds on $M_N$ and $\norm{q_n}$ gives
\[
 |z_n|\le C\frac{\kappa\delta}{\rho},\qquad
 |a_n|+|b_n|\le C\delta
 \quad\text{when}\quad
 \sum_{i<n}\eta_i\le c\frac{\rho}{\ell\kappa}.
\]
If $C\kappa/\ell\le\sum_{i<n}\eta_i\le c\rho/(\ell\kappa)$, then $\tau_-\le\tau_n\le\tau_+/\alpha$, and \eqref{eq:response-lower-identity-app} gives $z_n\ge c\kappa\delta/\rho$. No monotonicity of the steps was used.
\end{proof}

\subsubsection{A smooth NC-P{\L} game with the same mechanism}
\label{app:perturbation-realization}

The shifted quadratic starts with a nonzero inner gap at the origin. We introduce a third inner coordinate $s$ so that a gradient error can activate the shift from an inner maximizer. We also cut off the shift when $|a+b|$ is large to bound the Hessian globally while preserving the quadratic updates along the trajectory segment in \cref{lem:shift-response}. Fix $R_0$ larger than the inner-coordinate bound in that lemma, and choose a smooth cutoff $\chi:\R\to[0,1]$ equal to one on $[-R_0,R_0]$, zero outside $[-2R_0,2R_0]$, with $|\chi'|\le1/4$ and bounded first two derivatives. The constant $R_0$ is fixed before all smoothness and P{\L} constants below.

For a provisional condition parameter $K\ge1$, define the residual $r_K:\R^3\to\R$ and game $g_K:\R\times\R^3\to\R$ by
\begin{equation}
 r_K(a,b,s)=a+b+\sqrt K\,\chi\!\left(\frac{a+b}{\sqrt K}\right)\sin^2\!\left(\frac{\pi s}{2}\right),
 \qquad g_K(u;a,b,s)=ua-\frac{a^2+r_K(a,b,s)^2}{2K}.
 \label{eq:raw-perturbation-game}
\end{equation}
The profile vanishes at $s=0$ and has zero derivative there. At $s=\pm1$, it supplies the same shift and again has zero derivative. Thus either sign of a unit perturbation activates the shift, and the activation coordinate then stays fixed. Without the cutoff, the second derivative of $g_K$ in $s$ would grow linearly with $a+b$.

\begin{lemma}[Smoothness and inner P{\L} of the single-perturbation game]
\label{lem:raw-perturbation-game}
There are universal $c,C>0$ such that, for every $K\ge1$, $g_K$ has a $C$-Lipschitz gradient,
\begin{equation}
 \max_{a,b,s}g_K(u;a,b,s)=\frac K2u^2,
 \qquad \frac12\norm{\nabla_{a,b,s}g_K}^2
 \ge\frac cK\left[\frac K2u^2-g_K(u;a,b,s)\right].
 \label{eq:raw-perturbation-properties}
\end{equation}
The origin is stationary.
\end{lemma}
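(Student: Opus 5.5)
We will view $g_K$ through the completion-of-squares Lemma~\ref{lem:completion-square}. Writing $g_K(u;a,b,s)=ua-\frac{1}{2K}(a^2+r_K^2)$ and completing the square in $a$, set
\[
 T(a,b,s):=\frac{1}{\sqrt K}\bigl(a,\;r_K(a,b,s)\bigr)\in\R^2,\qquad
 b(u):=(\sqrt K\,u,0),
\]
so that $\ip{b(u)}{T}-\frac12\norm{T}^2=ua-\frac{a^2+r_K^2}{2K}=g_K$.

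The map $T$ is onto. For any target $(p,q)$, take $a=\sqrt K p$ and $s=0$. Then $r_K=a+b$, and choosing $b=\sqrt Kq-a$ hits $(p,q)$. Lemma~\ref{lem:completion-square} therefore gives $\max g_K=\frac12\norm{b(u)}^2=\frac K2u^2$, and the maximum is attained.

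For inner P{\L}, we need the Jacobian bound $\norm{DT^\trans z}\ge\nu\norm z$ with $\nu^2\asymp 1/K$. We have
\[
 DT=\frac1{\sqrt K}\begin{pmatrix}1&0&0\\ \partial_ar_K&\partial_br_K&\partial_sr_K\end{pmatrix},
 \qquad \partial_ar_K=\partial_br_K=1+\chi'\!\left(\tfrac{a+b}{\sqrt K}\right)\sin^2\!\left(\tfrac{\pi s}{2}\right).
\]
Since $|\chi'|\le1/4$, the shared entry $\beta:=\partial_br_K$ lies in $[3/4,5/4]$. For $z=(z_1,z_2)$, the vector $\sqrt K\,DT^\trans z$ has $a$-component $z_1+\beta z_2$ and $b$-component $\beta z_2$, with the $s$-component dropped. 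The $2\times2$ matrix $\bigl(\begin{smallmatrix}1&\beta\\0&\beta\end{smallmatrix}\bigr)$ has entries bounded and determinant $\beta\ge3/4$, so its smallest singular value is bounded below by a universal constant $c_1$. Hence $\nu=c_1/\sqrt K$, and \eqref{eq:completion-square-pl} gives the P{\L} inequality with constant $c=c_1^2$.

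Smoothness is the step needing care, and it is exactly what the cutoff is for. Write $r_K=w+\sqrt K\,\chi(w/\sqrt K)\,\psi(s)$ with $w=a+b$ and $\psi=\sin^2(\pi s/2)$. We bound the Hessian of $r_K^2/(2K)$ through $\frac1K(\nabla r_K\nabla r_K^\trans+r_K\nabla^2r_K)$, uniformly in $K\ge1$. On the support of the correction, $|w|\le2R_0\sqrt K$, so $|r_K|\le C\sqrt K$ there. Outside that support, $r_K=w$ is linear and its square contributes only a constant Hessian.

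Next, $\nabla r_K$ has entries $1+\chi'\psi$ and $\sqrt K\chi\psi'$, so the worst term in $\nabla r_K\nabla r_K^\trans/K$ is $K\psi'^2/K=O(1)$. For $\nabla^2r_K$, the $ww$ entry is $\chi''\psi/\sqrt K$, the $ws$ entry is $\chi'\psi'$, and the $ss$ entry is $\sqrt K\chi\psi''$. Multiplying by $r_K/K$, which is $O(1/\sqrt K)$ on the support, makes all three entries $O(1)$. The remaining terms $ua$ and $a^2/(2K)$ have bounded Hessians. So the gradient of $g_K$ is $C$-Lipschitz with a universal $C$, where we use $K\ge1$ and the fact that $\chi,\chi',\chi''$ and $\psi,\psi',\psi''$ are bounded.

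Finally, the origin is stationary. At $(u,a,b,s)=0$ we have $r_K=0$ and $\psi'(0)=0$, so all partial derivatives vanish: $\partial_u g_K=a=0$, $\partial_a g_K=u-(a+r_K\partial_ar_K)/K=0$, and the $b$- and $s$-partials carry a factor $r_K=0$.
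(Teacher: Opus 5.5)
Your proof is correct and follows essentially the same route as the paper. The shared steps are completing the square, the observation that $\partial_a r_K=\partial_b r_K\in[3/4,5/4]$ so the $(a,b)$-gradients alone control both residuals, and the Hessian bound on $r_K^2/(2K)$ via $\nabla r_K\nabla r_K^\trans+r_K\nabla^2 r_K$ with $|r_K|\lesssim\sqrt K$ on the cutoff's support. The only difference is packaging: you route the maximum and the P{\L} inequality through Lemma~\ref{lem:completion-square} with $T=(a,r_K)/\sqrt K$, using the smallest singular value of $\bigl(\begin{smallmatrix}1&\beta\\0&\beta\end{smallmatrix}\bigr)$, whereas the paper states the explicit estimate $e^2+r^2\le4[(e+pr)^2+(pr)^2]$ for $e=a-Ku$.
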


\begin{proof}
Completing the square gives
\begin{equation}
 g_K(u;a,b,s)=\frac K2u^2-\frac{(a-Ku)^2+r_K(a,b,s)^2}{2K}.
 \label{eq:raw-perturbation-gap}
\end{equation}
Both residuals vanish at $(a,b,s)=(Ku,-Ku,0)$, proving the outer-objective formula and attainment. For the P{\L} inequality, write $r=r_K(a,b,s)$, $e=a-Ku$, and
\[
 p=\partial_a r_K=\partial_b r_K
 =1+\chi'\!\left(\frac{a+b}{\sqrt K}\right)\sin^2\!\left(\frac{\pi s}{2}\right)
 \in[3/4,5/4].
\]
The restriction on $\chi'$ keeps $p$ bounded away from zero, so the gradients in $a$ and $b$ control both residuals $e$ and $r$ without using the gradient in $s$. Indeed, $\partial_a g_K=-(e+pr)/K$ and $\partial_b g_K=-pr/K$. Since $e=(e+pr)-pr$ and $r^2\le16(pr)^2/9$,
\begin{align}
 e^2+r^2
 &\le2(e+pr)^2+\left(2+\frac{16}{9}\right)(pr)^2
 \le4\bigl[(e+pr)^2+(pr)^2\bigr],\\
 \frac12\norm{\nabla_{a,b,s}g_K}^2
 &\ge\frac{e^2+r^2}{8K^2}
 =\frac1{4K}\left[\frac K2u^2-g_K(u;a,b,s)\right].
 \label{eq:perturbation-map-lower}
\end{align}
This proves the inner-P{\L} inequality.

For smoothness, $|\partial_a r_K|,|\partial_b r_K|\le C$ and $|\partial_s r_K|\le C\sqrt K$. Wherever a second derivative is nonzero, the cutoff implies $|a+b|\le2R_0\sqrt K$ and $|r_K|\le C\sqrt K$. Direct differentiation gives second-derivative bounds of order $K^{-1/2}$ in $(a,b)$, order one for mixed derivatives with $s$, and order $K^{1/2}$ for $\partial_{ss}r_K$. Hence, for $y=(a,b,s)$,
\begin{equation}
 \norm{\nabla_y^2(r_K^2/(2K))}
 \le\frac{\norm{\nabla_y r_K}^2+|r_K|\norm{\nabla_y^2r_K}}{K}
 \le C.
 \label{eq:perturbation-map-smooth}
\end{equation}
Outside that region the residual is affine, so the same bound holds globally. The remaining terms $ua-a^2/(2K)$ also have bounded Hessian for $K\ge1$. All first derivatives vanish at the origin.
\end{proof}

At $s=\pm1$ and $|a+b|\le R_0\sqrt K$, the residual satisfies
\begin{equation}
 r_K(a,b,s)=a+b+\sqrt K,\qquad \partial_s r_K(a,b,s)=0.
 \label{eq:perturbation-active-map-app}
\end{equation}
Thus $s$ stays fixed and the other coordinates follow the shifted quadratic of \cref{lem:shift-response} while the cutoff equals one. At $u=a=b=0$, a unit perturbation of $s$ creates inner gap $1/2$, since $g_K(0;0,0,\pm1)=-1/2$. A unit perturbation of $b$, with $u=a=s=0$, instead creates inner gap $1/(2K)$. The third coordinate therefore increases the gap by a factor $K$ for the same perturbation magnitude. On the region where the cutoff equals one, the quadratic curvature in $(a,b)$ remains of order $1/K$.

\subsubsection{Changing variables and the contribution of a single perturbation}
\label{app:one-perturbation-conversion}

Write $c_r,C_r,C_{\rm resp}$ for $c,C,C_0$ in \cref{lem:shift-response}, and enlarge $C_{\rm resp}$ so that $c_rC_{\rm resp}\ge2(C_r+2c_r)$. Let $c_0,C_0$ be the constants in \cref{lem:raw-perturbation-game}. Choose universal $\gamma,\theta>0$, fixed after the low-ratio construction, such that
\begin{equation}
 \theta C_0\le1,
 \qquad
 \frac{\theta c_0}{\gamma}\ge1.
 \label{eq:high-ratio-scale-choice}
\end{equation}
For condition number $\kappa$, set
\begin{equation}
 K:=\gamma\kappa,\qquad L:=\theta \ell.
 \label{eq:high-ratio-rescaling-parameters}
\end{equation}
In \eqref{eq:aggregate-noise-game}, set $g_{\sigma,j}\equiv0$ whenever $\eta_j\sigma=0$; these components have zero gradients and gaps.
Increase the universal lower bound on $\kappa$ so that $K\ge1$. For every block $j\in\{0,\ldots,T-2\}$ with $\eta_j\sigma>0$, write $y[j]=(a[j],b[j],s[j])$ and define
\begin{equation}
 g_{\sigma,j}(v[j];a[j],b[j],s[j])
 :=L(\eta_{j}\sigma)^2
 g_K\left(\frac{v[j]}{\eta_{j}\sigma};
 \frac{a[j]}{\eta_{j}\sigma},\frac{b[j]}{\eta_{j}\sigma},\frac{s[j]}{\eta_{j}\sigma}\right).
 \label{eq:rescaled-perturbation-block-app}
\end{equation}
The squared amplitude compensates exactly for the spatial rescaling. Writing $w\in\R^4$ for the component coordinates, we have
\begin{equation}
 \nabla_w^2\!\left[(\eta_j\sigma)^2g_K\!\left(\frac{w}{\eta_j\sigma}\right)\right]
 =\nabla^2 g_K\!\left(\frac{w}{\eta_j\sigma}\right).
 \label{eq:noise-rescaling-hessian}
\end{equation}
Thus changing the positive amplitude $\eta_j\sigma$ does not change the Hessian bound.
Define the shift as a function of the perturbed inner coordinate
\begin{equation}
 \delta_j(s)=\sqrt K\,\eta_j\sigma\sin^2\!\left(\frac{\pi s}{2\eta_j\sigma}\right).
 \label{eq:noise-displacement-main}
\end{equation}
The profile and its derivative vanish at zero, while $\delta_j(\pm\eta_j\sigma)=\sqrt K\,\eta_j\sigma$ and $\delta_j'(\pm\eta_j\sigma)=0$. Thus either sign of the gradient error activates the same shift and leaves its derivative zero. Writing $\chi_j(q):=\chi(q/(\sqrt K\eta_j\sigma))$ gives exactly the component formula \eqref{eq:noise-component-main}. Its displacement $\chi_j(a[j]+b[j])\delta_j(s[j])$ depends on the inner coordinates, and the cutoff keeps second derivatives bounded globally. Once $s[j]$ is fixed at either activated value, the updates of $(v[j],a[j],b[j])$ agree with those of $g_\delta$ in \eqref{eq:local-perturbation-game-main}, with parameters $(L,K,\sqrt K\eta_j\sigma)$, wherever $|a[j]+b[j]|\le R_0\sqrt K\eta_j\sigma$.

Its full-gradient Lipschitz constant is at most $L C_0\le \ell$, and its inner-P{\L} parameter is at least $L c_0/K\ge \ell/\kappa$. Also $\Phi_{\sigma,j}(v[j])=L K(v[j])^2/2$, and the origin is both an outer minimizer and an inner maximizer. These properties hold for every positive amplitude, including the initial large steps; the step-size restriction in \cref{lem:one-perturbation-main} is needed only for its trajectory bound.

\Needspace{13\baselineskip}

\begingroup
\renewcommand{\theHequation}{restated.one-perturbation.\arabic{equation}}
\onePerturbationStatement*
\endgroup

\begin{proof}
Once the error activates a shift, the component agrees with the shifted quadratic as long as its inner coordinates stay inside the cutoff. We verify this agreement over the displacement window, then use the non-increasing steps to count the iterates in that window.

Let $\xi$ take values $\pm1$ with equal probability. All coordinates remain zero until update $j$, which sends $s[j]$ to $\eta_j\sigma\xi$ and leaves the other coordinates zero. At either sign, $\delta_j(s[j])=\sqrt K\eta_j\sigma$ and $\delta_j'(s[j])=0$. Thus $s[j]$ stays fixed and both signs activate the same shift.

Choose
\[
 c_{\rm high}=\gamma^2C_{\rm resp},\qquad
 c_{\rm step}=\frac{c_r\gamma}{\theta},
\]
and take the constant in the subsequent-sum condition at least $c_r/(\theta\gamma)$. Assume that this condition holds. For $j+1\le t\le T-1$, write
\[
 \tau_t=\frac{L}{K}\sum_{r=j+1}^{t-1}\eta_r.
\]
By the hypotheses, $\rho\ge C_{\rm resp}K^2$, each increment of $\tau_t$ is at most $L\eta_j/K\le c_r$, and $\tau_{T-1}\ge c_r\rho/K^2$.

Consider the exact-gradient trajectory $(v_t,a_t,b_t)$ of the uncut shifted quadratic in \eqref{eq:local-perturbation-game-main}, with parameters $(L,K,\sqrt K\eta_j\sigma)$, starting at the origin at time $j+1$. Lemma~\ref{lem:shift-response} gives
\begin{align*}
 v_t&\ge c_r\frac{K^{3/2}\eta_j\sigma}{\rho}
 &&\text{when }C_r\le\tau_t\le\frac{c_r\rho}{K^2},\\
 |a_t|+|b_t|&\le C_r\sqrt K\eta_j\sigma
 &&\text{when }\tau_t\le\frac{c_r\rho}{K^2}.
\end{align*}
Let $i$ and $k$ be the first indices with $\tau_i\ge C_r$ and $\tau_k\ge c_r\rho/K^2$. They exist because $\tau_{T-1}\ge c_r\rho/K^2$; moreover, $c_r\rho/K^2\ge2(C_r+2c_r)$ and each increment is at most $c_r$, so $i<k\le T-1$.

For $j+1\le t<k$, the quadratic trajectory satisfies $|a_t+b_t|\le C_r\sqrt K\eta_j\sigma<R_0\sqrt K\eta_j\sigma$. Starting from the common state at time $j+1$, induction therefore identifies it with the actual component trajectory through iterate $k-1$. Every gradient query needed for this induction lies where the cutoff is identically one. For \SGDAAlt{}, the inner-gradient query after the primal update has the same $a+b$, so the agreement also holds at that query. Thus $v_t[j]=v_t$ on the interval we use below.

Since $\eta_t\le\eta_j$ for $t>j$ and the first crossing of $C_r$ overshoots by at most $c_r$,
\begin{equation}
 (k-i)\frac{L\eta_j}{K}
 \ge\tau_k-\tau_i
 \ge\frac{c_r\rho}{K^2}-C_r-c_r
 \ge\frac{c_r\rho}{2K^2}.
 \label{eq:response-output-count-app}
\end{equation}
For every $i\le t<k$, the displacement bound and $\Phi_{\sigma,j}'(v)=LKv$ give $\Phi_{\sigma,j}'(v_t[j])\ge c_rLK^{5/2}\eta_j\sigma/\rho$. These are $k-i$ reported iterates, so
\begin{align}
 \sum_{t<T}\E|\Phi_{\sigma,j}'(v_t[j])|^2
 &\ge(k-i)\left(\frac{c_rLK^{5/2}\eta_j\sigma}{\rho}\right)^2
 \ge\frac{c_r^3}{2}\frac{LK^4\sigma^2\eta_j}{\rho}
 \ge c\frac{\ell\kappa^4\sigma^2\eta_j}{\rho}.
 \label{eq:single-perturbation-conversion-app}
\end{align}

All constants are independent of the horizon and step-size sequence, and the same displacement estimate holds for either noise sign. In the aggregate game, the fixed oracle constructed in Lemma~\ref{lem:noise-components-lower} generates each component's prescribed perturbation at its designated update.
\end{proof}

\Needspace{18\baselineskip}
We now realize the prescribed perturbations through one fixed oracle and sum their contributions. Only components with enough subsequent cumulative steps contribute to the estimate; the omitted components account for a bounded total step size, leaving a constant fraction of $S$ when $S$ is sufficiently large.

\begingroup
\renewcommand{\theHequation}{restated.noise-components.\arabic{equation}}
\begin{restatable}[Lower bound from the noise components]{lemma}{noiseComponentsStatement}
\label{lem:noise-components-lower}
There are universal $c,C>0$ with the following property. Fix $\ell,\sigma>0$, $\kappa\ge\kappa_{\rm high}$, $\rho\ge\max\{1,c_{\rm high}\kappa^2\}$, an integer $T\ge2$, a predetermined schedule $\eta_0\ge\cdots\ge\eta_{T-2}\ge0$, and $m\in\{0,\ldots,T-1\}$ with $\ell\eta_t<8$ for $m\le t\le T-2$. Set $S=\sum_{t=m}^{T-2}\eta_t$. The components in \eqref{eq:aggregate-noise-game} can be chosen $\ell$-smooth and NC-$(\ell/\kappa)$-P{\L}, with $\Phi_{\sigma,j}(0)=\min\Phi_{\sigma,j}=g_{\sigma,j}(0,0)=0$. Both \SGDASim{} and \SGDAAlt{} on $g_\sigma$, initialized at the origin with exact outer gradients and the fixed inner oracle $G_y(v,y;\xi)=\nabla_y g_\sigma(v,y)+\sigma\xi e(y)$ from Section~\ref{sec:high-ratio-noisy}, satisfy
\begin{equation}
 \frac1T\sum_{t=0}^{T-1}\E\norm{\nabla\Phi_\sigma(v_t)}^2
 \ge c\frac{\kappa^4\ell\sigma^2S}{\rho T}
 \quad\text{if }S\ge\frac{C\rho}{\kappa\ell}.
 \label{eq:noise-components-main}
\end{equation}
\end{restatable}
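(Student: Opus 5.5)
The plan is to reduce the aggregate bound to a sum of single-component bounds from \cref{lem:one-perturbation-main}, after checking that the fixed oracle produces exactly one perturbation per component at its designated update. First, the components $g_{\sigma,j}$ act on disjoint coordinates, so $g_\sigma$ inherits $\ell$-smoothness, since the Hessian is block diagonal with blocks bounded by $LC_0\le\ell$. It also inherits inner $(\ell/\kappa)$-P{\L}: the gap and the squared inner gradient are both sums over blocks, and each block satisfies the inequality with constant at least $Lc_0/K\ge\ell/\kappa$. Moreover $\Phi_\sigma=\sum_j\Phi_{\sigma,j}$, so $\norm{\nabla\Phi_\sigma(v_t)}^2=\sum_j|\Phi_{\sigma,j}'(v_t[j])|^2$. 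The normalization $\Phi_{\sigma,j}(0)=\min\Phi_{\sigma,j}=g_{\sigma,j}(0,0)=0$ was recorded in Appendix~\ref{app:one-perturbation-conversion}.

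Second, I will verify the oracle dynamics by induction on $t$. Since the origin is stationary for every block, all coordinates stay at zero until noise enters. At update $t$, the vector $e(y_t)$ points to the smallest $j$ with $s[j]=0$. I claim this index is $t$ whenever $\eta_t\sigma>0$. At time $t$, the activated blocks $j<t$ have $s[j]=\pm\eta_j\sigma\ne0$ and stay frozen because $\delta_j'$ vanishes there. The coordinate $s[t]$ is still zero, since it has zero gradient at $s=0$.

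Blocks with $\eta_j\sigma=0$ are zero functions, and they need care. Their $s[j]$ stays $0$, and a noise step at such a $j$ moves $s[j]$ by $\eta_j\sigma\xi=0$. So the pointer $e$ would be stuck at the first zero-step index. Because the schedule is non-increasing, every later step is also zero, so only blocks with zero step are affected and nothing is lost. I will therefore argue that each block $j$ with $\eta_j>0$ receives exactly one $\pm\sigma$ error at update $j$, and exact inner gradients at all other updates. Each block then evolves exactly as in \cref{lem:one-perturbation-main}. For \SGDAAlt{} I must also check that the pointer is evaluated at $y_t$, which is unchanged by the primal update.

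Third, I sum the per-block bounds. Each $j\ge m$ satisfies $\ell\eta_j<8\le c_{\rm step}\kappa$ once $\kappa_{\rm high}$ is enlarged. Let $j^\ast$ be the largest index with $\sum_{t=j^\ast+1}^{T-2}\eta_t\ge C_1\rho/(\ell\kappa)$, where $C_1$ is the constant from \cref{lem:one-perturbation-main}. All $j\in[m,j^\ast]$ qualify, and \cref{lem:one-perturbation-main} gives $\sum_{t<T}\E|\Phi_{\sigma,j}'|^2\ge c\ell\kappa^4\sigma^2\eta_j/\rho$. The omitted tail $j>j^\ast$ has total step below $C_1\rho/(\ell\kappa)+\eta_{j^\ast+1}$, which is at most $C_1\rho/(\ell\kappa)+8/\ell\le 2C_1\rho/(\ell\kappa)$ because $\rho\ge\kappa$. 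So if $S\ge C\rho/(\kappa\ell)$ with $C=4C_1$, the retained steps sum to at least $S/2$. Summing over the retained $j$ and dividing by $T$ gives \eqref{eq:noise-components-main}.

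The main obstacle is the bookkeeping in the second step, not the estimates. I must make sure that activated blocks never re-enter the pointer, that zero-step blocks cannot divert noise away from a block with $\eta_j>0$ (monotonicity rules this out), and that the per-block independence and expectation are computed correctly under a single shared oracle.
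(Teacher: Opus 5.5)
Your proposal is correct and follows essentially the paper's argument: disjoint blocks, an induction on the pointer $e(y)$ in which monotonicity makes zero steps trailing, then summing \cref{lem:one-perturbation-main} over the initial segment of tail indices with enough remaining steps and bounding the omitted tail by one step plus the threshold. The only slip is the claim $\rho\ge\kappa$, which does not follow from $\rho\ge\max\{1,c_{\rm high}\kappa^2\}$ when $c_{\rm high}\kappa<1$. Use instead $8/\ell\le(8/c_{\rm high})\rho/(\kappa\ell)$, which holds since $\kappa\ge1$, and take $C=2(C_1+8/c_{\rm high})$, as the paper does.
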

\endgroup

\begin{proof}
Let $J=\{0,\ldots,T-2\}$. Set $g_{\sigma,j}\equiv0$ for $\eta_j=0$, and choose every positive-step block from \eqref{eq:rescaled-perturbation-block-app}. The preceding smoothness, inner-P{\L}, and initialization estimates apply to all these blocks, with no restriction on the initial step sizes.

The selector $e(y)$ from Section~\ref{sec:high-ratio-noisy} chooses the $s[j]$-coordinate of the smallest $j\in J$ with $s[j]=0$, or is zero if no such index exists. Its selection regions are finite intersections of coordinate hyperplanes and their complements, so it is measurable and has norm at most one. Fresh independent uniform signs make the inner oracle pointwise unbiased, with variance $\sigma^2\norm{e(y)}^2\le\sigma^2$.

We next verify that this oracle activates the intended component at each positive update. The induction invariant is that, at the start of a positive update $j$, all blocks with index at least $j$ remain at the origin, while every earlier block has $s[i]=\pm\eta_i\sigma\ne0$. Stationarity at the origin keeps the unactivated blocks fixed during the primal update. Thus $e(y)$ selects the third inner coordinate of block $j$ for either update order, and update $j$ sends $s[j]$ to $\pm\eta_j\sigma$ while leaving its other coordinates at zero.

At these peaks, $\partial_s r_K(a,b,\pm1)=0$ for every $a,b$, including outside the cutoff region. All subsequent exact $s[j]$-gradients therefore vanish, so $s[j]$ stays fixed even if a large step changes the cutoff factor. The remaining unactivated blocks stay at the origin, completing the induction. Since the schedule is non-increasing, zero steps are trailing. The selector then repeatedly chooses the first zero-step block, but the zero step sizes leave every iterate unchanged. Consequently, each nonzero block receives one equiprobable $\pm\sigma$ error at its assigned update and exact gradients otherwise, with independent signs across blocks.

Since the coordinates are disjoint,
\[
 \norm{\nabla\Phi_\sigma(v_t)}^2
 =\sum_{j\in J}\norm{\nabla\Phi_{\sigma,j}(v_t[j])}^2.
\]
Let $C_{\rm one}$ be the remaining-step constant in \cref{lem:one-perturbation-main}, and enlarge $\kappa_{\rm high}$ so that $8\le c_{\rm step}\kappa$. Every positive step with index $t\ge m$ then satisfies its step-size condition. Define the tail indices $J_{\rm tail}:=\{m,\ldots,T-2\}$ and $J_{\rm good}:=\{j\in J_{\rm tail}:\sum_{t=j+1}^{T-2}\eta_t\ge C_{\rm one}\rho/(\kappa\ell)\}$. Non-increasingness ensures $\eta_j>0$ for every $j\in J_{\rm good}$. If $J_{\rm tail}\setminus J_{\rm good}$ is nonempty, its first index $j_0$ satisfies
\[
 \sum_{j\in J_{\rm tail}\setminus J_{\rm good}}\eta_{j}
 \le\sum_{t=j_0}^{T-2}\eta_t
 <\frac8\ell+\frac{C_{\rm one}\rho}{\kappa\ell}
 \le\frac{C_1\rho}{\kappa\ell},
 \qquad C_1:=C_{\rm one}+\frac8{c_{\rm high}},
\]
where the last inequality uses $\rho\ge c_{\rm high}\kappa^2$ and $\kappa\ge1$. If the excluded set is empty, its total step size is zero. Taking $C:=2C_1$, the condition $S\ge C\rho/(\kappa\ell)$ implies $\sum_{j\in J_{\rm good}}\eta_{j}\ge S/2$. Summing \eqref{eq:one-perturbation-gradient-bound} over these indices gives
\begin{align*}
 \frac1T\sum_{t=0}^{T-1}\E\norm{\nabla\Phi_\sigma(v_t)}^2
 &\ge c\frac{\kappa^4\ell\sigma^2}{\rho T}\sum_{j\in J_{\rm good}}\eta_{j}
 \ge\frac c2\frac{\kappa^4\ell\sigma^2S}{\rho T}.
\end{align*}
Reducing the universal constant $c$ proves the claim.
\end{proof}

\subsection{Proof of the lower bound for \texorpdfstring{$f_{\mathrm{stable}}$}{f\_stable}}
\label{app:tail-lower}

The aggregate construction already supplies a fixed oracle using only the inner coordinates. We add the deterministic corridor and combine its lower bound with the contributions of the noise components.

\Needspace{10\baselineskip}
\begingroup
\renewcommand{\theHequation}{restated.tail-lower.\arabic{equation}}
\tailLowerStatement*
\endgroup

\begin{proof}
For the chosen algorithm, \SGDASim{} or \SGDAAlt{}, use the predetermined non-increasing step sizes in the statement. Let $J=\{0,\ldots,T-2\}$, and choose the corridor from \cref{lem:one-dimensional-lower}. For $\sigma>0$, choose the aggregate noise game $g_\sigma$ and its fixed inner oracle from \cref{lem:noise-components-lower}; for $\sigma=0$, set $g_{\sigma,j}\equiv0$ for every $j\in J$ and use exact gradients. Use these games in \eqref{eq:high-separable-game-main}, take exact outer gradients, and initialize every coordinate at zero. In either case, the dimensions are $d_x=1+|J|=T$ and $d_y=3|J|=3(T-1)$.

We first verify the game assumptions. Each component has full-gradient Lipschitz constant at most $\ell$. Since the components use disjoint coordinates, summing the squared component inequalities gives
\[
 \norm{\nabla f_{\mathrm{stable}}(u,v;y)-\nabla f_{\mathrm{stable}}(\widetilde u,\widetilde v;\widetilde y)}^2
 \le\ell^2\bigl(|u-\widetilde u|^2+\norm{v-\widetilde v}^2+\norm{y-\widetilde y}^2\bigr).
\]
Thus the smoothness assumption holds without a dimension factor. Each inner maximum is finite and attained, so their finite product attains the maximum of $f_{\mathrm{stable}}$. Its outer objective is
\[
 \Phi_{\mathrm{stable}}(u,v)
 =g_{\mathrm{corridor}}(u)+\Phi_\sigma(v)
 =g_{\mathrm{corridor}}(u)+\frac{L K}{2}\sum_{\substack{j\in J\\\eta_j\sigma>0}}(v[j])^2.
\]
It is differentiable and lower bounded. Summing the component inner-P{\L} inequalities gives
\begin{align*}
 \frac12\norm{\nabla_y f_{\mathrm{stable}}(u,v;y)}^2
 &=\frac12\sum_{j\in J}\norm{\nabla_{y[j]}g_{\sigma,j}(v[j];y[j])}^2\\
 &\ge\frac\ell\kappa\sum_{j\in J}\bigl(\Phi_{\sigma,j}(v[j])-g_{\sigma,j}(v[j];y[j])\bigr)\\
 &=\frac\ell\kappa\bigl(\Phi_{\mathrm{stable}}(u,v)-f_{\mathrm{stable}}(u,v;y)\bigr).
\end{align*}
The corridor supplies outer gap $\Delta$, each perturbation component has zero outer gap at the origin, and every component is initially maximized in its inner variables. Hence the initial outer and inner gaps are $\Delta$ and zero. These conclusions also hold for the corridor alone, with a vacuous inner-P{\L} inequality.

Adding the corridor does not change any noise-component trajectory, because its coordinates are disjoint and the inner oracle depends only on $(v,y)$. The oracle verification in the proof of \cref{lem:noise-components-lower} therefore also applies to $f_{\mathrm{stable}}$. Together with the game assumptions above, this proves that the origin-initialized instance belongs to $\Class(\ell,\kappa,\Delta,\sigma)$.

It remains to prove the lower bound. Let $C$ be the threshold constant in \cref{lem:noise-components-lower}, and increase $\kappa_{\rm high}$ so that $\kappa_{\rm high}\ge C$. If $S<C\rho/(\kappa\ell)$, including $S=0$, the corridor gives $c\ell\Delta$, since $(T-m)/T>1/2$ and $S/\rho<1/\ell$. This proves the capped bound.

Otherwise $S>0$, and \cref{lem:noise-components-lower} applies when $\sigma>0$; when $\sigma=0$, the noise contribution is zero. Since the squared outer gradients of disjoint components add, their bounds give
\begin{equation}
 \frac1T\sum_{t=0}^{T-1}\E\norm{\nabla\Phi_{\mathrm{stable}}(u_t,v_t)}^2
 \ge c\left[
 \min\!\left\{\ell\Delta,\frac{\rho\Delta}{S}\right\}
 +\frac{\kappa^4\ell\sigma^2S}{\rho T}\right].
 \label{eq:combined-bounds-main}
\end{equation}
The bracket dominates the capped maximum in \eqref{eq:tail-lower-main}, proving the lemma.
\end{proof}

%% file: appendices/low_ratio_proofs.tex
\Needspace{8\baselineskip}
\section{Proofs for the low-ratio lower bound}
\label{app:low-ratio}

We prove the two properties used by \cref{thm:low-ratio}: small steps preserve a region of nonzero outer gradient, and the initially larger steps can be made to end inside that region. The first requires a smooth game with a stable periodic solution; the second modifies that specific game away from the solution. The final rescaling and initialization checks appear in Appendix~\ref{app:low-ratio-theorem}.

\subsection{Construction of \texorpdfstring{$g_{\mathrm{cycle}}$}{g\_cycle}}
\label{app:low-ratio-periodic}
\label{app:low-ratio-small-steps}

\begingroup
\renewcommand{\theHequation}{restated.low-periodic.\arabic{equation}}
\lowPeriodicGameStatement*
\endgroup

\begin{proof}
We first verify global smoothness and inner P{\L}, then check that the prescribed curve solves the GDA equations. A rotating frame separates phase shifts along the orbit from departures that decay toward it. We use this decay to construct a Lyapunov function whose sublevel set is invariant under both discrete updates.

\paragraph{Global smoothness and inner P{\L}.}
We construct $g_{\mathrm{cycle}}(u,a,b)=\sqrt2\,u^\trans a-\norm a^2-\norm{R(a,b)}^2/2$ on $\R^2\times\R^2\times\R^2$. Its affine dependence on $u$ will be essential to the ratio scaling. The residual has the form $R(a,b)=b+\chi_b(b)\psi(a)$: the values and first derivatives of $\psi$ produce the periodic solution, its second derivatives control departures from that solution, and the cutoffs enforce global smoothness.

Let $J=\bigl(\begin{smallmatrix}0&-1\\1&0\end{smallmatrix}\bigr)$ be rotation by a right angle. The target path is $u_\star(t)=(\cos t,\sin t)^\trans$, with $a_\star=-Ju_\star/\sqrt2$ and $b_\star=u_\star/\sqrt2$. Since $\dot u=-\sqrt2a$ and, where the cutoffs equal one, $\dot b=-(b+\psi(a))$, the first and third equations require $\psi(a_\star)=a_\star-Ja_\star$. The remaining equation requires $D\psi(a_\star)^\trans a_\star=-2a_\star+Ja_\star$. We satisfy these value and derivative conditions first, then check contraction through the remaining derivatives.

Choose a smooth cutoff $\chi_a:[0,\infty)\to[0,1]$ that equals one on a neighborhood of $\tfrac12$ and vanishes outside a slightly larger compact interval. Define the polynomials
\begin{equation}
 c(q):=1-3(q-\tfrac12)+\frac72(q-\tfrac12)^2,
 \qquad
 d(q):=-1+(q-\tfrac12),
 \label{eq:low-coefficients-app}
\end{equation}
and set
\begin{equation}
 \psi(a):=\chi_a(\norm a^2)\bigl[c(\norm a^2)a+d(\norm a^2)Ja\bigr].
 \label{eq:low-h-app}
\end{equation}
The orbit conditions require $c(1/2)=1$, $d(1/2)=-1$, and $c'(1/2)=-3$. The remaining choices $c''(1/2)=7$ and $d'(1/2)=1$ make the transverse linearization stable, as verified below. The map $\psi$ is smooth and compactly supported. Choose a smooth radial cutoff $\chi_b:\R^2\to[0,1]$ that equals one on a neighborhood of the circle $\norm b=1/\sqrt2$, vanishes outside a sufficiently large ball, and satisfies
\begin{equation}
 \norm{\psi}_\infty\norm{\nabla\chi_b}_\infty\le\frac12.
 \label{eq:low-b-cutoff-app}
\end{equation}
\begin{samepage}
Define
\begin{equation}
 R(a,b):=b+\chi_b(b)\psi(a),
 \label{eq:low-map-app}
\end{equation}
and
\begin{equation}
 g_{\mathrm{cycle}}(u,a,b)
 :=\sqrt2\,u^\trans a-\norm a^2-\frac12\norm{R(a,b)}^2.
 \label{eq:low-cycle-game-app}
\end{equation}
\end{samepage}

For every fixed $a$ and target $w\in\R^2$, the map $b\mapsto w-\chi_b(b)\psi(a)$ is a contraction with factor at most $1/2$ by \eqref{eq:low-b-cutoff-app}. Its unique fixed point solves $R(a,b)=w$. Thus $b\mapsto R(a,b)$ is bijective. Moreover,
\[
 D_bR=I+\psi(a)\nabla\chi_b(b)^\trans,
 \qquad \norm{D_bR-I}\le\tfrac12,
 \qquad \norm{(D_bR)^{-1}}\le2.
\]
The inverse function theorem gives a smooth inverse. Thus $T(a,b):=(\sqrt2a,R(a,b))$ is a smooth bijection. Its triangular derivative has diagonal blocks $\sqrt2I_2$ and $D_bR$; boundedness of $D_aR$ and $(D_bR)^{-1}$ gives $\norm{(DT)^{-1}}\le C$, and hence $\norm{DT^\trans q}\ge C^{-1}\norm q$ for every $(a,b)$ and $q\in\R^4$.

All derivatives of $\psi$ are bounded. Whenever a second derivative of $R$ is nonzero, both $a$ and $b$ lie in fixed compact sets; elsewhere $R$ is affine. Hence $DR$ and $R D^2R$ are globally bounded, and so is $\nabla^2\norm{R}^2/2$. The full gradient of $g_{\mathrm{cycle}}$ is therefore universally Lipschitz.

For each $a$, maximizing over $b$ cancels the residual. Maximizing over $a$ then gives $\Phi_{g_{\mathrm{cycle}}}(u)=\norm u^2/2$ and
\begin{equation}
 \Phi_{g_{\mathrm{cycle}}}(u)-g_{\mathrm{cycle}}(u,a,b)=\frac12\norm{u-\sqrt2a}^2+\frac12\norm{R(a,b)}^2.
 \label{eq:low-gap-main}
\end{equation}
The surjectivity and transposed-derivative bound for $T$ verify the hypotheses of \cref{lem:completion-square} with outer vector $(u,0)$, giving inner $\mu$-P{\L} for a universal $\mu>0$.

\paragraph{The periodic orbit and its attraction.}
Along \eqref{eq:low-cycle-main}, $b_\star=Ja_\star$, $\norm{a_\star}^2=\tfrac12$, and both cutoffs are constant. Hence
\[
 \psi(a_\star)=a_\star-Ja_\star,
 \qquad
 R(a_\star,b_\star)=a_\star.
\]
On the neighborhood where both cutoff functions equal one, writing $q=\norm a^2$,
\begin{equation}
 D\psi(a)^\trans a=[c(q)+2q c'(q)]a-d(q)Ja.
 \label{eq:low-derivative-identity-app}
\end{equation}
The values $c(\tfrac12)=1$, $c'(\tfrac12)=-3$, and $d(\tfrac12)=-1$ give
\[
 \dot u_\star=-\sqrt2a_\star=Ju_\star,
 \quad
 \dot a_\star=\sqrt2u_\star-2a_\star-D\psi(a_\star)^\trans a_\star=Ja_\star,
 \quad
 \dot b_\star=-R(a_\star,b_\star)=Jb_\star.
\]
Thus \eqref{eq:low-cycle-main} solves the GDA differential equation and is $2\pi$-periodic.

Along this solution, $-\nabla_u g_{\mathrm{cycle}}=Ju_\star$, $\nabla\Phi_{g_{\mathrm{cycle}}}=u_\star$, and
$\norm{\nabla_{a,b}g_{\mathrm{cycle}}}^2=\norm{\dot a_\star}^2+\norm{\dot b_\star}^2=1$.
By \eqref{eq:low-gap-main}, the inner gap equals $5/4$. Inner ascent decreases this gap at rate one, while the motion of $u$ increases it at the same rate: $\norm{\nabla_u g_{\mathrm{cycle}}}^2=1$ and $u_\star^\trans\dot u_\star=0$. These contributions cancel, so the gap remains positive despite global inner P{\L}.

The definitions imply $\psi(e^{tJ}a)=e^{tJ}\psi(a)$ and $R(e^{tJ}a,e^{tJ}b)=e^{tJ}R(a,b)$. Thus the derivative calculation at any point of the periodic solution is the rotation of the calculation at $t=0$. Direct differentiation at $t=0$, using $c''(\tfrac12)=7$ and $d'(\tfrac12)=1$, gives
\begin{equation}
 A_\star:=D\psi(a_\star(0))=\begin{pmatrix}1&0\\-1&-2\end{pmatrix},
 \qquad
 S_\star:=D_a[D\psi(a)^\trans R(a,b)]_{(a_\star(0),b_\star(0))}
 =\begin{pmatrix}-1&3\\3&2\end{pmatrix}.
 \label{eq:low-local-matrices-app}
\end{equation}
Write $z=(u,a,b)$, $z_\star(t)=(u_\star(t),a_\star(t),b_\star(t))$, and let $F=(-\nabla_u g_{\mathrm{cycle}},\nabla_{a,b}g_{\mathrm{cycle}})$ be the GDA vector field. Set $\mathcal J=\operatorname{diag}(J,J,J)$ and $\mathcal Q_t=\operatorname{diag}(e^{tJ},e^{tJ},e^{tJ})$. Rotational equivariance gives $F(\mathcal Q_tz)=\mathcal Q_tF(z)$, so $\widetilde z(t)=\mathcal Q_t^\trans z(t)$ satisfies $\dot{\widetilde z}=F(\widetilde z)-\mathcal J\widetilde z$. The periodic solution becomes the fixed point $z_\star(0)$, whose Jacobian is
\begin{equation}
 M_\star=DF(z_\star(0))-\mathcal J=
 \begin{pmatrix}
 -J&-\sqrt2I_2&0\\
 \sqrt2I_2&-2I_2-S_\star-J&-A_\star^\trans\\
 0&-A_\star&-I_2-J
 \end{pmatrix}.
 \label{eq:low-linearization-app}
\end{equation}
A direct determinant calculation gives
\begin{equation}
 \det(\lambda I-M_\star)
 =\lambda(\lambda^5+7\lambda^4+7\lambda^3+21\lambda^2+10\lambda+4).
 \label{eq:low-characteristic-app}
\end{equation}
For the quintic factor, the first column of the Routh array is $1,7,4,9/2,370/63,4$. It has no sign change, so the Routh--Hurwitz criterion excludes roots in the open right half-plane. The array has neither a zero first-column entry nor an all-zero row, excluding roots on the imaginary axis as well. Thus all five roots have strictly negative real parts. The remaining zero eigenvalue is simple and represents phase shifts $z_\star(t)\mapsto z_\star(t+s)$; its eigenvector is $\dot z_\star(0)=\mathcal Jz_\star(0)$.

To control departures from the orbit, choose a basis $S\in\R^{6\times5}$ of the stable invariant subspace of $M_\star$, and let $A_-$ be its restriction in this basis.

\Needspace{7\baselineskip}
For sufficiently small $\varepsilon>0$, the map $\mathcal X(\vartheta,\xi)=\mathcal Q_\vartheta(z_\star(0)+S\xi)$, with $\vartheta\in\R/(2\pi\mathbb Z)$ and $\xi\in\R^5$, $\norm\xi<\varepsilon$, gives smooth coordinates near $\Gamma$: the columns of $S$ complement its tangent direction, and the orbit is compact and embedded. Using $M_\star S=SA_-$ and Taylor expansion of $F$ at $z_\star(0)$, differentiation of these coordinates along GDA gives
\[
 (\dot\vartheta-1)\mathcal J(z_\star(0)+S\xi)
 +S(\dot\xi-A_-\xi)=O(\norm\xi^2).
\]
The coefficient matrix $[\mathcal J(z_\star(0)+S\xi),S]$ remains uniformly invertible for small $\xi$. Hence $\dot\vartheta=1+O(\norm\xi^2)$ and $\dot\xi=A_-\xi+O(\norm\xi^2)$, uniformly in the phase $\vartheta$. Since $A_-$ is Hurwitz, choose $P\succ0$ satisfying $A_-^\trans P+PA_-=-I$, and define $V(\mathcal X(\vartheta,\xi))=\xi^\trans P\xi$. On a smaller neighborhood,
\[
 V(z)\asymp\operatorname{dist}(z,\Gamma)^2,
 \qquad
 \dot V=-\norm\xi^2+O(\norm\xi^3)\le-cV
\]
for a universal $c>0$. This proves local exponential orbital stability.

\paragraph{An invariant neighborhood for discrete GDA.}
We use $\dot V\le-cV$ to obtain $V(z^+)\le(1-ch)V(z)+Ch^2$ for both update rules, then choose $h_\star$ so that a fixed sublevel set of $V$ is invariant. Choose a fixed small $\delta>0$ so that $\mathcal U:=\{V\le\delta\}$ is compactly contained in this neighborhood and in $\{\norm u\ge1/2\}$. Simultaneous GDA gives $z^+_{\mathrm{Sim}}=z+hF(z)$. In the alternating update, the change in $u$ is $-h\sqrt2a$, so its inner-gradient query changes only the $a$-gradient, by $-2ha$. Thus $z^+_{\mathrm{Alt}}-z^+_{\mathrm{Sim}}=(0,-2h^2a,0)$. For $z\in\mathcal U$ and sufficiently small $h$, both updates and the line segments joining them to $z$ remain in a fixed compact neighborhood where $V$ is smooth. Taylor expansion therefore gives, uniformly for either update,
\[
 V(z^+)\le V(z)+h\nabla V(z)^\trans F(z)+Ch^2
 \le(1-ch)V(z)+Ch^2
 \le(1-ch)\delta+Ch^2\le\delta,
\]
where we choose $h_\star>0$ small enough that $ch_\star\le1$ and $Ch_\star\le c\delta$. Thus $\mathcal U$ is invariant for every $h\in[0,h_\star]$, including $h=0$. Induction and $\nabla\Phi_{g_{\mathrm{cycle}}}(u)=u$ give the discrete conclusion for every such step sequence. All constants depend only on the fixed game.

For the next lemma, fix a bounded open neighborhood $\mathcal W\subset\R^4$ of the inner projection of $\mathcal U$. We will preserve the game for every $u$ whenever $(a,b)\in\mathcal W$. This also protects the alternating inner-gradient query after the primal update, since that update changes only $u$.
\end{proof}

\subsection{Construction of \texorpdfstring{$\tilde g_{\mathrm{cycle}}$}{tilde g\_cycle}}
\label{app:low-ratio-interpolation}
\label{app:low-ratio-arbitrary-steps}

To construct $\tilde g_{\mathrm{cycle}}$, we extend the base game to accommodate finitely many initial steps $h_t>h_\star$. At the initial query states $y_t$ to be chosen, for $0\le t<m$ and $e_1=(1,0)^\trans$, the required gradients are
\begin{equation}
 \nabla_u\tilde g_{\mathrm{cycle}}(e_1,y_t)=0,
 \qquad \nabla_y\tilde g_{\mathrm{cycle}}(e_1,y_t)=\frac{y_{t+1}-y_t}{h_t}.
 \label{eq:low-target-gradients-app}
\end{equation}
They keep the outer variable fixed and send each inner state to the next, ending on the cycle. A local change of inner coordinates prescribes these gradients through its derivative. We preserve $a$ to retain both the primal gradient $\sqrt2a$ and the bilinear term $\sqrt2u^\trans a$. A nonlinear change of these coordinates would introduce inner-Hessian terms proportional to $u$, obstructing global joint smoothness. The following lemma realizes the prescribed derivative inside a ball; disjoint balls will keep its bounds independent of the number of steps.

\begin{lemma}[Local derivative modification that leaves the first two coordinates unchanged]
\label{lem:low-local-modification}
Fix $M<\infty$ and $\delta>0$. There is $C=C(M,\delta)$ such that, for every $z\in\R^5$ and every
\begin{equation}
 N=\begin{pmatrix}I_2&0\\P&Q\end{pmatrix},
 \qquad
 \det N>0,
 \qquad
 \norm N+\norm{N^{-1}}\le M,
 \label{eq:low-target-derivative-app}
\end{equation}
there is a smooth bijection $G:\R^5\to\R^5$ with a smooth inverse. For every $w$, the first two coordinates of $G(w)$ equal those of $w$. Moreover, $G(z)=z$, $DG(z)=N$, $G$ equals the identity outside $B(z,\delta)$, $G$ maps this ball onto itself, and
\begin{equation}
 \norm{DG}_\infty+\norm{DG^{-1}}_\infty+\norm{D^2G}_\infty\le C.
 \label{eq:low-local-bounds-app}
\end{equation}
\end{lemma}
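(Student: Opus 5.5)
The plan is to build $G$ as a composition of a bounded number of small, explicit perturbations of the identity. Each perturbation is supported in $B(z,\delta)$, fixes the first two coordinates, and realizes one small factor of $N$. By translation we may take $z=0$. Throughout, write $\mathcal G:=\{(\begin{smallmatrix}I_2&0\\P&Q\end{smallmatrix}):\det Q>0\}$. Because the block form gives $\det N=\det Q$, every $N$ in the statement lies in $\mathcal G$, and $\mathcal G$ is a group under multiplication.

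\textbf{Step 1: a uniform factorization of $N$.} We connect $N$ to $I_5$ inside $\mathcal G$ along a path with controlled norms. First choose a path $Q_s$ from $I_3$ to $Q$ via the polar decomposition $Q=O\exp(S)$, with $O\in SO(3)$ and $S$ symmetric. Write $O=\exp(\Omega)$ for a skew $\Omega$ with $\norm\Omega\le\pi$, and put $Q_s=\exp(s\Omega)\exp(sS)$. Then take $P_s=sP$ and $N_s=(\begin{smallmatrix}I_2&0\\P_s&Q_s\end{smallmatrix})$. The bound $\norm N+\norm{N^{-1}}\le M$ bounds $\norm S$ and $\norm P$ in terms of $M$. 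Hence $\norm{N_s}$, $\norm{N_s^{-1}}$ and the Lipschitz constant of $s\mapsto N_s$ are all bounded by a constant $C(M)$. Fix a small $\varepsilon>0$, to be chosen in Step 2 depending only on the cutoff. Split $[0,1]$ into $k=k(M,\varepsilon)$ equal pieces and set $N_i:=N_{s_{i-1}}^{-1}N_{s_i}$. Then $N=N_1\cdots N_k$, each $N_i\in\mathcal G$, and $\norm{N_i-I}\le\varepsilon$. Moreover $N_i-I$ has vanishing first two rows.

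\textbf{Step 2: the elementary maps.} Fix a smooth cutoff $\chi:[0,\infty)\to[0,1]$ that equals $1$ near $0$ and vanishes on $[1,\infty)$. Define
\[
 G_i(w):=w+\chi(\norm w/\delta)(N_i-I)w .
\]
The perturbation term has zero first two components, so $G_i$ fixes the first two coordinates. It equals the identity outside $B(0,\delta)$, and $DG_i(0)=N_i$. Its derivative satisfies $\norm{DG_i-I}\le\varepsilon(1+\norm{\chi'}_\infty)\le\tfrac12$ once $\varepsilon$ is small. Hence $w\mapsto G_i(w)-w$ is $\tfrac12$-Lipschitz, which makes $G_i$ injective. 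It is also surjective, since $w=y-(G_i(w)-w)$ is a contraction fixed-point problem for each target $y$. The inverse function theorem then gives a smooth inverse with $\norm{DG_i^{-1}}\le2$. We also have $\norm{D^2G_i}\le C\varepsilon/\delta$. Finally, $G_i$ is a bijection equal to the identity off the ball, so it maps $B(0,\delta)$ onto itself.

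\textbf{Step 3: composition and bounds.} Set $G:=G_1\circ\cdots\circ G_k$. Each factor fixes $0$, so the chain rule gives $DG(0)=N_1\cdots N_k=N$. All the other properties are preserved under composition: fixing the first two coordinates, being the identity outside the ball, mapping the ball onto itself, and being a smooth bijection with smooth inverse. The bounds \eqref{eq:low-local-bounds-app} follow from the chain rule, because $k$ factors are composed and each has uniformly bounded first and second derivatives. Concretely, $\norm{DG}\le(3/2)^k$, $\norm{DG^{-1}}\le2^k$, and $D^2G$ is bounded by a polynomial in $k$ times $(3/2)^{2k}\varepsilon/\delta$. These depend only on $(M,\delta)$ and not on $z$.

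The main obstacle is Step 1: making the number of factors $k$ uniform over all admissible $N$. This is where $\det N>0$ is used, since it places $N$ in the connected group $\mathcal G$. The norm bound $M$ keeps the explicit polar-decomposition path inside a compact set, which gives the uniform Lipschitz constant. Everything after that is a routine small-perturbation argument.
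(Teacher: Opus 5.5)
Your proof is correct. It shares the paper's first step and differs in how the path is turned into a diffeomorphism.

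\textbf{Common first step.} Both arguments connect $I_5$ to $N$ inside the group of block-lower-triangular matrices with $\det Q>0$, using the polar decomposition of $Q$. The paper interpolates the positive-definite factor linearly, $Q_\tau=e^{\tau A}((1-\tau)I+\tau S)$, while you use $\exp(s\Omega)\exp(sS)$. Either choice works.

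\textbf{The paper's route.} It integrates one compactly supported time-dependent vector field, $X_\tau(w)=\chi(w)\dot N_\tau N_\tau^{-1}(w-z)$.
\begin{itemize}
\item The flow automatically fixes $z$ and the first two coordinates.
\item The linearized equation $\frac{d}{d\tau}DG_\tau(z)=\dot N_\tau N_\tau^{-1}DG_\tau(z)$ gives $DG_1(z)=N$ exactly.
\item Invertibility comes from the time-reversed flow.
\item The bounds on $DG$, $DG^{-1}$ and $D^2G$ come from Gronwall applied to the first and second variational equations.
\end{itemize}

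\textbf{Your route.} You discretize the path into $k(M)$ near-identity increments $N_i=N_{s_{i-1}}^{-1}N_{s_i}$. You realize each increment by the cutoff-linear map $w+\chi(\norm w/\delta)(N_i-I)w$. Invertibility comes from a contraction (Banach fixed-point) argument, and all bounds come from the chain rule. The telescoping product $N_1\cdots N_k=N$ then delivers the prescribed derivative.

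\textbf{Comparison.} Your argument is more elementary, since it needs no theory of smooth dependence of ODE solutions on initial data. The cost is bookkeeping over a composition of $k$ maps, with constants exponential in $k$. This is harmless, because $k$ depends only on $M$ once the cutoff fixes $\varepsilon$. The paper's flow is more compact and produces the exact derivative in one stroke. In effect your construction is a discrete analogue of that flow, and both give a constant depending only on $(M,\delta)$ and not on $z$.
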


\begin{proof}
We join the identity matrix to $N$ through uniformly invertible matrices, then realize this matrix path as the derivative of a compactly supported flow. Since $\det N>0$ and the first diagonal block is $I_2$, the matrix $Q$ is invertible and has positive determinant. Write $Q=US$, where $U^\trans U=I$, $\det U=1$, and $S$ is symmetric positive definite. Choose a skew-symmetric matrix $A$ with $e^A=U$ and $\norm A\le\pi$, and define
\[
 Q_\tau=e^{\tau A}((1-\tau)I+\tau S),
 \qquad
 N_\tau=\begin{pmatrix}I_2&0\\\tau P&Q_\tau\end{pmatrix}.
\]
The matrices $N_\tau$, their inverses, and their derivatives are uniformly bounded in terms of $M$. Put $A_\tau=\dot N_\tau N_\tau^{-1}$. Choose a smooth radial cutoff $\chi$ that equals one on $B(z,\delta/3)$ and vanishes outside $B(z,2\delta/3)$, and set
\[
 X_\tau(w)=\chi(w)A_\tau(w-z).
\]
This time-dependent differential equation has zero first two components and uniformly bounded first two derivatives. Let $G_\tau(w)$ denote its solution at time $\tau$ starting from $w$ at time zero. Then $G_\tau$ fixes $z$, preserves the first two coordinates, and satisfies
\[
 \frac{d}{d\tau}DG_\tau(z)=A_\tau DG_\tau(z),
 \qquad DG_0(z)=I,
\]
so $DG_\tau(z)=N_\tau$. Differentiating this differential equation once and twice with respect to the initial point, and then applying Gronwall's inequality, gives \eqref{eq:low-local-bounds-app} for $G=G_1$ and its inverse. Since the vector field vanishes in a neighborhood of the boundary, the ball is mapped onto itself.
\end{proof}

\Needspace{22\baselineskip}

\begingroup
\renewcommand{\theHequation}{restated.low-initial-updates.\arabic{equation}}
\lowInitialUpdatesStatement*
\endgroup

\begin{proof}
We choose separated inner states whose required update vectors are uniformly bounded. At each state, a local coordinate modification realizes the desired gradient while preserving the first two inner coordinates. The residual representation then verifies inner P{\L}, and the disjoint modification regions give uniform smoothness.

The case $m=0$ uses $\tilde{g}_{\mathrm{cycle}}(u,v,r)=g_{\mathrm{cycle}}(u,v)$ and $y_0=(y_\star(0),0)$. All initialization quantities are bounded on the fixed periodic solution, and the initial primal gradient is $\sqrt2a_\star(0)=-Je_1$, of norm one. Suppose $m\ge1$ and write $y=(a,b,r)\in\R^2\times\R^2\times\R$. The extra coordinate $r$ separates the query states even when their first four coordinates agree. Its value is not itself penalized, so this separation need not increase the initial gap. Define
\begin{equation}
 \begin{aligned}
 r_t&=-\sum_{j=t}^{m-1}h_j\quad(0\le t<m),
 &r_m&=0,\\
 y_t&=(0,-Be_1,r_t)\quad(t<m),
 &y_m&=(a_\star(0),b_\star(0),0).
 \end{aligned}
 \label{eq:low-interpolation-points-app}
\end{equation}
Here $B$ is a fixed universal constant large enough that a neighborhood of $(a,b)=(0,-Be_1)$ is outside the support of the nonlinear part of $R$ and separated from $\mathcal W$. We protect the lift $\mathcal W\times\R$ in the inner space $(a,b,r)$. Let $\bar H(a,b,r)=((a,R(a,b)),r)$. At all $y_t$ with $t<m$, it is the identity in a fixed neighborhood.

Put $v_t=(y_{t+1}-y_t)/h_t=(v_t^a,v_t^b,1)$ for $0\le t<m$. Its first four coordinates are zero except at $t=m-1$; their magnitude at the last step is bounded by a universal constant since $h_{m-1}>h_\star$ and the first four endpoint coordinates are fixed. Let $\pi_b:\R^5\to\R^2$ project onto the two coordinates corresponding to $b$. We seek a smooth bijection $\widetilde H:\R^5\to\R^5$ that preserves $a$, fixes the initial query states $y_0,\ldots,y_{m-1}$, and agrees with $\bar H$ on $\mathcal W\times\R$. Define the game in terms of this map by
\begin{equation}
 \tilde{g}_{\mathrm{cycle}}(u,y)=\ip{(u,0)}{\widetilde T(y)}-\frac12\norm{\widetilde T(y)}^2,
 \qquad \widetilde T(a,b,r)=(\sqrt2a,\pi_b\widetilde H(a,b,r))\in\R^4.
 \label{eq:low-interpolated-game-app}
\end{equation}
At each initial query state $y_t$, $t<m$, we have $\widetilde T(y_t)=(0,-Be_1)$, so $\nabla_u\tilde g_{\mathrm{cycle}}(e_1,y_t)=0$. The required inner gradient is therefore equivalent to
\begin{equation}
 D\widetilde H(y_t)^\trans(\sqrt2e_1,Be_1,0)=v_t.
 \label{eq:low-derivative-gradient-app}
\end{equation}
This identity determines the row corresponding to $b_1$. We choose the remaining two rows to make the derivative uniformly invertible with positive determinant:
\begin{equation}
 D\widetilde H(y_t)=
 \begin{pmatrix}
 I_2&0&0\\
 (v_t^a-\sqrt2e_1)^\trans/B&(v_t^b)^\trans/B&1/B\\
 0&e_2^\trans&0\\
 0&-Be_1^\trans&0
 \end{pmatrix}.
 \label{eq:low-interpolation-derivative-app}
\end{equation}
The lower-right $3\times3$ block has determinant one, so the full derivative also has determinant one. It and its inverse are uniformly bounded, since $B$ is fixed and $v_t$ is uniformly bounded. Distinct $y_t$ with $t<m$ are separated by at least $h_\star$, because their scalar coordinates differ by sums of steps exceeding $h_\star$. Choose a universal ball radius smaller than one quarter of this separation and small enough that $\bar H$ is the identity on every ball and all balls avoid $\mathcal W\times\R$.

Apply \cref{lem:low-local-modification} in these disjoint balls. Each local map fixes its center, has the prescribed derivative, preserves the first two coordinates, and maps its ball onto itself. Let $G$ be their composition and set $\widetilde H=G\circ\bar H$. Since $\bar H$ is the identity near every center, \eqref{eq:low-interpolation-derivative-app} holds. At most one factor of $G$ is nonidentity along the image of any point, so the derivative, inverse-derivative, and second-derivative bounds do not grow with $m$. Since $\bar H$ is bijective and equals the identity on each modification ball, no point outside those balls is mapped into them. Thus $\widetilde H=\bar H$ on $\mathcal W\times\R$.

Since $\widetilde H$ is onto and preserves its first two coordinates, $\widetilde T$ is onto. For $q=(q_a,q_b)\in\R^4$,
\[
 D\widetilde T(y)^\trans q
 =D\widetilde H(y)^\trans(\sqrt2q_a,q_b,0),
 \qquad
 \norm{D\widetilde T(y)^\trans q}\ge c\norm q,
\]
where the last inequality uses the uniform inverse-derivative bound for $\widetilde H$. \Cref{lem:completion-square} now gives $\max_y\tilde{g}_{\mathrm{cycle}}(u,y)=\norm u^2/2$ and global inner P{\L} with a universal constant.

To verify joint smoothness, write $K:=\pi_b\widetilde H$ for the last two residual coordinates. The derivative bounds above control $DK$ and $D^2K$ uniformly. Wherever $D^2K$ is nonzero, $K$ is bounded as well: outside the modification balls this follows from the fixed compact support in $(a,b)$ of the original residual's nonlinear part, and inside a ball it follows because the ball maps onto itself. In particular, its $b$-coordinates stay near $-Be_1$, regardless of the scalar center $r_t$. Thus
\[
 \left\|\nabla^2\!\left(\frac12\norm K^2\right)\right\|
 \le \norm{DK}^2+\norm K\,\norm{D^2K}\le C.
\]
Expanding \eqref{eq:low-interpolated-game-app} gives $\tilde{g}_{\mathrm{cycle}}(u,a,b,r)=\sqrt2u^\trans a-\norm a^2-\norm{K(a,b,r)}^2/2$. Since $a$ is unchanged, the outer-dependent term remains bilinear, so the full Hessian is uniformly bounded. On $\mathcal W\times\R$, we have $K(a,b,r)=R(a,b)$; hence $\tilde g_{\mathrm{cycle}}$ agrees with $g_{\mathrm{cycle}}$ for every $u$ and is independent of $r$ there.

Finally, \eqref{eq:low-derivative-gradient-app} gives the target gradients in \eqref{eq:low-target-gradients-app}. At each initial query, the zero primal gradient leaves $u=e_1$ unchanged, so the simultaneous and alternating methods use the same inner-gradient query and both move from $y_t$ to $y_{t+1}$. The initial inner gap is $(1+B^2)/2$ and the initial primal gradient is zero.

At time $m$, both methods reach $(e_1,y_\star(0),0)$. Agreement with the base game on $\mathcal W\times\R$ covers both update orders, including the alternating inner-gradient query after the primal update. Independence of $r$ keeps $r_t=0$, while \cref{lem:low-periodic-solution} keeps $(u_t,a_t,b_t)\in\mathcal U$ under every small-step continuation. For either $m=0$ or $m\ge1$, the construction gives $\norm{\nabla_u\tilde g_{\mathrm{cycle}}(e_1,y_0)}\le1$. All constants depend only on the fixed small-step construction, not on the number or values of the prescribed steps.
\end{proof}

%% file: appendices/smoothed_comparison.tex
\Needspace{23\baselineskip}
\section{Comparison with Smoothed-AGDA}
\label{app:smoothing-alignment}

We compare the expected squared outer gradient at a uniformly selected SGDA iterate with that at a point computed from a Smoothed-AGDA iterate pair. Algorithm~\ref{alg:smoothed-agda} states the updates of \citet[Algorithm~2]{yang2022faster}, with an auxiliary center $z_t$ initialized at $x_0$ and regularization coefficient $2\ell$.

\begin{algorithm}[H]
\caption{Stochastic Smoothed-AGDA}
\label{alg:smoothed-agda}
\begin{algorithmic}[1]
\Require Initial pair $(x_0,y_0)$, integer $N\ge1$, and step $0<\alpha\le1/(3\ell)$.
\State Set $z_0=x_0$ and $\beta=\alpha\ell/(76800\kappa)$.
\For{$t=0,\ldots,N-1$}
 \State Draw fresh independent samples $\xi_t^x,\xi_t^y$.
 \State $x_{t+1}=x_t-\alpha\bigl[G_x(x_t,y_t;\xi_t^x)+2\ell(x_t-z_t)\bigr]$.
 \State $y_{t+1}=y_t+(\alpha/48)G_y(x_{t+1},y_t;\xi_t^y)$.
 \State $z_{t+1}=z_t+\beta(x_{t+1}-z_t)$.
\EndFor
\State Draw $I$ uniformly from $\{0,\ldots,N-1\}$, independently of the oracle samples.
\State \Return $(\widetilde x,\widetilde y)=(x_I,y_I)$.
\end{algorithmic}
\end{algorithm}

Smoothed-AGDA controls the squared game gradients, but directly bounding the squared outer gradient introduces an extra factor of $\kappa$ through the inner-gradient error. The exact minimizer of the regularized outer objective avoids this loss, and stochastic updates approximate it. We apply the dual-first updates \eqref{eq:smoothed-conversion-updates} to $f(x,y)+\ell\norm{x-\widetilde x}^2$, starting from $(\widetilde x,\widetilde y)$. The guarantee below combines the mean-square estimates of \citet[Theorem~4.1, Proposition~2.1(b), and Corollary~4.1]{yang2022faster} with strong convexity in $x$.

\begin{proposition}[Outer stationarity for Smoothed-AGDA]
\label{prop:smoothed-comparison}
Suppose Assumptions~\ref{ass:smoothness}--\ref{ass:bounded-variance} hold with $\kappa\ge2$, and assume in addition that
\begin{equation}
 \Phi(x_0)-\inf_x\{f(x,y_0)+\ell\norm{x-x_0}^2\}\le c_{\rm init}\Delta.
 \label{eq:smoothed-extra-initialization}
\end{equation}
For every $0<\eps^2\le\ell\Delta$, Smoothed-AGDA (Algorithm~\ref{alg:smoothed-agda}) followed by the dual-first conversion in \eqref{eq:smoothed-conversion-updates} returns a point $\widehat x$ with $\E\norm{\nabla\Phi(\widehat x)}^2\le\eps^2$, using at most
\begin{equation}
 C\left[
 \frac{\kappa\ell\Delta}{\eps^2}
 +\frac{\kappa^2\ell\Delta\sigma^2}{\eps^4}
 +\left(\kappa+\frac{\kappa^4\sigma^2}{\eps^2}\right)\log(2\kappa)
 \right]
 \label{eq:smoothed-total-complexity}
\end{equation}
stochastic partial-gradient calls, where $C$ depends at most on $c_{\rm init}$. In particular, if $\eps^2\le c\ell\Delta/(\kappa^2\log(2\kappa))$ for a sufficiently small universal $c>0$, this cost is
\begin{equation}
 O\!\left(\frac{\kappa\ell\Delta}{\eps^2}
 +\frac{\kappa^2\ell\Delta\sigma^2}{\eps^4}\right).
 \label{eq:smoothed-small-error-complexity}
\end{equation}
\end{proposition}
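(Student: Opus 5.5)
Our plan is to analyze Smoothed-AGDA as stated, producing a pair $(\widetilde x,\widetilde y)$ for the regularized game $F(x,y):=f(x,y)+\ell\norm{x-\widetilde x}^2$. We then run a short dual-first conversion phase on $F$ from $(\widetilde x,\widetilde y)$ to obtain $\widehat x\approx x^\star(\widetilde x):=\argmin_x\max_y F(x,y)$. Finally, we transfer stationarity of the smoothed potential to $\nabla\Phi(\widehat x)$. The first step is to invoke \citet[Theorem~4.1 and Corollary~4.1]{yang2022faster}. With $\alpha\asymp\min\{1/\ell,\eps^2/(\kappa\ell\sigma^2)\}$ and $N\asymp\kappa\ell\Delta\eps^{-2}+\kappa^2\ell\Delta\sigma^2\eps^{-4}$, they give
\[
 \E\bigl[\norm{\nabla_xF(\widetilde x,\widetilde y)}^2+\kappa\norm{\nabla_yF(\widetilde x,\widetilde y)}^2+\ell^2\norm{\widetilde x-z_I}^2\bigr]\lesssim\eps^2 .
\]
Their potential-function initial value is bounded by $\Delta$ and by \eqref{eq:smoothed-extra-initialization}, which is exactly why that extra assumption appears; this gives the first two terms of \eqref{eq:smoothed-total-complexity}. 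Using \citet[Proposition~2.1(b)]{yang2022faster}, the regularized outer function $\Psi(x):=\max_yF(x,y)$ is $\ell$-strongly convex and $O(\ell)$-smooth, since $f(\cdot,y)+\ell\norm{\cdot-\widetilde x}^2$ is $\ell$-strongly convex and the inner problem keeps the P{\L} constant $\mu$.

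The key structural identity is the Moreau-envelope relation: at the exact minimizer $x^\star$ of $\Psi$ one has $\nabla\Phi(x^\star)=-2\ell(x^\star-\widetilde x)$, and $\norm{x^\star-\widetilde x}\lesssim\norm{\nabla\Psi(\widetilde x)}/\ell$ by strong convexity. Moreover, \eqref{eq:primal-gradient-bias} applied to $F$ gives $\norm{\nabla\Psi(\widetilde x)}\le\norm{\nabla_xF(\widetilde x,\widetilde y)}+\kappa\norm{\nabla_yF(\widetilde x,\widetilde y)}$. This is the lossy factor $\kappa$ we want to avoid. Instead, we measure progress through the gap $\Psi(\widetilde x)-F(\widetilde x,\widetilde y)\le\norm{\nabla_yF}^2/(2\mu)$ plus $\Psi(\widetilde x)-\min\Psi\le\norm{\nabla\Psi(\widetilde x)}^2/(2\ell)$. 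These combine so that only $\kappa\norm{\nabla_yF}^2/\ell$ enters, and that quantity is controlled by the Smoothed-AGDA guarantee above. Thus the conversion starts from a potential $\mathcal P_0:=\Psi(\widetilde x)-\min\Psi+\lambda[\Psi(\widetilde x)-F(\widetilde x,\widetilde y)]\lesssim\eps^2/\ell$.

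The conversion phase runs SGDA on the strongly-convex--P{\L} game $F$ with ratio $\Theta(\kappa^2)$, dual-first order, and steps chosen as in the standard SC-P{\L} linear-rate analysis, analogous to the proof of \cref{thm:sim-upper} but with strong convexity turning the descent into contraction. This analysis should give
\[
 \E\mathcal P_k\le(1-c/\kappa)^k\mathcal P_0+O(\eta\kappa^3\sigma^2)
\]
for a potential $\mathcal P_k$ built as in \eqref{eq:sim-upper-potential}. Choosing $\eta\asymp\eps^2/(\ell\kappa^{3}\sigma^2)$ and $k\asymp(\kappa+\kappa^4\sigma^2\eps^{-2})\log(2\kappa)$ drives $\E\norm{\widehat x-x^\star}^2\lesssim\eps^2/\ell^2$. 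The logarithm arises from reducing $\mathcal P_0$ by a factor $\operatorname{poly}(\kappa)$ to absorb constant mismatches. This yields the third term of \eqref{eq:smoothed-total-complexity}. Then
\[
 \norm{\nabla\Phi(\widehat x)}\le\norm{\nabla\Phi(x^\star)}+\ell_\Phi\norm{\widehat x-x^\star},
\]
and $\ell_\Phi\le\ell(1+\kappa)$ from \eqref{eq:outer-smoothness} would reintroduce $\kappa$. So instead we bound $\norm{\nabla\Phi(\widehat x)}$ via the $y$-iterate: the bound $\norm{\nabla_xf(\widehat x,\widehat y)-\nabla\Phi(\widehat x)}\le\kappa\norm{\nabla_yf}$ requires $\norm{\nabla_y f}\lesssim\eps/\kappa$, which the inner part of the potential delivers after the extra $\kappa^2$ factor in the step budget.

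The main obstacle is exactly this last transfer: getting $\E\norm{\nabla\Phi(\widehat x)}^2\le\eps^2$ without paying $\kappa$ twice, while also keeping the conversion's noise term at $\kappa^4\sigma^2\eps^{-2}$ per logarithmic factor. We would first try bounding $\norm{\nabla\Phi(\widehat x)}\le 2\ell\norm{\widehat x-\widetilde x}+\norm{\nabla\Psi(\widehat x)}$, with $\norm{\nabla\Psi(\widehat x)}^2\lesssim\ell\,(\Psi(\widehat x)-\min\Psi)$ coming from the potential. If the potential's $\kappa$-weights are off, we would lengthen the conversion by a further $\log\kappa$. The final small-error claim \eqref{eq:smoothed-small-error-complexity} is immediate, since $\eps^2\le c\ell\Delta/(\kappa^2\log(2\kappa))$ makes the logarithmic terms dominated by the first two.
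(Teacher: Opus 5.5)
Your architecture matches the paper's: a Smoothed-AGDA pair, the exact proximal point $x^\star$ of $\Phi$, a stochastic conversion on $F$, and a transfer to $\nabla\Phi$. However, the two steps that carry the $\kappa$-bookkeeping fail as written.

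\textbf{The initial estimate.} Write $e:=\norm{\nabla_xf(\widetilde x,\widetilde y)}^2+\kappa\norm{\nabla_yf(\widetilde x,\widetilde y)}^2$. Your bounds $\norm{x^\star-\widetilde x}\lesssim\norm{\nabla\Psi(\widetilde x)}/\ell$ and $\Psi(\widetilde x)-\min\Psi\le\norm{\nabla\Psi(\widetilde x)}^2/(2\ell)$ both pass through $\norm{\nabla\Psi(\widetilde x)}\le\norm{\nabla_xf}+\kappa\norm{\nabla_yf}$. They therefore give only $\norm{\nabla\Phi(x^\star)}^2\lesssim\norm{\nabla_xf}^2+\kappa^2\norm{\nabla_yf}^2\le\kappa e$. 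That would force $\E e\lesssim\eps^2/\kappa$ and bring back SGDA's $\kappa^2\ell\Delta\eps^{-2}$. Your sentence that ``only $\kappa\norm{\nabla_yF}^2/\ell$ enters'' does not follow from the inequalities you wrote.

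The missing step is to compare with $\min_xF(x,\widetilde y)$ rather than going through $\nabla\Psi$. Since $\Psi(x)\ge F(x,\widetilde y)$ for every $x$ and $F(\widetilde x,\widetilde y)=f(\widetilde x,\widetilde y)$,
\[
 \Psi(\widetilde x)-\min\Psi\le\bigl[\Phi(\widetilde x)-f(\widetilde x,\widetilde y)\bigr]+\bigl[F(\widetilde x,\widetilde y)-\min_xF(x,\widetilde y)\bigr]\le\frac{e}{2\ell}.
\]
The first bracket is bounded by inner P{\L}, the second by $\ell$-strong convexity of $F(\cdot,\widetilde y)$. Quadratic growth of $\Psi$ at $x^\star$ then gives $\norm{x^\star-\widetilde x}^2\le e/\ell^2$, i.e.\ $\norm{\nabla\Phi(x^\star)}^2\le4e$. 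The same estimate bounds the initial conversion potential by $O(e/\ell)$.

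\textbf{The conversion.} With ratio $\Theta(\kappa^2)$ and $y$ as the fast variable, the primal step is $O(1/(\kappa^2\ell))$. Take $f$ linear in $x$ and independent of $y$ in its $x$-part: then $x$ just does gradient descent on a quadratic of curvature $2\ell$, and the potential contracts only by $1-O(\kappa^{-2})$ per step. So the factor $(1-c/\kappa)^k$ is unavailable, and the noise floor $O(\eta\kappa^3\sigma^2)$ and your choice of $k$ are asserted rather than derived. Your accuracy target $\E\norm{\widehat x-x^\star}^2\lesssim\eps^2/\ell^2$ is also too weak. The proposed fix relies on $\Psi$ being $O(\ell)$-smooth, which is false: for $f=\ell xy-\tfrac{\mu}{2}y^2$, $\nabla^2\Phi=\kappa\ell$.

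The paper reverses the roles and reads $F$ as $\max_y\min_x$:
\begin{itemize}
\item The inner $x$-problem is $\ell$-strongly convex and $3\ell$-smooth, while $\Psi_d(y):=\min_xF(x,y)$ is $(\ell/\kappa)$-P{\L}. A constant ratio then suffices: dual step $\tau\le1/(486\ell)$, primal step $162\tau$, dual first.
\item The potential $P_k=\max\Psi_d-\Psi_d(y^k)+\tfrac1{10}[F(x^k,y^k)-\Psi_d(y^k)]$ satisfies $P_k\le(1-\ell\tau/(2\kappa))^kP_0+C\kappa\tau\sigma^2$, with $P_0\le 11e/(20\ell)$.
\item Strong convexity gives $\E\norm{x^k-x^\star}^2\le40P_k/\ell$.
\end{itemize}

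You can then simply pay $\ell_\Phi\le2\kappa\ell$ once: $\E\norm{\nabla\Phi(x^k)}^2\le8\E e+8\kappa^2\ell^2\E\norm{x^k-x^\star}^2$. The factor $\kappa^2$ costs only $\log(2\kappa)$ on the contracting term. The noise term $C\kappa^3\ell\tau\sigma^2$ fixes $\tau\asymp\eps^2/(\kappa^3\ell\sigma^2)$, giving $k=O((\kappa+\kappa^4\sigma^2\eps^{-2})\log(2\kappa))$.

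In your orientation, the same potential bookkeeping (contraction $\ell\eta_y/\kappa^2$, floor of order $\kappa^2\eta_y\sigma^2$) gives $\kappa^2\log(2\kappa)$ and a higher power of $\kappa$ on $\sigma^2\eps^{-2}$. That term is not absorbed under $\eps^2\le c\ell\Delta/(\kappa^2\log(2\kappa))$, so neither \eqref{eq:smoothed-total-complexity} nor the small-error claim would follow.
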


\begin{proof}
All step sizes and iteration counts below depend only on the accuracy and problem parameters, not on the realized pair.

\paragraph{The iterate pair from Smoothed-AGDA.}
Initialize the auxiliary center of \citet[Algorithm~2]{yang2022faster} at $z_0=x_0$. Their initial quantity $b_0$ is twice the left side of \eqref{eq:smoothed-extra-initialization}, so $b_0\le2c_{\rm init}\Delta$. Before tuning the primal step $\alpha$, the proof of their Theorem~4.1, Eqs.~(40)--(41), bounds the average squared game gradients by
\begin{equation}
 \frac1N\sum_{t=0}^{N-1}\E\!\left[
 \norm{\nabla_xf(x_t,y_t)}^2+\kappa\norm{\nabla_yf(x_t,y_t)}^2\right]
 \le C\frac{\kappa\Delta}{N\alpha}+C\kappa\ell\alpha\sigma^2,
 \qquad 0<\alpha\le\frac1{3\ell}.
 \label{eq:smoothed-pair-rate}
\end{equation}
Here the dual step is $\alpha/48$, the regularization coefficient is $2\ell$, and the center relaxation is $\alpha\ell/(76800\kappa)$, as in their proof. Their descent estimates use only the conditional means and variances of the two gradient calls; no smoothness of individual sampled gradients is required. They therefore apply to the oracle formulation in Assumptions~\ref{ass:unbiasedness} and~\ref{ass:bounded-variance}.

Choose $\alpha=\min\{1/(3\ell),\sqrt{\Delta/(N\ell\sigma^2)}/2\}$, omitting the second term when $\sigma=0$. With $N\ge C(\kappa\ell\Delta/\eps^2+\kappa^2\ell\Delta\sigma^2/\eps^4)$, a uniformly selected pair $(\widetilde x,\widetilde y)$ satisfies $\E e\le\eps^2/16$, where $e:=\norm{\nabla_xf(\widetilde x,\widetilde y)}^2+\kappa\norm{\nabla_yf(\widetilde x,\widetilde y)}^2$. Using the upper budget $\Delta$ in this choice, rather than the actual initial outer gap, also covers initializations with zero outer gap.

\paragraph{The exact proximal point.}
The gradient-bias bound in Lemma~\ref{lem:inner-pl-consequences} would only give $\norm{\nabla\Phi(\widetilde x)}^2\le2\kappa e$. Instead, condition on the selected pair and define the auxiliary game $\widehat f(x,y):=f(x,y)+\ell\norm{x-\widetilde x}^2$. It is $\ell$-strongly convex in $x$, retains inner P{\L} with parameter $\ell/\kappa$, and is $3\ell$-smooth in the partial-gradient convention of Assumption~\ref{ass:smoothness}. Since $f(\cdot,y)+\ell\norm{\cdot}^2/2$ is convex for every $y$, its supremum has the same property. Thus $\Phi(x)+\ell\norm{x-\widetilde x}^2$ is $\ell$-strongly convex and has a unique minimizer $p:=\operatorname*{arg\,min}_x\{\Phi(x)+\ell\norm{x-\widetilde x}^2\}$.
Inner P{\L} and strong convexity in $x$ give
\begin{align}
 \Phi(\widetilde x)-\min_x\widehat f(x,\widetilde y)
 &=\Phi(\widetilde x)-f(\widetilde x,\widetilde y)
   +\widehat f(\widetilde x,\widetilde y)-\min_x\widehat f(x,\widetilde y)
 \le\frac{e}{2\ell},\notag\\
 \frac\ell2\norm{p-\widetilde x}^2
 &\le\Phi(\widetilde x)-\Phi(p)-\ell\norm{p-\widetilde x}^2
 \le\frac{e}{2\ell}.
 \label{eq:smoothed-proximal-gap}
\end{align}
The second line uses strong convexity of the auxiliary outer objective, then $\Phi(p)+\ell\norm{p-\widetilde x}^2\ge\min_x\widehat f(x,\widetilde y)$. Optimality of $p$ now yields the pointwise bound
\begin{equation}
 \nabla\Phi(p)=2\ell(\widetilde x-p),\qquad
 \norm{\nabla\Phi(p)}^2\le4e.
 \label{eq:smoothed-exact-proximal-gradient}
\end{equation}
This is the squared-gradient estimate underlying \citet[Proposition~2.1(b), Eq.~(7)]{yang2022faster}; no conversion from a first-moment norm bound is used.

\paragraph{A finite stochastic conversion.}
Starting at $(x^0,y^0)=(\widetilde x,\widetilde y)$, apply the dual-first alternating method used in that proposition to $\widehat f$, with dual step $\tau$ and primal step $162\tau$. Explicitly, use fresh independent samples to update
\begin{align}
 y^{k+1}&=y^k+\tau G_y(x^k,y^k;\xi_k^y),\notag\\
 x^{k+1}&=x^k-162\tau\bigl[G_x(x^k,y^{k+1};\xi_k^x)+2\ell(x^k-\widetilde x)\bigr],
 \qquad 0<\tau\le\frac1{486\ell}.
 \label{eq:smoothed-conversion-updates}
\end{align}
These are an additional postprocessing procedure, not the primal-first recursion defining \SGDAAlt{} in Section~\ref{sec:setup}.

We use the potential contraction of \citet[Theorem~A.1 and the proof of Proposition~2.1(b)]{yang2022faster}, then obtain the required distance bound directly from strong convexity. Write $\Psi(y):=\min_x\widehat f(x,y)$ and $\Psi^*:=\max_y\Psi(y)$. The auxiliary game has a saddle with outer coordinate $p$: any inner maximizer at $p$ also minimizes $\widehat f(\cdot,y)$ there, by \cref{lem:inner-pl-consequences} and strong convexity. The potential combines the dual gap with the error in minimizing $\widehat f(\cdot,y^k)$:
\[
 P_k:=\E_{\rm conv}\!\left[\Psi^*-\Psi(y^k)+\frac1{10}\bigl(\widehat f(x^k,y^k)-\Psi(y^k)\bigr)\right],
\]
where $\E_{\rm conv}$ averages the fresh conversion samples conditional on $(\widetilde x,\widetilde y)$. The initialization estimate and their contraction give
\begin{equation}
 P_0\le\frac{11}{10}\left[\Phi(\widetilde x)-\min_x\widehat f(x,\widetilde y)\right]\le\frac{11e}{20\ell},
 \qquad
 P_k\le\left(1-\frac{\ell\tau}{2\kappa}\right)^kP_0+C\kappa\tau\sigma^2.
 \label{eq:smoothed-conversion-potential}
\end{equation}
To bound distance to $p$, let $x_y:=\arg\min_x\widehat f(x,y)$. Strong convexity at this minimizer and the saddle inequality $\widehat f(p,y)\le\Psi^*$ give
\begin{equation}
 \norm{x-p}^2
 \le 2\norm{x-x_y}^2+2\norm{x_y-p}^2
 \le\frac4\ell\bigl[\widehat f(x,y)-\Psi(y)+\Psi^*-\Psi(y)\bigr].
\label{eq:conversion-distance-direct}
\end{equation}
Hence $\E_{\rm conv}\norm{x^k-p}^2\le40P_k/\ell$.
The step choices satisfy their conditions with smoothness $3\ell$ and P{\L} parameters $\ell$ and $\ell/\kappa$. In particular, their variance term is $O(\kappa\tau\sigma^2)$. Substitution and expectation over the selected pair give
\begin{equation}
 \E\norm{x^k-p}^2
 \le\frac{C}{\ell^2}
       \left(1-\frac{\ell\tau}{2\kappa}\right)^k\E e
       +\frac{C\kappa\tau\sigma^2}{\ell}.
 \label{eq:smoothed-conversion-distance}
\end{equation}
By Lemma~\ref{lem:inner-pl-consequences}, the outer gradient is $2\kappa\ell$-Lipschitz. To keep the conversion's contribution to the squared outer gradient below $\eps^2/2$, it therefore suffices to make the squared distance at most $\eps^2/(16\kappa^2\ell^2)$. Combining \eqref{eq:smoothed-exact-proximal-gradient} and \eqref{eq:smoothed-conversion-distance} gives
\begin{align}
 \E\norm{\nabla\Phi(x^k)}^2
 &\le8\E e+8\kappa^2\ell^2\E\norm{x^k-p}^2\notag\\
 &\le8\E e+C\kappa^2\left(1-\frac{\ell\tau}{2\kappa}\right)^k\E e
       +C\kappa^3\ell\tau\sigma^2.
 \label{eq:smoothed-conversion-gradient}
\end{align}
The last term determines the conversion step: for $\sigma>0$, choose $\tau=\min\{1/(486\ell),c\eps^2/(\kappa^3\ell\sigma^2)\}$ with a sufficiently small universal $c>0$; use $\tau=1/(486\ell)$ when $\sigma=0$.

\Needspace{10\baselineskip}
After
\[
 k=\left\lceil\frac{C\kappa}{\ell\tau}\log(2\kappa)\right\rceil
 =O\!\left(\left[\kappa+\frac{\kappa^4\sigma^2}{\eps^2}\right]\log(2\kappa)\right)
\]
updates, \eqref{eq:smoothed-conversion-distance} meets this distance target. The factor $\kappa^2$ multiplying the contracting term in \eqref{eq:smoothed-conversion-gradient} accounts for the logarithm. This choice works in expectation over the random starting pair, without requiring its realized game gradients to satisfy a deterministic accuracy bound. With $\widehat x=x^k$, the first line of \eqref{eq:smoothed-conversion-gradient} concludes
\begin{equation}
 \E\norm{\nabla\Phi(\widehat x)}^2
 \le8\E e+\frac{\eps^2}{2}\le\eps^2.
 \label{eq:smoothed-mean-square-output}
\end{equation}
There are two partial-gradient calls per update in both stages. Adding their costs proves \eqref{eq:smoothed-total-complexity}. Under the stated small-error condition, the term $\kappa\log(2\kappa)$ is absorbed by $\kappa\ell\Delta/\eps^2$, and $\kappa^4\sigma^2\eps^{-2}\log(2\kappa)$ is absorbed by $\kappa^2\ell\Delta\sigma^2\eps^{-4}$, giving \eqref{eq:smoothed-small-error-complexity}.
\end{proof}

\subsection{Initialization budget of the hard instances}
\label{app:smoothed-initialization}

The additional budget \eqref{eq:smoothed-extra-initialization} consists of the initial inner gap and the decrease obtainable by minimizing $f(x,y_0)+\ell\norm{x-x_0}^2$ over $x$ while keeping $y_0$ fixed. We verify that their sum is at most $c_{\rm init}\Delta$ for every hard instance in Theorem~\ref{thm:common-lower}, using the same declared budget $\Delta$ and its original initialization. The ordinary outer and inner gap checks remain in the construction proofs.

\paragraph{The unstable quadratic.}
For $f_{\mathrm{unstable}}$ in \eqref{eq:quadratic-main}, Appendix~\ref{app:quadratic} chooses $x_0=4\sqrt{\Delta/(\ell+\mu)}$ and $y_0=0$. Thus $f_{\mathrm{unstable}}(x,y_0)=0$ for every $x$, and
\[
 \Phi_{\mathrm{unstable}}(x_0)-\inf_x\{f_{\mathrm{unstable}}(x,y_0)+\ell(x-x_0)^2\}
 =\Phi_{\mathrm{unstable}}(x_0)=\Delta\le c_{\rm init}\Delta.
\]
This verification includes the one-iterate case $T=1$, for which the high-ratio proof uses this quadratic.

\paragraph{The corridor and noise components.}
The game $f_{\mathrm{stable}}$ in \eqref{eq:high-separable-game-main} is initialized at $(u_0,v_0;y_0)=(0,0;0)$. Equation~\eqref{eq:rescaled-perturbation-block-app} and the zero-component convention give $g_{\sigma,j}(v[j];0)=0$ for every component, so $f_{\mathrm{stable}}(u,v;0)=g_{\mathrm{corridor}}(u)$. Also $\Phi_{\mathrm{stable}}(0,0)=g_{\mathrm{corridor}}(0)$, and \eqref{eq:one-dimensional-gap-app} gives $g_{\mathrm{corridor}}(0)-\inf_u g_{\mathrm{corridor}}(u)=\Delta$. Therefore,
\begin{align*}
 &\Phi_{\mathrm{stable}}(0,0)-\inf_{u,v}\{f_{\mathrm{stable}}(u,v;0)+\ell(u^2+\norm v^2)\}\\
 &\qquad=g_{\mathrm{corridor}}(0)-\inf_u\{g_{\mathrm{corridor}}(u)+\ell u^2\}
 \le\Delta\le c_{\rm init}\Delta.
\end{align*}
This also verifies the corridor alone, including $\sigma=0$ and schedules with no positive noise-component steps, when all noise components vanish. Each noise component alone has regularized initial gap zero.

\paragraph{The low-ratio game.}
Retain the constants $C,C_{\mathrm{low}},\zeta$ and scales $A,s_x,s_y$ from Appendix~\ref{app:low-ratio-theorem}, with $f_{\mathrm{cycle}}$ defined by \eqref{eq:low-rescaled-game-app} and initialized at $x_0=s_xe_1$, $y_0^{\mathrm{cycle}}=s_y y_0$. The construction in Appendix~\ref{app:low-ratio-interpolation} gives $\norm{\nabla_u\tilde g_{\mathrm{cycle}}(e_1,y_0)}\le1$, so $\norm{\nabla_x f_{\mathrm{cycle}}(x_0,y_0^{\mathrm{cycle}})}\le A/s_x$. Affinity in $x$, the gap bound \eqref{eq:low-gap-budgets-app}, and the scales in \eqref{eq:low-exact-scales-app} give
\[
\begin{aligned}
 &\Phi_{\mathrm{cycle}}(x_0)-\inf_x\{f_{\mathrm{cycle}}(x,y_0^{\mathrm{cycle}})+\ell\norm{x-x_0}^2\}\\
 &\qquad=\Phi_{\mathrm{cycle}}(x_0)-f_{\mathrm{cycle}}(x_0,y_0^{\mathrm{cycle}})
   +\frac{\norm{\nabla_x f_{\mathrm{cycle}}(x_0,y_0^{\mathrm{cycle}})}^2}{4\ell}\\
 &\qquad\le\left(2C+\frac{1}{2C_{\mathrm{low}}}\right)\zeta\Delta
 \le3C\zeta\Delta\le c_{\rm init}\Delta.
\end{aligned}
\]
The last inequalities use $C\ge1$, $C_{\mathrm{low}}\ge2C$, and the choice $\zeta\le c_{\rm init}/(3C)$ already made in Appendix~\ref{app:low-ratio-theorem}. The check is uniform over the horizon and schedule, including $T=1$. Both this game and the unstable quadratic use exact gradients, so their checks apply for every $\sigma\ge0$.

\paragraph{Scope of the separation.}
All hard instances satisfy \eqref{eq:smoothed-extra-initialization}, so the SGDA lower bound persists on the subclass for which \cref{prop:smoothed-comparison} applies. Under its small-error condition, the complexity comparison uses the same declared gap budget $\Delta$. SGDA returns a uniformly selected iterate, whereas $\widehat x$ is computed by additional iterations.

%% file: appendices/table_comparison.tex
\section{Complexity calculations for Table~\ref{tab:ncpl-landscape}}
\label{app:table-comparison}

We derive the entries of Table~\ref{tab:ncpl-landscape}, retaining initialization and additional source parameters until the final specialization. For the upper bounds, assume that the initial outer and inner gaps are $O(\Delta)$; the table fixes $\Delta$. MSGDA and AdaMSGDA require every sampled gradient to be $\ell$-Lipschitz, while SPIDER-GDA and SGDA-RR require $\ell$-smooth finite-sum components. SGDA and Smoothed-AGDA require only population smoothness and bounded oracle variance. Expected-norm criteria use tolerance $\eps$, and expected-squared-norm criteria use $\eps^2$. The complete complexities below specify the regimes for the MSGDA and AdaMSGDA specializations and distinguish HCMM-2's noise contribution from its full cost.

\subsection{SGDA and Smoothed-AGDA}

\paragraph{Simultaneous and alternating SGDA.}
Under the setup of this paper, \citet[Theorem~3.1]{yang2022faster} for \SGDAAlt{} and \cref{thm:sim-upper} for \SGDASim{} give
\begin{align}
 \frac1T\sum_{t<T}\E\norm{\nabla\Phi(x_t)}^2
 &\le C\left[\frac{\kappa^2\ell\Delta}{T}
       +\kappa^2\sigma\sqrt{\frac{\ell\Delta}{T}}\right],\notag\\
 T&=O\left(1+\frac{\kappa^2\ell\Delta}{\eps^2}
       +\frac{\kappa^4\ell\Delta\sigma^2}{\eps^4}\right).
 \label{eq:table-sgda-complexity}
\end{align}
The second line makes each contribution in the first at most $\eps^2/2$. Each iteration uses two fresh partial-gradient calls, so iteration and oracle complexities differ by a constant factor. Fixing $\Delta$ and taking the usual small-error regime $\eps^2\le c\ell\Delta$ gives the two upper-bound entries. The matching lower bound in \cref{thm:common-lower} uses the same squared outer-gradient criterion and optimizes over fixed timescale ratios and predetermined nonincreasing step sizes.

\paragraph{Smoothed-AGDA.}
Proposition~\ref{prop:smoothed-comparison}, derived from \citet[Theorem~4.1, Proposition~2.1(b), and Corollary~4.1]{yang2022faster}, includes the conversion to a point $\widehat x$ satisfying $\E\norm{\nabla\Phi(\widehat x)}^2\le\eps^2$. Under the additional regularized initialization condition \eqref{eq:smoothed-extra-initialization}, its total oracle cost is
\begin{equation}
 O\left(\frac{\kappa\ell\Delta}{\eps^2}
       +\frac{\kappa^2\ell\Delta\sigma^2}{\eps^4}
       +\left(\kappa+\frac{\kappa^4\sigma^2}{\eps^2}\right)\log(2\kappa)\right).
 \label{eq:table-smoothed-complexity}
\end{equation}
If $\eps^2\le c\ell\Delta/(\kappa^2\log(2\kappa))$, then
$\kappa^4\sigma^2\eps^{-2}\log(2\kappa)\lesssim
\kappa^2\ell\Delta\sigma^2\eps^{-4}$ and
$\kappa\log(2\kappa)\lesssim\kappa\ell\Delta\eps^{-2}$.
These absorptions give the table's $\widetilde O(\kappa\ell\eps^{-2}+\kappa^2\ell\sigma^2\eps^{-4})$ entry with $\Delta$ fixed. The returned point is computed by the conversion in Appendix~\ref{app:smoothing-alignment}, rather than selected directly from the original trajectory.

\subsection{Finite-sum upper bounds}

\paragraph{SPIDER-GDA.}
For $f=n^{-1}\sum_{i=1}^n f_i$ with $\ell$-smooth components and attained inner $\mu$-P{\L} maximization, \citet[Theorem~6.1 and Lemma~D.2]{chen2022faster} use $\kappa=\ell/\mu$ and initial potential
$V_0=\Phi(x_0)-\inf\Phi+(\Phi(x_0)-f(x_0,y_0))/24$.
With primal step $\tau_x=\Theta((\kappa^2\ell)^{-1})$, their proof gives a uniformly selected iterate $\widehat x$ with
\begin{align}
 \E\norm{\nabla\Phi(\widehat x)}^2
 &\le \frac{2V_0}{K\tau_x}
 \le \frac{C\kappa^2\ell V_0}{K},\notag\\
 \text{component-gradient calls}
 &=O(n+K\sqrt n)
 =O\left(n+\frac{\sqrt n\,\kappa^2\ell V_0}{\eps^2}\right).
 \label{eq:table-spider-complexity}
\end{align}
Here we choose $K=\max\{1,\lceil C\kappa^2\ell V_0/\eps^2\rceil\}$ updates, and the source chooses both the refresh period and recursive-estimator batch size of order $\sqrt n$. A full gradient costs $n$ component calls, which also absorbs the rounding cost. The squared bound gives the source's stated expected-norm criterion by Jensen's inequality; $V_0=O(\Delta)$ gives the table entry. No uniform component-variance bound is needed.

\paragraph{SGDA-RR.}
\citet[Theorem~1]{cho2023sgda} allow a partial-gradient component variance bound of the form
$n^{-1}\sum_i\norm{\nabla_j f_i(z)-\nabla_j f(z)}^2
\le A\norm{\nabla_j f(z)}^2+B$, for $j\in\{x,y\}$.
The table specializes to $A=0$ and $B=\sigma^2$. With ratio $\rho\ge14\kappa^2$, $K$ epochs, and
$V_0=4(\Phi(x_0)-\inf\Phi)+\Phi(x_0)-f(x_0,y_0)$, their bound gives
\begin{align}
 \frac1K\sum_{k=1}^K\E\norm{\nabla\Phi(x_0^k)}^2
 &\le C\left[\frac{\rho\ell V_0}{K}
       +\rho\left(\frac{\ell^2\sigma^2V_0^2}{nK^2}\right)^{1/3}\right],\notag\\
 nK&=O\left(n+\frac{n\rho\ell V_0}{\eps^2}
       +\frac{\sqrt n\,\rho^{3/2}\ell V_0\sigma}{\eps^3}\right).
 \label{eq:table-rr-complexity}
\end{align}
The second line makes both error terms $O(\eps^2)$ and counts $n$ component calls per epoch, up to the constant factor for the two partial gradients; its first term covers rounding to a positive integer number of epochs. Substituting $\rho=\Theta(\kappa^2)$ and $V_0=O(\Delta)$, then using $\eps^2\le c\ell\Delta$ to absorb this rounding term, gives the table's two terms. The output is sampled from epoch starts. Component reuse and sampling without replacement differ from fresh-sample SGDA.

\subsection{MSGDA and AdaMSGDA}

\paragraph{MSGDA.}
We specialize \citet[Theorem~1]{huang2025enhanced} to zero nonsmooth regularizer and unconstrained variables. Assume that each sampled partial gradient is unbiased with variance at most $\sigma^2$, and that every sampled joint gradient is $\ell$-Lipschitz. The source's full-gradient variance bound is then $2\sigma^2$, while its $L_f=\ell$ and $\kappa=\ell/\mu$. We use the valid outer-smoothness bound $L=\ell(1+\kappa)$ from \cref{lem:inner-pl-consequences} in place of the smaller constant printed in the source. Each update draws one fresh sample and evaluates that same sampled function at two iterates, using a constant number of gradient calls.

The source's proof controls the primal estimator error, the inner gap, and the primal update direction. With $w_t$ denoting its primal gradient estimator, the inner distance estimate in the proof of \cref{lem:inner-pl-consequences} gives
\begin{equation}
 \norm{\nabla\Phi(x_t)}^2
 \le 3\norm{w_t-\nabla_x f(x_t,y_t)}^2
     +6\kappa\ell[\Phi(x_t)-f(x_t,y_t)]
     +3\norm{w_t}^2.
 \label{eq:table-msgda-gradient-transfer}
\end{equation}
Indeed, smoothness bounds the squared difference between the outer gradient and the game gradient by $2\kappa\ell$ times the inner gap. With no regularizer, the source's primal gradient mapping equals $w_t$. Thus its averaged squared estimates transfer directly to the squared outer-gradient criterion.

Write $D_x=\Phi(x_1)-\inf\Phi$, $D_y=\Phi(x_1)-f(x_1,y_1)$, and $D=D_x+D_y$. The accumulated estimate below bounds the initial gaps and initial estimator variance by a term proportional to $(\kappa\ell D+\kappa^2\sigma^2)/k$, and the accumulated estimator variance by a term proportional to $\kappa^4k^2\sigma^2$ up to logarithms. Balancing them for $\sigma>0$ gives the following choice of the source's parameters:
\begin{align}
 k&=\left(\frac{\kappa\ell D+\kappa^2\sigma^2}{\kappa^4\sigma^2}\right)^{1/3},
 &c_1&=\frac{2}{3k^3}+9\kappa^2,
 &c_2&=\frac{2}{3k^3}+\frac94,\notag\\
 m&=(c_1k)^3,
 &\lambda&=\frac{\kappa}{8\ell},
 &\gamma&=\frac1{256\kappa\ell}.
 \label{eq:table-msgda-tuning}
\end{align}
These choices satisfy the source proof's restrictions for every $\ell>0$ and $\kappa\ge1$: $k\ge\kappa^{-2/3}$, $9\kappa^2\le c_1\le(29/3)\kappa^2$, and $c_2\le c_1$ give all lower bounds on $m$. In particular, the restriction involving the corrected outer-smoothness constant holds because
\begin{equation*}
 \gamma=\frac1{256\kappa\ell}
 \le\frac1{128\ell(1+\kappa)}
 =\frac{\lambda\mu}{16L}.
\end{equation*}
The remaining primal and dual step restrictions are unchanged and are met by the displayed choices. The schedule is $\eta_t=k/(m+t)^{1/3}$, with momentum weights $c_1\eta_t^2$ and $c_2\eta_t^2$.

The source's coefficient $H$ and the squared-gradient estimate obtained from its proof equations~(59) and~(61) satisfy
\begin{align}
 H&=\frac{D_x}{\gamma k}+\frac{9\kappa^2D_y}{k\lambda}
     +\frac{4\sigma^2m^{1/3}}{k^2}
     +4k^2(c_1^2+c_2^2)\sigma^2\log(m+T)\notag\\
 &\le C\left[\frac{\kappa\ell D+\kappa^2\sigma^2}{k}
       +\kappa^4k^2\sigma^2\log(m+T)\right],\notag\\
 \frac1T\sum_{t=1}^T\E\norm{\nabla\Phi(x_t)}^2
 &\le \frac{12H(m+T)^{1/3}}{T}
 \le12H\left(\frac{m^{1/3}}T+\frac1{T^{2/3}}\right).
 \label{eq:table-msgda-error}
\end{align}
The last line combines the source's averaged squared estimates with \eqref{eq:table-msgda-gradient-transfer}. Making its two contributions at most $\eps^2/2$ requires $T=O(1+Hm^{1/3}\eps^{-2}+H^{3/2}\eps^{-3})$. Substituting the balanced choice of $k$ gives
\begin{align}
 Hm^{1/3}&=\widetilde O(\kappa^3\ell D+\kappa^4\sigma^2),
 &H^{3/2}&=\widetilde O(\kappa^3\ell D\sigma+\kappa^4\sigma^3),\notag\\
 T&=\widetilde O\left(
 1+\frac{\kappa^3\ell D+\kappa^4\sigma^2}{\eps^2}
 +\frac{\kappa^3\ell D\sigma+\kappa^4\sigma^3}{\eps^3}\right).
 \label{eq:table-msgda-complexity}
\end{align}
This is a feasible specialization derived from the cited theorem, rather than the source's accuracy-only statement. For $0<\eps\le\sigma$ and fixed $D$, the $\eps^{-3}$ terms dominate the $\eps^{-2}$ terms, yielding the table's $\widetilde O((\kappa^3\ell\sigma+\kappa^4\sigma^3)\eps^{-3})$ entry. At $\sigma=0$, instead take $k=\max\{\kappa^{-2/3},(\ell D/(\kappa^3\eps^2))^{1/3}\}$ with the remaining choices in \eqref{eq:table-msgda-tuning}; the same estimate gives $O(1+\kappa^3\ell D\eps^{-2})$.

\paragraph{AdaMSGDA.}
Theorem~2 of \citet{huang2025enhanced} additionally assumes
$A_t\succeq aI$ and $b_-I\preceq B_t\preceq b_+I$, where $a>0$ and $0<b_-\le b_+$ are fixed spectral constants. To apply the source's dual-update estimate, enlarge the upper bound to $\bar b_+=\max\{b_+,b_-^{-1}\}$. Then $B_t\preceq\bar b_+I$ and $b_-\bar b_+\ge1$. For its dual gradient estimator $v_t$, the update $\widetilde y_{t+1}=y_t+\lambda B_t^{-1}v_t$ satisfies
\begin{equation*}
 v_t^\top B_t^{-1}v_t
 \ge b_-\norm{B_t^{-1}v_t}^2
 \ge\frac{\norm{\widetilde y_{t+1}-y_t}^2}{\lambda^2\bar b_+}.
\end{equation*}
This supplies the coefficient used in equation~(66) of the source proof with $\bar b_+$ as the spectral upper bound.

Use the same $k,c_2$ as above, but take
$c_1=2/(3k^3)+9(\bar b_+/b_-)\kappa^2$ and $m=(c_1k)^3$.
For explicit admissible steps, set $d=1/(8\bar b_+)$ and
$g=\min\{d/(32\bar b_+),b_-/(16\bar b_+),a/8,a^2d/(9\bar b_+)\}$, and choose
$\lambda=d\kappa/\ell$ and $\gamma=g/(\kappa\ell)$.
The outer-smoothness restriction again holds with $L=\ell(1+\kappa)$, since $g\le d/(32\bar b_+)$ gives
\begin{equation*}
 \gamma\le\frac{d}{32\bar b_+\kappa\ell}
 \le\frac{d}{16\bar b_+\ell(1+\kappa)}
 =\frac{\lambda\mu}{16\bar b_+L}.
\end{equation*}
The other terms defining $g$ satisfy the remaining primal step restrictions, while $m^{1/3}/k=c_1$ satisfies the schedule-dependent primal and dual restrictions. Its coefficient $G$ obeys
\begin{align}
 G&=\frac{D_x}{\gamma ka}
     +\frac{9\bar b_+\kappa^2D_y}{k\lambda a^2}
     +\frac{4\sigma^2m^{1/3}}{k^2a^2}
     +\frac{4k^2(c_1^2+c_2^2)\sigma^2}{a^2}\log(m+T)\notag\\
 &\le C_{a,b_-,b_+}\left[
       \frac{\kappa\ell D+\kappa^2\sigma^2}{k}
       +\kappa^4k^2\sigma^2\log(m+T)\right].
 \label{eq:table-adamsgda-coefficient}
\end{align}
Equations~(96)--(97) of the source proof give
$T^{-1}\sum_{t=1}^T\E\norm{A_t^{-1}\nabla\Phi(x_t)}^2
\le12G(m+T)^{1/3}/T$.
Thus \eqref{eq:table-msgda-complexity} and its table specialization hold for this adaptive gradient mapping, with constants depending on $a,b_-,b_+$. A Euclidean outer-gradient guarantee additionally requires $A_t\preceq a_+I$: then
$\norm{\nabla\Phi(x_t)}\le a_+\norm{A_t^{-1}\nabla\Phi(x_t)}$, so replace the mapping tolerance by $\eps/a_+$. The source's lower bound on $A_t$ alone does not supply this conversion. Both sampled-gradient methods therefore require stronger oracle information than the bounded-variance fresh-sample model of this paper.

\subsection{HCMM-2}

\citet[Theorem~2]{cai2026bias} control the average expected outer-gradient norm. Their assumptions require attained inner P{\L} maximization, population-gradient smoothness $\ell$, population-Hessian Lipschitzness $M$, and stochastic gradient and Hessian oracles based on fresh independent samples across iterations, with conditional unbiasedness and variances at most $\sigma^2$ and $\sigma_h^2$, respectively. The parameters $M$ and $\sigma_h$ are not determined by $\ell,\kappa,\sigma$. Initialization uses one stochastic gradient; each subsequent iteration uses a constant number of gradient and Hessian-vector operations.

Write $D_x=\Phi(x_0)-\inf\Phi$, $D_y=\Phi(x_0)-f(x_0,y_0)$, and $D=D_x+D_y$. For momentum weight $b\in(0,1]$ and normalized step lengths $a_y=a>0$ and $a_x=a/(6\kappa)$, the source's conditions~(27) hold. Its Appendix~E, equation~(134), gives
\begin{align}
 \frac1T\sum_{t<T}\E\norm{\nabla\Phi(x_t)}
 \le C\Bigg[&
 \frac{\kappa D}{aT}+\frac{\kappa\sigma}{bT}
 +\frac{\kappa Ma^2}{b}
 +\frac{\kappa\sigma_h a}{\sqrt b}\notag\\
 &+\kappa\sigma\sqrt b+\kappa^2\ell a
 +\frac{\sqrt{\kappa\ell D_y}}T\Bigg].
 \label{eq:table-hcmm-error}
\end{align}
The last term uses the P{\L} quadratic-growth bound
$\operatorname{dist}(y_0,\argmax_y f(x_0,y))\le\sqrt{2\kappa D_y/\ell}$.
We use the larger outer-gap coefficient obtained directly from the source's descent inequality~(46); the coefficient printed in~(134) differs by a constant factor, which does not change these orders.

Choose $a=b=T^{-2/3}$. Substitution into \eqref{eq:table-hcmm-error}, followed by making each contribution at most $\eps/3$, yields
\begin{align}
 \frac1T\sum_{t<T}\E\norm{\nabla\Phi(x_t)}
 &\le C\left[
 \frac{\kappa(D+\sigma+\sigma_h)}{T^{1/3}}
 +\frac{\kappa M+\kappa^2\ell}{T^{2/3}}
 +\frac{\sqrt{\kappa\ell D_y}}T\right],\notag\\
 T&=O\left(
 1+\frac{\kappa^3(D^3+\sigma^3+\sigma_h^3)}{\eps^3}
 +\frac{(\kappa M+\kappa^2\ell)^{3/2}}{\eps^{3/2}}
 +\frac{\sqrt{\kappa\ell D_y}}{\eps}\right).
 \label{eq:table-hcmm-complexity}
\end{align}
For a uniformly selected iterate, this is a sufficient complexity for expected norm at most $\eps$, not expected squared norm at most $\eps^2$. Under $\sigma_h\le\sigma$, its noise contribution at order $\eps^{-3}$ is $O(\kappa^3\sigma^3\eps^{-3})$, giving the table's $O_\ell$ entry.

An additional sufficient regime in which this entry bounds the complete cost is $\sigma>0$, $D=O(\sigma)$, $\sigma_h\le\sigma$, and
$0<\eps\le c\min\{\sigma,\sigma^2/(\ell+M/\kappa)\}$.
The second accuracy restriction absorbs the Hessian-regularity term, and the two restrictions together with $D_y\le D$ absorb the terminal term. Hence \eqref{eq:table-hcmm-complexity} becomes $O(\kappa^3\sigma^3\eps^{-3})$ throughout this regime. For general initialization, the full bound also contains $\kappa^3D^3\eps^{-3}$; fixing the initial gaps does not make this term smaller in $\eps$. The table then displays a noise contribution, while \eqref{eq:table-hcmm-complexity} retains the initialization and Hessian-regularity terms.

\subsection{Lower bounds}

\paragraph{NC-SC oracle lower bounds.}
For deterministic linear-span methods, \citet[Theorem~3.1]{zhang2021complexity} prove $\Omega(\sqrt\kappa\,\ell\Delta\eps^{-2})$ full-gradient calls to reach outer-gradient norm at most $\eps$. Their finite-sum Theorem~3.2 gives $\Omega(n+\sqrt{n\kappa}\,\ell\Delta\eps^{-2})$ expected component calls to reach an $\eps$-stationary query for randomized linear-span methods. Its criterion is expected hitting time, as defined in their equations~(4)--(6), rather than the expected gradient norm of a specified output. This result requires $\kappa\ge2n$ and averaged component smoothness
$n^{-1}\sum_i\norm{\nabla f_i(z)-\nabla f_i(z')}^2
\le\ell^2\norm{z-z'}^2$, which is weaker than smoothness of every component. Both rates apply in the stated small-error regime $\eps^2\le c\ell\Delta$; fixing $\Delta$ gives the two table entries.

The first-order lower bound of \citet[Theorem~2]{li2021complexity} is proved for zero-respecting algorithms on nonempty closed convex domains $\mathcal X,\mathcal Y$, with $0\in\mathcal X$ and $\Phi(0)-\min_{x\in\mathcal X}\Phi(x)\le\Delta$. Its projected mapping is
$G_\Phi(x):=L_\Phi[x-P_{\mathcal X}(x-\nabla\Phi(x)/L_\Phi)]$, where $P_{\mathcal X}$ is Euclidean projection and $L_\Phi\le(\kappa+1)\ell$ is the source's value-function smoothness constant. The target is $\E\norm{G_\Phi(\widehat x)}\le\eps$, and the lower bound is
$\Omega(\sqrt\kappa\,\ell\Delta\eps^{-2}
+\kappa^{1/3}\ell\Delta\sigma^2\eps^{-4})$ calls to its bounded-variance stochastic first-order oracle. Fixing $\Delta$ gives the table entry; its criterion and algorithm class differ from the unconstrained SGDA recursion studied here.

\paragraph{NC-P{\L} oracle lower bound.}
\citet[Theorem~3.2]{pan2026lower} prove
$\Omega(\kappa\ell\Delta\eps^{-2})$ exact first-order calls for arbitrary deterministic first-order methods, with only an initial outer-gap budget. The target is ordinary outer-gradient stationarity, with $\kappa$ above a universal constant and $\eps^2\le c\ell\Delta$. Fixing $\Delta$ gives the NC-P{\L} first-order row.

\paragraph{Ratio obstructions and our matching complexity.}
\citet[Theorem~4.1]{li2022local} construct an NC-SC quadratic on which the joint iterates of simultaneous GDA fail to converge locally for every positive constant step when $\rho\le\kappa$. This implies the displayed local obstruction for $\rho=o(\kappa)$; it is not an averaged outer-gradient complexity bound.

For our update family, \cref{thm:common-lower,thm:high-ratio,thm:low-ratio} and the matching upper bounds give, respectively,
\begin{align}
 N_{\eps,\mathrm{Sim}}^\star,\ N_{\eps,\mathrm{Alt}}^\star
 &=\Theta\left(\frac{\kappa^2\ell\Delta}{\eps^2}
       +\frac{\kappa^4\ell\Delta\sigma^2}{\eps^4}\right),\notag\\
 N_{\eps,\mathrm{Sim}}(\rho),\ N_{\eps,\mathrm{Alt}}(\rho)
 &\ge c\left(\frac{\rho\ell\Delta}{\eps^2}
       +\frac{\kappa^4\ell\Delta\sigma^2}{\eps^4}\right)
       \quad\text{if }\rho\ge c_{\rm high}\kappa^2.
 \label{eq:table-own-complexities}
\end{align}
These complexity statements use $\eps^2\le c\ell\Delta$, with $\kappa$ above the respective theorem thresholds. Fixing $\Delta$ gives the optimized and high-ratio entries. For $1\le\rho\le c_{\rm low}\kappa^2$, \cref{thm:low-ratio} instead gives $\min_{t<T}\norm{\nabla\Phi(x_t)}^2\ge c\ell\Delta$ on a hard deterministic instance for every prescribed horizon and nonincreasing schedule. Thus the nonconvergent entry $\infty$ records a nonvanishing stationarity error, rather than an iteration complexity; the hard instance may depend on that horizon and schedule.